\newtheorem{theorem}{Theorem}
\newtheorem{proposition}{Proposition}
\newtheorem{corollary}{Corollary}
\newtheorem{lemma}[theorem]{Lemma}
\theoremstyle{definition}
\newtheorem{definition}{Definition}
\theoremstyle{remark}
\newcommand{\naturals}{\mathbb{N}}
\newcommand{\dd}{\mathrm{d}}
\newcommand{\Pcal}{\mathcal{P}}
\newcommand{\Dcal}{\mathcal{D}}
\newcommand{\Acal}{\mathcal{A}}
\newcommand{\Tcal}{\mathcal{T}}
\newcommand{\Lcal}{\mathcal{L}}
\newcommand{\Scal}{\mathcal{S}}
\newcommand{\Zcal}{\mathcal{Z}}
\newcommand{\Xcal}{\mathcal{X}}
\newcommand{\Ycal}{\mathcal{Y}}
\newcommand{\Bcal}{\mathcal{B}}
\newcommand{\Ccal}{\mathcal{C}}
\newcommand{\abs}[1]{\left\lvert#1\right\rvert}
\DeclareMathOperator*{\argmin}{arg\,min}
\newcommand{\Real}{\mathbb{R}}
\newcommand{\roundbrack}[1]{\left( #1 \right)}
\newcommand{\curlybrack}[1]{\left\lbrace #1 \right\rbrace}
\newcommand{\squarebrack}[1]{\left\lbrack #1 \right\rbrack}
\newcommand{\E}{\mathbb{E}}
\newcommand{\Prob}{P}
\newcommand{\Exp}[2]{\mathbb{E}_{#1}\left\lbrack#2\right\rbrack}
\newcommand{\polylog}[1]{\text{polylog}\roundbrack{#1}}
\newcommand{\interval}{\mathcal{I}}
\newcommand{\ghost}{m}
\newcommand{\npo}{{n+1}}
\newcommand{\omaci}{$(1 - \alpha)$-CI~}
\newcommand{\omaps}{$(1 - \alpha)$-PS~}
\newcommand{\labelset}{\Lcal}
\newcommand{\bw}{{(w)}}
\newcommand{\nbin}{N}
\newcommand{\cg}[1]{#1}
\newcommand{\ap}[1]{#1}
\tikzset{
	>=stealth',
	punkt/.style={
		rectangle,
		rounded corners,
		draw=black, very thick,
		text width=8em,
		minimum height=2em,
		text centered},
	pil/.style={
		-latex,
		line width=1pt,
		shorten <=2pt,
		shorten >=2pt
	},
	degil/.style={
		decoration={markings,
			mark= at position 0.5 with {
				\node[transform shape] (tempnode) {$\backslash$};
			}
		},
		postaction={decorate}
	}
}
\title{
	Distribution-free binary classification:\\
	prediction sets, confidence intervals and calibration
}
\author{
	Chirag Gupta\thanks{equal contribution} $^{\1}$, Aleksandr Podkopaev\footnotemark[1] $^{\1,2}$, Aaditya Ramdas$^{1,2}$  \\ 
	Machine Learning Department$^1$\\
	Department of Statistics and Data Science$^2$\\
	Carnegie Mellon University\\
	\texttt{\{chiragg,podkopaev,aramdas\}@cmu.edu}}
\title{
	Distribution-free binary classification:\\
	prediction sets, confidence intervals and calibration \\
}
\author{
	Chirag Gupta\thanks{equal contribution; paper appeared as a spotlight at NeurIPS 2020.} $^{1}$, Aleksandr Podkopaev\footnotemark[1] $^{1,2}$, Aaditya Ramdas$^{1,2}$ \vspace{0.2in} \\ 
	\texttt{\{chiragg,podkopaev,aramdas\}@cmu.edu}\vspace{0.2in}\\
	Machine Learning Department$^1$\\
	Department of Statistics and Data Science$^2$\\
	Carnegie Mellon University}
\date{\today}
\begin{document}
	\maketitle 
	\begin{abstract}
		We study three notions of uncertainty quantification---calibration, confidence intervals and prediction sets---for binary classification in the distribution-free setting, that is without making any distributional assumptions on the data.
		With a focus towards calibration, we establish a `tripod' of theorems that connect these three notions for score-based classifiers. A direct implication is that distribution-free calibration is only possible, even asymptotically, using a scoring function whose level sets partition the feature space into at most countably many sets. Parametric calibration schemes such as variants of Platt scaling do not satisfy this requirement, while nonparametric schemes based on binning do. To close the loop, we derive distribution-free confidence intervals for binned probabilities for both fixed-width and uniform-mass binning. As a consequence of our `tripod' theorems, these confidence intervals for binned probabilities lead to distribution-free calibration. We also derive extensions to settings with streaming data and covariate shift.
	\end{abstract}
 	\newpage
 	\tableofcontents
	\newpage
 	
	\section{Introduction}\label{sec:intro}
	Let $\Xcal$ and $\smash{\Ycal=\{0,1\}}$ denote the feature and label spaces for binary classification. Consider a predictor $\smash{f : \Xcal \to \Zcal}$ that produces a prediction in some space $\Zcal$. If $\Zcal = \{0, 1\}$, $f$ corresponds to a point prediction for the class label, but often  class predictions are based on a `scoring function'. Examples are, $\Zcal = \Real$ for SVMs, and $\Zcal = [0, 1]$ for logistic regression, random forests with class probabilities, or deep models with a softmax top layer. In such cases, a higher value of $f(X)$ is often interpreted as higher belief that $Y = 1$. In particular, if $\Zcal = [0, 1]$, it is tempting to interpret $f(X)$ as a probability, and hope that
	\begin{equation}
	f(X) \approx \Prob(Y = 1 \mid X) = \Exp{}{Y \mid X}. \label{eq:finest-calibration}
	\end{equation} 
	However, such hope is unfounded, and in general \eqref{eq:finest-calibration} will be far from true without strong distributional assumptions, which may not hold in practice. Valid uncertainty estimates that are related to \eqref{eq:finest-calibration} can be provided, but ML models do not satisfy these out of the box. This paper discusses three notions of uncertainty quantification: calibration, prediction sets (PS) and confidence intervals (CI), defined next. A function $f: \Xcal \to [0,1]$ is said to be (perfectly) calibrated if
	\begin{equation}
	\Exp{}{Y \mid f(X) = a} = a\ \quad \text{ a.s. for all $a$ in the range of $f$. }
	\label{eq:calib_small}
	\end{equation}
	Define the set of all subsets of $\Ycal$, $\smash{\labelset\equiv\{\{0\},\{1\},\{0,1\},\emptyset\}}$, and fix $\alpha\in(0,1)$. A function $\smash{S:  \Xcal \to \labelset}$ is a $(1-\alpha)$-PS  if 
	\begin{equation}\label{eq:PS}
	\Prob( Y \in S(X)) \geq 1 - \alpha.
	\end{equation}In practice, PSs are typically studied for larger output sets, such as $\Ycal_{\text{regression}} = \mathbb{R}$ or $\Ycal_{\text{multiclass}} = \{1, 2, \ldots, L > 2\}$, but in this paper, we pursue fundamental results for binary classification. Finally, let $\interval$ denote the set of all subintervals of $[0,1]$. A function $\smash{C: \Xcal \to \interval}$ is a $(1-\alpha)$-CI if 
	\begin{equation}\label{eq:CI}
	\Prob( \Exp{}{Y \mid X} \in C(X)) \geq 1 - \alpha.
	\end{equation}
	All three notions are `natural' in their own sense, but also different at first sight. We show that they are in fact tightly connected (see Figure~\ref{fig:implication-triangle}), and focus on the implications of this result for calibration.
	Most of our results are in the distribution-free setting, where we are concerned with understanding what uncertainty quantification is possible without making distributional assumptions on the data. This paper is based on the statistical setup of post-hoc uncertainty quantification, described next. 
	
	\textbf{Post-hoc uncertainty quantification setup.} Let $P$ denote the data-generating distribution over $\Xcal \times \Ycal$, and let $(X, Y) \sim P$ denote a general data point. 
	Post-hoc uncertainty quantification is a common paradigm where the available labeled data is split into a \emph{training set} and a \emph{calibration set}. The training set is used to learn a predictor $f : \Xcal \to [0,1]$, and the calibration set is used to supplement $f$ with uncertainty estimates (CIs or PSs), or learn a new calibrated predictor on top of $f$. (In practice, the validation set is often used as the calibration set.) All results in this paper are conditional on the training set; thus the randomness is always over the calibration and test data. We denote the calibration set as $\Dcal_n = \{(X_i, Y_i)\}_{i \in [n]}$, where $n$ is the number of calibration points, and we use the shorthand $[n] := \{1,2, \ldots n\}$. A prototypical test point is denoted as $(X_\npo, Y_\npo)$. The calibration and test data is assumed to be drawn i.i.d. from $P$, denoted succinctly as $\{(X_i, Y_i)\}_{i\in [n+1]} \sim P^{n+1}$. The learner observes realized values of all random variables $(X_i, Y_i)$, except $Y_\npo$. All sets and functions are implicitly assumed to be measurable.
	
	Our work relies on some key ideas in the works of \citet[Section 5]{vovk2005algorithmic}, \citet{foygel2020distribution}, and \citet{zadrozny2001obtaining}. 
	Other related work is cited as needed, and further discussed in Section~\ref{sec:related_work}. All proofs appear ordered in the Appendix.

	\section{Calibration, confidence intervals and prediction sets}\label{sec:calib_ci_ps}
  	
  	A few additional concepts and definitions are needed in order to formally study calibration, CIs and PSs in the distribution-free post-hoc uncertainty quantification setup. These are defined next. 
  	
  	\subsection{Approximate and asymptotic calibration}
  	\label{subsec:approx-cal}
	Calibration captures the intuition of \eqref{eq:finest-calibration} but is a weaker requirement, and was first studied in the meteorological literature for assessing probabilistic rain forecasts \citep{brier1950verification, sanders1963subjective, murphy1967verification, dawid1982well}. \citet{murphy1967verification} described the ideal notion of calibration, called \emph{perfect calibration} \eqref{eq:calib_small},
	which has also been referred to as \emph{calibration in the small} \citep{vovk2012venn}, or sometimes simply as \emph{calibration} \citep{guo2017nn_calibration, vaicenavicius2019bayescalibration, dawid1982well}. The types of functions that can achieve perfect calibration can be succinctly captured as follows.
	\begin{proposition}
		\label{prop:calibration-characterization}
		A function $f : \Xcal \to [0,1]$  is perfectly calibrated if and only if there exists a  space $\Zcal$ and a  function $g: \Xcal \to \Zcal$, such that 
		\begin{equation}
		f(x) =  \Exp{}{Y \mid g(X)=g(x)} \quad \text{almost surely } P_X.
		\label{eq:calibration-characterization}
		\end{equation}
	\end{proposition}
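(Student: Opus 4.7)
The statement is an if-and-only-if, so the plan is to handle the two directions separately, and the forward direction is essentially immediate. For the ``only if'' direction, I would simply take $\Zcal = [0,1]$ and $g = f$. Then the perfect-calibration hypothesis \eqref{eq:calib_small} reads $\Exp{}{Y \mid f(X) = f(x)} = f(x)$ a.s.\ $P_X$, which is exactly \eqref{eq:calibration-characterization} with this choice of $g$. There is nothing to do beyond observing this.

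The ``if'' direction is the substantive part, and the natural tool is the tower property of conditional expectation. The hypothesis \eqref{eq:calibration-characterization} says that $f(X)$ is a version of $\Exp{}{Y \mid g(X)}$, so $f(X)$ is $\sigma(g(X))$-measurable up to $P$-null sets. Consequently the $\sigma$-algebra $\sigma(f(X))$ is contained in $\sigma(g(X))$ modulo null sets, and I can condition in stages:
\begin{equation*}
\Exp{}{Y \mid f(X)} = \Exp{}{\Exp{}{Y \mid g(X)} \mid f(X)} = \Exp{}{f(X) \mid f(X)} = f(X) \quad \text{a.s.}
\end{equation*}
Reading this off pointwise in the range of $f$ gives $\Exp{}{Y \mid f(X) = a} = a$ for almost every $a$, which is exactly \eqref{eq:calib_small}.

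I do not anticipate a real obstacle: no serious measurability issue arises because $f$ and $g(X)$ induce standard Borel structure (or because one can always replace $g$ by a Borel representative), and the ``a.s.'' qualifiers compose cleanly since we only invoke finitely many tower-property steps outside a single $P$-null event. The only bookkeeping care needed is to state up front that all conditional expectations are chosen as versions that respect the inclusion $\sigma(f(X)) \subseteq \sigma(g(X))$, after which the three-line display above is the entire argument.
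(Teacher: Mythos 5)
Your proof is correct and takes essentially the same route as the paper: the ``only if'' direction is handled by choosing $g = f$ so that perfect calibration \eqref{eq:calib_small} literally becomes \eqref{eq:calibration-characterization}, and the ``if'' direction uses the tower property via the inclusion $\sigma(f(X)) \subseteq \sigma(g(X))$. Your added remark about choosing versions that respect the $\sigma$-algebra inclusion is a slight sharpening of the bookkeeping, but the core argument is identical.
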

	In other words, $f$ is calibrated if and only if there exists another function $g$ such that $f$ is the expected value of $Y$ given the output of $g$. Vaicenavicius et al. \cite{vaicenavicius2019bayescalibration} stated and gave a short proof for the `only if' direction. While the other direction is also straightforward, together they lead to an appealingly simple and complete characterization.
	The proof of Proposition~\ref{prop:calibration-characterization} is in Appendix~\ref{appsec:calib_ci_ps}.
	
	It is helpful to consider two extreme cases of Proposition~\ref{prop:calibration-characterization}. First, setting $g$ to be the identity function yields that the Bayes classifier $\Exp{}{Y|X}$ is perfectly calibrated. Second, setting $g(\cdot)$ to any constant implies that $\Exp{}{Y}$ is also a perfect calibrator. 
	Naturally, we cannot hope to estimate the Bayes classifier without assumptions, but even the simplest calibrator $\Exp{}{Y}$ can only be approximated in finite samples. Since Proposition~\ref{prop:calibration-characterization} states that calibration is possible iff the RHS of \eqref{eq:calibration-characterization} is known exactly for some $g$,  perfect calibration is impossible in practice.
	Thus we resort to satisfying the requirement~\eqref{eq:calib_small} approximately, which is implicitly the goal of many empirical calibration techniques. 
	\begin{definition}[Approximate calibration]\label{def:app_calib}
		A predictor $f:\Xcal\to [0,1]$ is $(\varepsilon, \alpha)$-calibrated for some $\varepsilon, \alpha\in [0,1]$ 
		if with probability at least $1-\alpha$, 
		\begin{equation}
		\abs{\Exp{}{Y | f(X)}-f(X)}\leq \varepsilon.
		\label{eq:approximate-calibration}
		\end{equation}
	\end{definition}
	
	Clearly, every predictor $f$ is $(1,0)$-calibrated and $(0,1)$-calibrated. Further, if $f$ is $(\varepsilon,\alpha)$-calibrated, then it is also $(\varepsilon',\alpha)$-calibrated for $\varepsilon' > \varepsilon$ and $(\varepsilon,\alpha')$-calibrated for $\alpha' > \alpha$, and so we are typically only interested in the smallest ``pareto optimal boundary'' pairs of $(\varepsilon,\alpha)$ for which approximate calibration holds, or specifically for a fixed $\alpha$ like 0.1, what is the smallest $\varepsilon$ for which calibration holds.
	
		

	Suppose $f$ is not approximately calibrated for  small values of $\varepsilon$ and $\alpha$. As mentioned in the Introduction, we can `recalibrate' $f$ using a post-hoc calibration algorithm $\Acal$. Such an $\Acal$ takes $f$ (learnt on the training data) as input along with independent calibration data $\Dcal_n = \{(X_i, Y_i)\}_{i \in [n]}$, and outputs $ \Acal(\Dcal_n, f)=h_n : \Xcal \to [0,1]$, a predictor with presumably improved calibration properties compared to the original $f$. This setup was popularized by \citet{guo2017nn_calibration}; in their work, $f$ is a deep neural network and a proposed algorithm $\Acal$ is temperature scaling. In this paper, we study when $\Acal$ can be shown to satisfy \emph{distribution-free approximate calibration}:
	\begin{equation}
	    P^{n+1}(\abs{\Exp{}{Y | h_n(X_{n+1})}-h_n(X_{n+1})}\leq \varepsilon) \geq 1- \alpha \quad \text{for every $f, P$.}\label{eq:df-calibration}
	\end{equation}
	Above, $P^{n+1}$ denotes the product distribution of the i.i.d. calibration and test points: $\{(X_i, Y_i)\}_{i \in [\npo]} \sim P^{n+1}$. Note that $h_n = \Acal(\Dcal_n, f)$ is random over the calibration data $\Dcal_n$; we reinforce this by writing an $n$ in the subscript. 
	In the limit of infinite calibration data, a good calibration algorithm should guarantee approximate calibration with vanishing $\varepsilon$. This is formalized in the upcoming definition of asymptotic calibration. 
    We use $(\Xcal \times \Ycal)^* = \bigcup_{n\in \naturals} (\Xcal \times \Ycal)^n$ to denote the space of the calibration data for arbitrary $n$, and $[0,1]^{\Xcal}$ to denote a function from $\Xcal$ to $[0,1]$ (such as $f$). 
    
	
	
	\begin{definition}[Distribution-free asymptotic calibration] \label{def:asymp_calib}
    	A post-hoc calibration algorithm $\Acal : (\Xcal \times \Ycal)^* \times [0,1]^\Xcal \to [0,1]^\Xcal$ is said to be distribution-free asymptotically calibrated if there exists an $\alpha \in (0, 0.5)$ and a $[0,1]$-valued sequence  $\{\varepsilon_n\}_{n \in \mathbb{N}}$ with $\lim_{n \to \infty} \varepsilon_n  = 0$, such that for every $n$, $h_n = \Acal(\Dcal_n, f)$  satisfies condition \eqref{eq:df-calibration} with parameters $(\varepsilon_n, \alpha)$. 
	\end{definition}
	\noindent Note that condition \eqref{eq:df-calibration} requires approximate calibration not only over all $P$, but also over all $f$. Thus asymptotic calibration requires $\Acal$ to calibrate \emph{any fixed} $f$ over \emph{all distributions} $P$.
	
	
    \subsection{Prediction sets and confidence intervals with respect to $f$}
	\label{subsec:ps-ci-f}
	
	
	To motivate a new definition of PSs and CIs with respect to $f$, we review a recent result on distribution-free CIs by \citet{foygel2020distribution}, where the existence of `informative' distribution-free CIs was discussed. 
	
	PSs and CIs are only `informative' if the sets or intervals produced by them are small. To quantify this, we measure CIs using their width (denoted as $\abs{C(\cdot)}$), and PSs using their diameter (defined as the width of the convex hull of the PS). For example, in the case of binary classification, the diameter of a PS is $1$ if the prediction set is $\{0,1\}$, and $0$ otherwise (since $Y \in \{0,1\}$ always holds, the set $\{0,1\}$ is `uninformative'). A short CI such as $[0.39,0.41]$ is more informative than a wider one such as $[0.3, 0.5]$.  
	
	For a given distribution, one might expect the diameter of a $(1-\alpha)$-PS to be larger than the width of a $(1-\alpha)$-CI, since we want to cover the actual value of $Y$ and not its conditional expectation. As an example,  if $\Exp{}{Y|X=x} = 0.5$ for every $x$, then the shortest possible CI is $(0.5,0.5]$ whose diameter is $0$. However, a $(1-\alpha)$-PS has no choice but to output $\{0,1\}$ for at least $(1-2\alpha)$ fraction of the points (and a random guess for the other $2\alpha$ fraction), and thus must have expected diameter $\geq 1- 2\alpha$ even in the limit of infinite data. 
	
	Recently, \citet{foygel2020distribution} built on an earlier result of \citet{vovk2005algorithmic} to show that if an algorithm provides  \omaci for all product distributions $P^{n+1}$ (of the training data and test point), then it also provides a \omaps whenever the distribution of $P_X$ is nonatomic, that is, it does not contain any atoms or `point masses'. (If the CI function is $C : \Xcal \to \interval$, then the corresponding PS function would be $\smash{S(\cdot) = C(\cdot) \cap \{0, 1\}}$.) 
	Since this implication holds for all nonatomic distributions $P_X$, including the ones with $\Exp{}{Y|X} \equiv 0.5$ discussed above, it implies that distribution-free CIs must necessarily be wide. Specifically, their widths cannot shrink to $0$ as $n \to \infty$. This can be treated as an impossibility result for the existence of informative distribution-free CIs.
	
	One way to circumvent the above impossibility result is to consider CIs at a `coarser resolution'. We introduce the notion of a CI or PS `with respect to a function $f$' (w.r.t. $f$). 

		\begin{definition}[CI or PS w.r.t. $f$]\label{def:f-confidence-interval}
		Fix a predictor $f : \Xcal \to [0,1]$ and let $(X,Y)\sim P$. A function $\smash{C :[0,1] \to \interval}$ is said to be a $\smash{(1-\alpha)}$-CI  with respect to $f$ if 
		\begin{equation}
		\Prob( \Exp{}{Y \mid f(X)} \in C(f(X)) ) \geq 1 - \alpha.  \label{eq:f-confidence-interval}
		\end{equation}
		Analogously, a function $\smash{S :[0,1]\to\labelset}$ is a $(1-\alpha)$-PS  with respect to $f$ if
		\begin{equation}
		\Prob( Y \in S(f(X)) ) \geq 1 - \alpha.  \label{eq:f-prediction-set}
		\end{equation}
	\end{definition}
	
	These definitions can be extended in a natural way if $\text{Range}(f) \neq [0,1]$, as we do in the conference version of this paper \citep{gupta2020distribution}. If $f$ is injective (one-to-one), then \eqref{eq:f-confidence-interval} and \eqref{eq:f-prediction-set} reduce to \eqref{eq:CI} and \eqref{eq:PS}. The more interesting (and typical) case is when $f$ is not injective. In this case, the level sets of $f$ partition $\Xcal$ at a coarser `resolution': $\Xcal= \cup_{z \in [0,1]}\{x : f(x) = z\}$, and we can ask the (easier) question of producing a single CI or PS with respect to every $z \in [0,1]$, instead of every $x \in \Xcal$. 
	
	Naturally, for \eqref{eq:f-confidence-interval} or \eqref{eq:f-prediction-set} to hold, the functions $C$ and $S$ must depend on $P$. Similar to the post-hoc calibration setting, we ask if $C$ or $S$ can be \emph{learnt} using independent calibration data $\Dcal_n$ drawn from $P$. 
    Let $\Ccal$ denote an algorithm that produces a CI function using $f$ and $\Dcal_n$, $C_n=\Ccal(\Dcal_n, f) : [0,1] \to \interval$, where the notation $C_n$ reinforces the dependence of the CI function on $\Dcal_n$. Similarly, let $\Scal$ denote an algorithm that produces a PS function, $S_n =\Scal(\Dcal_n, f) : [0,1] \to \labelset$. 
	Akin to distribution-free approximate calibration \eqref{eq:df-calibration}, we have the following definitions for distribution-free CIs and PSs. $C_n$ is said to be a distribution-free CI w.r.t.\ a fixed $f$ if 
	\begin{equation}
	    P^{n+1}( \Exp{}{Y_{n+1} \mid f(X_{n+1})} \in C_n(f(X_{n+1})) ) \geq 1 - \alpha \quad  \text{for every $P$,} \label{eq:df-ci}
	\end{equation}
	and $S_n$ is said to be a distribution-free PS w.r.t. a fixed $f$ if 
	\begin{equation}
	    P^{n+1}( Y_{n+1} \in S_n(f(X_{n+1})) ) \geq 1 - \alpha \quad \text{for every $P$.}\label{eq:df-ps}
	\end{equation}
	
	Table~\ref{tab:my_label} summarizes the notation introduced so far. In the rest of the paper, whenever we refer to objects with an `$n$' in the subscript such as $h_n, C_n, S_n$, they should be understood as the outputs of some algorithms $\Acal, \Ccal, \Scal$ when supplied with input $\Dcal_n$ and $f$.
	
	\cg{
	\subsection{When is distribution-free post-hoc uncertainty quantification possible?}
	
	Are distribution-free guarantees such as \eqref{eq:df-calibration}, \eqref{eq:df-ci}, and \eqref{eq:df-ps} too restrictive, or can they be achieved? We show that the answer for calibration and CIs (roughly) depends on how `large' the range of $f$ is. The result of \citet{foygel2020distribution} implies that if $f$ is injective---that is $f$ maps unique elements to unique elements---then informative distribution-free CIs are impossible. On the other hand, if $f$ maps all of $\Xcal$ to a single element,  a short interval around the empirical mean of the $Y_i$'s achieves 
	\eqref{eq:df-ci} since $\Exp{}{Y \mid f(X)} = \Exp{}{Y}$. In this work, we characterize the transition point between these two behaviors.
	
	In Section~\ref{sec:equivalence}, we extend the above impossibility result to all functions $f$ whose range contains any sub-interval of $[0,1]$, a condition satisfied by all parametric machine learning models. On the other hand, in Section~\ref{sec:dfc_guarantees} we propose algorithms that achieve distribution-free CIs for $f$ with finite range. We also show a close relationship between approximate calibration and CIs w.r.t.\ $f$. Based on this relationship, the results for distribution-free CIs extend to distribution-free calibration, and vice-versa. Specifically, no parametric (post-hoc) calibration algorithm, such as Platt scaling \citep{Platt99probabilisticoutputs} or temperature scaling \citep{guo2017nn_calibration}, can be distribution-free calibrated. On the other hand, distribution-free calibration guarantees can be shown for the discrete binning method of histogram binning \citep{zadrozny2001obtaining}.
	
	In contrast to CIs and calibration, it is well known that meaningful and informative distribution-free PSs can be produced for any $f$, using a technique known as split conformal prediction \citep{papadopoulos2002inductive}. The broader literature on (non-split) conformal prediction also deals with techniques that produce distribution-free PSs without fixing an $f$ learnt on a separate split of the data \citep{vovk2005algorithmic, gupta2019nested}. We do not discuss algorithmic results for distribution-free PSs in this paper and refer the reader to one of the aforementioned papers on conformal prediction.}

		\begin{table}[]
	
	    \centering
	    \begin{tabular}{|c|c|}
           \hline 
           Calibration data & $\Dcal_n = \{(X_i, Y_i)\}_{i \in [n]}$ \\
          \hline 
           Test point & $(X_{n+1}, Y_{n+1})$ \\
         \hline 
           General data point & $(X, Y)$ \\
            \hhline{|=|=|}
	       Probability over i.i.d. calibration and test data & $P, P^{n+1}$ \\ 
	       \hhline{|=|=|}
	       Predictor learnt on (a split of the) training data  & $f : \Xcal \to [0,1]$ \\ 
	       \hline
	       General functions with unspecified sources of randomness  & $f,  C, S$ \\ 
	       \hline
	       Random functions of the calibration data $\mathcal{D}_n$ & $h_n, C_n, S_n$\\
    	   \hline
	    \end{tabular}
	    \caption{Notation used in this paper to study post-hoc uncertainty quantification. }
	    \label{tab:my_label}
	\end{table}
	
	\section{Relating the notions of uncertainty quantification} 
	\label{sec:equivalence}
	The relationships between the notions of uncertainty quantification are summarized in Figure~\ref{fig:implication-triangle}. In this figure, and in the rest of the section, we denote the distribution of the random variable $Z = f(X)$ as $P_{f(X)}$. 
	In Section~\ref{subsec:calib_and_ci}, we show that if an algorithm provides a CI w.r.t. $f$, it can be used to provide approximate calibration and vice-versa (Theorem~\ref{thm:calib_ci_equiv}). 
	In Section~\ref{subsec:PS-CI-equivalence}, we show that if an algorithm constructs a distribution-free CI w.r.t. $f$, then the constructed CIs must also be PSs for a large class of distributions $P$ for which $P_{f(X)}$ is nonatomic (Theorem~\ref{thm:PS-CI-equivalence}). Since we expect the width of CIs to be shorter than the diameter of PSs, this can be interpreted as an impossibility result for informative distribution-free CIs (Corollary~\ref{cor:CI-PS-diameter}). 
	Merging these two results, in Section~\ref{subsec:partition}, we show 
	that meaningful distribution-free calibration is not possible for certain scoring functions and post-hoc calibration algorithms (Theorem~\ref{thm:f-must-be-coarse}). 
	
	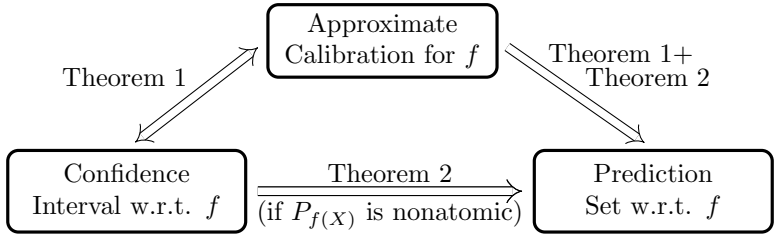
\begin{figure}[t]
		\centering
		\begin{tikzpicture}[node distance=1cm, auto]
		\node[punkt] (calib) at (-0.1,0) {Approximate Calibration for $f$};
		\node[punkt, minimum width=3cm, inner sep=5pt] (cs) at (-3.5,-2) {Confidence Interval w.r.t. $f$};
		\node[punkt, inner sep=5pt] (ps) at (3.5,-2) {Prediction Set w.r.t. $f$};
		\draw[-implies,double distance = 0.1cm, shorten >=4pt, shorten <=4pt] (cs.east) to node [above] {Theorem~\ref{thm:PS-CI-equivalence}} (ps.west);
		\draw[-implies,double distance = 0.1cm, shorten >=4pt, shorten <=4pt] (cs.east) to node [below] {(if $P_{f(X)}$ is nonatomic)} (ps.west);
		\draw[-implies,double distance = 0.1cm, shorten >=4pt, shorten <=4pt] (calib.east) to node [above right] {\Large $\substack{\hspace{-0.6cm}\text{Theorem~\ref{thm:calib_ci_equiv}+ }\\ \text{Theorem~\ref{thm:PS-CI-equivalence}}}$} (ps.north);
		\draw[implies-implies,double distance = 0.1cm, shorten >=4pt, shorten <=4pt] (cs.north) to node [above left] {Theorem~\ref{thm:calib_ci_equiv}} (calib.west);
		\end{tikzpicture}
		\caption{Relationship between notions of distribution-free uncertainty quantification. }
		\label{fig:implication-triangle}
	\end{figure}
	
	
	\subsection{Relating calibration and confidence intervals}\label{subsec:calib_and_ci}
	Suppose we are given a predictor $f : \Xcal \to [0,1]$ that is $(\varepsilon,\alpha)$-calibrated. Then one can construct a function $C$ that is a $(1-\alpha)$-CI: for $x \in \Xcal$,
	\begin{equation}\label{eq:calib_to_ci}\small
	\underbrace{\abs{\Exp{}{Y \mid f(x)} - f(x)} \leq 
	\varepsilon}_{\text{calibration}} 
	\implies  \underbrace{\Exp{}{Y \mid f(x)} \in C(f(x))}_{\text{CI w.r.t. $f$}}:=
	[f(x)-\varepsilon,f(x)+\varepsilon].
	\end{equation}
	On the other hand, given $C : [0,1] \to \interval$ that is a $(1-\alpha)$-CI w.r.t. $f$, define for $z\in[0,1]$, the left-endpoint, right-endpoint, and midpoint functions respectively:
	\begin{equation}
	\small
	u_C(z) := \sup\curlybrack{t:t\in C(z)}, \ l_C(z) := \inf\curlybrack{t:t\in C(z)}, \ m_C(z) := 
	(u_C(z)+l_C(z))/2.
	\end{equation}
	Consider the midpoint $m_C(f(x))$ as a `corrected' prediction for $x\in\Xcal$:
	\begin{equation}\label{eq:calib_predictor}
	\widetilde{f}(x) := m_C(f(x)),\ x\in\Xcal,
	\end{equation}
	and let $\smash{\varepsilon=\sup_{z\in \text{Range}(f)}\curlybrack{|C(z)|/2}}$ be  the largest interval radius.
	Then $\widetilde{f}$ is $(\varepsilon,\alpha)$-calibrated. These claims are formalized next. 
	\begin{theorem}\label{thm:calib_ci_equiv}
		Fix any $\alpha\in(0,1)$. Let $\smash{f:\Xcal\to [0,1]}$ be a predictor that is $(\varepsilon,\alpha)$-calibrated for some $\varepsilon \in (0,1)$. Then the function $C$ in \eqref{eq:calib_to_ci} is a $(1-\alpha)$-CI with respect to $f$. 
		
		Conversely, fix a scoring function $\smash{f: \Xcal \to [0,1]}$. If $C$ is a $\smash{(1-\alpha)}$-CI with respect to $f$, then the predictor $\widetilde{f}$ in~\eqref{eq:calib_predictor} is $(\varepsilon,\alpha)$-calibrated for $\smash{\varepsilon=\sup_{z\in [0,1]}\curlybrack{|C(z)|/2}}$.
	\end{theorem}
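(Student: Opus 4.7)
I would prove the two implications of the ``calibration $\Leftrightarrow$ CI w.r.t.\ $f$'' equivalence separately; both reduce to a simple rewriting of the defining events plus, for the converse, one conditional-expectation identity.

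For the forward direction, I would just substitute the explicit choice $C(f(x)) = [f(x) - \varepsilon, f(x) + \varepsilon]$ into the CI definition. The event $\{\mathbb{E}[Y\mid f(X)] \in C(f(X))\}$ is literally identical to $\{|\mathbb{E}[Y\mid f(X)] - f(X)| \leq \varepsilon\}$, and the latter has probability at least $1 - \alpha$ by the $(\varepsilon, \alpha)$-calibration assumption on $f$. This is exactly the CI coverage condition, so this direction is done.

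For the converse, I would first convert CI coverage into a pointwise comparison between $\mathbb{E}[Y\mid f(X)]$ and $\widetilde f(X)$. On the event $\{\mathbb{E}[Y\mid f(X)] \in C(f(X))\}$, any point of $C(f(X))$ sits within $|C(f(X))|/2 \leq \varepsilon$ of the midpoint $m_C(f(X)) = \widetilde f(X)$; hence $|\mathbb{E}[Y\mid f(X)] - \widetilde f(X)| \leq \varepsilon$ with probability at least $1 - \alpha$. Next, since $\widetilde f(X)$ is a function of $f(X)$, the tower property gives
\[
\mathbb{E}[Y \mid \widetilde f(X)] - \widetilde f(X) \;=\; \mathbb{E}\bigl[\mathbb{E}[Y \mid f(X)] - \widetilde f(X) \bigm| \widetilde f(X)\bigr],
\]
and a conditional-Jensen step transfers the pointwise $\varepsilon$ bound into a bound of $\varepsilon$ on $|\mathbb{E}[Y \mid \widetilde f(X)] - \widetilde f(X)|$, yielding $(\varepsilon, \alpha)$-calibration of $\widetilde f$.

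The main delicate point is the measurability mismatch in the last step: the calibration event for $\widetilde f$ lives in $\sigma(\widetilde f(X))$, which may be strictly coarser than $\sigma(f(X))$ when $m_C$ collapses different $z$'s into the same midpoint. I expect the cleanest way to handle this is to work pointwise in $\widetilde f(X) = \tilde z$ using the uniform envelope $\varepsilon = \sup_z |C(z)|/2$, so that conditional averaging over the fiber $\{\widetilde f(X) = \tilde z\}$ cannot break the bound once the CI event has trimmed away the bad $f(X)$ values. Everything else is a direct chain of equalities and triangle-style inequalities.
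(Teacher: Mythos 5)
Your approach matches the paper's exactly: the forward direction is a one-line rewriting of events, and the converse goes via the tower rule and conditional Jensen, with the crux being the measurability mismatch between $\sigma(\widetilde{f}(X))$ and $\sigma(f(X))$ that you correctly flag.

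The gap is that your proposed resolution of that crux (``conditional averaging over the fiber cannot break the bound once the CI event has trimmed away the bad $f(X)$ values'') does not actually hold. The CI event $A := \{\E[Y \mid f(X)] \in C(f(X))\}$ is $\sigma(f(X))$-measurable but in general not $\sigma(\widetilde{f}(X))$-measurable, so a single fiber of $\widetilde{f}(X)$ can contain both covered and uncovered values of $f(X)$, and $\E[\,|\E[Y\mid f(X)] - \widetilde{f}(X)|\mid\widetilde{f}(X)]$ averages over the \emph{whole} fiber, the uncovered part included. That part can carry conditional mass up to $\alpha$ with deviation up to $1$, pulling the conditional mean past $\varepsilon$. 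Concretely: let $\Xcal=\{x_1,x_2\}$, $\Prob(X=x_1)=0.85$, $\E[Y\mid X=x_1]=0.5$, $\E[Y\mid X=x_2]=0$, $f$ injective, $C(f(x_1))=[0.45,0.55]$, $C(f(x_2))=[0.49,0.51]$; then $\varepsilon=0.05$, coverage is $0.85$, but $m_C$ sends both scores to $0.5$, so $\widetilde{f}\equiv 0.5$ and $\E[Y\mid\widetilde{f}(X)]=0.425$, a gap of $0.075>\varepsilon$ with probability one. You should be aware that the paper's own write-up shares this soft spot: the step ``by monotonicity of conditional expectation, conditioned on $A$'' implicitly assumes the integrand bound holds almost surely, whereas it only holds on the sub-event $A$, which is not in the conditioning $\sigma$-algebra. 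The converse is correct whenever $m_C$ is injective on the range of $f$ (so that $\sigma(\widetilde{f}(X))=\sigma(f(X))$ and no cross-$f$ averaging occurs), or whenever coverage is simultaneous across the level sets of $f$ rather than merely marginal --- the latter is exactly what Theorem~\ref{thm:emp_bernstein} supplies and what Corollary~\ref{cor:calibration-fixed-partition} uses, so the paper's downstream conclusions survive, but the abstract converse needs one of these hypotheses made explicit, and so does your sketch.
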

	
	\ifx false
	The proof is in Appendix~\ref{appsec:equivalence}. An implication of Theorem~\ref{thm:calib_ci_equiv} is that having a sequence of predictors that is asymptotically calibrated yields a sequence of confidence intervals with vanishing width as $\smash{n\rightarrow \infty}$. 
	This is formalized in the following corollary, also proved in Appendix~\ref{appsec:equivalence}. 
	
	\begin{corollary}\label{cor:asymp_calib_CI_diam}
		Fix any $\alpha\in(0,1)$. If a sequence of predictors $\{f_n\}_{n\in\naturals}$ is asymptotically calibrated at level $\alpha$, then construction~\eqref{eq:calib_to_ci} yields a sequence of functions $\{C_n\}_{n \in \mathbb{N}}$ such that each $C_n$ is a $(1-\alpha)$-CI with respect to $f_n$ and 
		$  \lim_{n \to \infty}\sup_{z \in [0,1]}\abs{C_n(z)}=0$.    
	\end{corollary}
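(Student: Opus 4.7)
The plan is to derive the corollary as a direct consequence of the first (``calibration $\Rightarrow$ CI'') direction of Theorem~\ref{thm:calib_ci_equiv}, by applying it separately at each $n$ with a calibration radius that shrinks along the given sequence. The corollary's claim about the vanishing width $\sup_z |C_n(z)|$ is essentially built into the construction~\eqref{eq:calib_to_ci}, so the only real work is unpacking what ``asymptotically calibrated at level $\alpha$'' means for a sequence of predictors and then bookkeeping $\varepsilon_n$ through the construction.

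First I would translate the hypothesis into a usable form. By analogy with Definition~\ref{def:asymp_calib}, asymptotic calibration of $\{f_n\}_{n\in\naturals}$ at level $\alpha$ means there is a sequence $\{\varepsilon_n\}_{n \in \naturals}\subset [0,1]$ with $\varepsilon_n \to 0$ such that, for every $n$, $f_n$ is $(\varepsilon_n,\alpha)$-calibrated in the sense of Definition~\ref{def:app_calib}. Fixing any such witnessing sequence, I would then instantiate construction~\eqref{eq:calib_to_ci} once for each $n$, defining
\[
C_n(z) := [z - \varepsilon_n,\, z + \varepsilon_n] \cap [0,1], \quad z \in [0,1].
\]
Intersecting with $[0,1]$ keeps $C_n$ a subinterval of $[0,1]$ and does not break the implication in~\eqref{eq:calib_to_ci}, since $\Exp{}{Y \mid f_n(X)} \in [0,1]$ almost surely.

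Next, I would apply the first direction of Theorem~\ref{thm:calib_ci_equiv} with $(\varepsilon,\alpha)=(\varepsilon_n,\alpha)$ to $f_n$ and $C_n$. This yields directly that $C_n$ is a $(1-\alpha)$-CI with respect to $f_n$, establishing the first conclusion of the corollary. Finally, since $|C_n(z)| \leq 2\varepsilon_n$ uniformly in $z \in [0,1]$, taking suprema gives $\sup_{z \in [0,1]} |C_n(z)| \leq 2\varepsilon_n$, and the assumption $\varepsilon_n \to 0$ forces $\lim_{n\to\infty}\sup_{z \in [0,1]}|C_n(z)| = 0$, which is the second conclusion.

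There is no substantive obstacle here: the whole argument is a recycling of Theorem~\ref{thm:calib_ci_equiv} applied index by index, combined with the observation that construction~\eqref{eq:calib_to_ci} produces intervals of width exactly $2\varepsilon$ (and hence width $2\varepsilon_n \to 0$ when driven by the asymptotic-calibration sequence). The only care needed is to make sure the interpretation of ``asymptotically calibrated at level $\alpha$'' for a bare sequence $\{f_n\}$ matches Definition~\ref{def:asymp_calib}, and to note that extending $C_n$ to all of $[0,1]$ (rather than just $\mathrm{Range}(f_n)$) is harmless because the CI-w.r.t.-$f_n$ guarantee only depends on the values of $C_n$ on $\mathrm{Range}(f_n)$.
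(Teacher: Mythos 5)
Your proposal is correct and follows essentially the same route as the paper's own (commented-out) proof of this corollary: instantiate the calibration-to-CI direction of Theorem~\ref{thm:calib_ci_equiv} index by index using the construction~\eqref{eq:calib_to_ci}, and then read off the width bound $\sup_z |C_n(z)| \le 2\varepsilon_n \to 0$. One small remark worth flagging: the paper's version of this proof is a relic of an older formulation in which the calibration radius was a data-dependent function $\varepsilon_n(\cdot)$, and it concludes only the stochastic statement $|C_n(f_n(X_{n+1}))| = 2\varepsilon_n(f_n(X_{n+1})) = o_P(1)$ rather than the deterministic $\lim_n \sup_{z\in[0,1]}|C_n(z)| = 0$ asserted in the corollary. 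Your reading — a scalar sequence $\varepsilon_n \to 0$ with each $f_n$ being $(\varepsilon_n,\alpha)$-calibrated, which is what Definition~\ref{def:asymp_calib} in the final paper actually says — eliminates that mismatch and yields the stated deterministic sup bound directly, so your version is, if anything, cleaner and more faithful to the corollary as written.
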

	\fi 
	The proof of the theorem is in Appendix~\ref{appsec:equivalence}.  Note that Theorem~\ref{thm:calib_ci_equiv}
	is not restricted to the post-hoc uncertainty quantification setting and the calibration and CI functions need not satisfy distribution-free guarantees as defined in \eqref{eq:df-calibration} or \eqref{eq:df-ci}. In contrast, the relationship between CIs and PSs stated in the following subsection is specific to the distribution-free setting. 
	\subsection{Relating confidence intervals and prediction sets in the distribution-free setting}
	\label{subsec:PS-CI-equivalence}
	\cg{In this section, we relate CIs and PSs with respect to a fixed function $f : \Xcal \to [0,1]$. Consider the following set of distributions, whose motivation becomes clearly shortly: 
	\begin{equation}
	    \Pcal_f := \{\text{distributions $P$ over $\Xcal \times \Ycal$ } : P_{f(X)} \text{ is nonatomic}\}.
	    \label{eq:p-f-family}
	\end{equation}
	$P_{f(X)}$ being nonatomic means that the distribution of $f(X)$, when $(X,Y) \sim P$, contains no atoms or `point masses'. 
	Suppose $C_n$ satisfies \eqref{eq:df-ci}, that is, it provides a CI guarantee w.r.t. $f$ for all distributions $P$. We show that $C_n$ can be used to provide a modified PS guarantee which is not distribution-free but holds for all $P \in \Pcal_f$:
	\begin{equation}
	    P^{n+1}( Y_{n+1} \in S_n(f(X_{n+1}))) \geq 1 - \alpha \quad \text{for every $P \in \Pcal_f$.}
	    \label{eq:df-ps-2}
	\end{equation}
	The following result is proved in Appendix~\ref{appsec:equivalence}. 
	\begin{theorem}
		\label{thm:PS-CI-equivalence}
		Fix $f : \Xcal \to [0, 1]$ and $\smash{\alpha \in (0, 1)}$. If $C_n$ is a  distribution-free confidence interval with respect to $f$, as in \eqref{eq:df-ci}, then $S_n(\cdot) = C_n(\cdot) \cap \{0,1\}$ is a $(1-\alpha)$-prediction set with respect to $f$ for every $P \in \Pcal_f$, as in \eqref{eq:df-ps-2}.
	\end{theorem}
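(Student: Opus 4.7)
The plan is to reduce a general $P \in \Pcal_f$ to distributions in which $Y$ is a deterministic function of $f(X)$---where the CI-to-PS implication is immediate---and then recombine via a coupling that leans on the nonatomicity of $P_{f(X)}$ in an essential way.

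First, for any measurable $g : [0,1] \to \{0,1\}$, define $P^{(g)}$ on $\Xcal \times \Ycal$ by $X \sim P_X$ and $Y = g(f(X))$. Under $P^{(g)}$ one has $\Exp{}{Y \mid f(X) = z} = g(z) \in \{0,1\}$, so applying the distribution-free CI hypothesis \eqref{eq:df-ci} to $P^{(g)}$ gives
\begin{equation*}
(P^{(g)})^{n+1}\!\left(g(f(X_{n+1})) \in C_n(f(X_{n+1}))\right) \geq 1 - \alpha.
\end{equation*}
Since $g(f(X_{n+1})) \in \{0,1\}$, membership in $C_n(\cdot)$ is equivalent to membership in $S_n(\cdot) = C_n(\cdot) \cap \{0,1\}$, producing the same inequality with $S_n$ replacing $C_n$; call this inequality $(\star)$.

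Next, writing $\mu(z) := \Exp{}{Y \mid f(X) = z}$ under $P$, I construct a random function $G : [0,1] \to \{0,1\}$ whose values $G(z) \sim \text{Bernoulli}(\mu(z))$ are independent across $z$, drawn independently of $X_1, \ldots, X_{n+1} \sim P_X$ i.i.d., and set $Y_i := G(f(X_i))$. The nonatomicity of $P_{f(X)}$ forces $f(X_1), \ldots, f(X_{n+1})$ to be almost surely pairwise distinct, so conditional on $(X_i)_{i=1}^{n+1}$ the $Y_i$ are independent with $Y_i \sim \text{Bernoulli}(\mu(f(X_i)))$. Thus the joint distribution of $(X_i, Y_i)_{i=1}^{n+1}$ under this coupling matches that of $P^{n+1}$ at the level of the reduced marginals $(f(X_i), Y_i)_{i=1}^{n+1}$, which is all that feeds into the event $Y_{n+1} \in S_n(f(X_{n+1}))$ since the algorithm's output $C_n : [0,1] \to \interval$ is naturally a function of $\Dcal_n$ through $(f(X_i), Y_i)_{i=1}^n$ only. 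Conditional on $G$, the coupled data is distributed as $(P^{(G)})^{n+1}$, so $(\star)$ applies for each realization of $G$; averaging over $G$ then gives
\begin{equation*}
P^{n+1}\!\left(Y_{n+1} \in S_n(f(X_{n+1}))\right) = \Exp{G}{(P^{(G)})^{n+1}\!\left(G(f(X_{n+1})) \in S_n(f(X_{n+1}))\right)} \geq 1 - \alpha,
\end{equation*}
which is exactly \eqref{eq:df-ps-2}.

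The hard part will be the measure-theoretic bookkeeping around $G$: as a stochastic process indexed by the continuum $[0,1]$ with uncountably many independent Bernoulli coordinates, $G$ is not a random element of any Polish function space in the usual sense, and the pointwise-in-$G$ application of $(\star)$ together with the subsequent expectation must be justified. One route is to work on $\{0,1\}^{[0,1]}$ with the cylinder $\sigma$-algebra via a Kolmogorov extension and use a Fubini argument to swap $\Exp{G}{\cdot}$ against the probability in $(\star)$; a more concrete route is to draw $(X_i)_{i=1}^{n+1}$ first and then sample only the $n+1$ values $G(f(X_1)), \ldots, G(f(X_{n+1}))$ as independent Bernoullis with the prescribed parameters, extending $G$ arbitrarily elsewhere---this is legitimate precisely because $P \in \Pcal_f$ makes those conditioning points almost surely distinct, so nonatomicity enters the proof at exactly this step.
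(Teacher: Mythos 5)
Your high-level idea---exploit nonatomicity of $P_{f(X)}$ to reduce to distributions under which $Y$ equals $\Exp{}{Y\mid f(X)}$ exactly, where CI trivially implies PS---is the same idea the paper uses, but the specific coupling has two genuine gaps, and the first is not ``bookkeeping'' but an obstruction. Under the Kolmogorov-extension product measure on $\{0,1\}^{[0,1]}$ with the cylinder $\sigma$-algebra, the set of Borel-measurable sample paths $G$ is not even an event (the cylinder $\sigma$-algebra resolves only countably many coordinates, and the evaluation map $(z,G)\mapsto G(z)$ is not jointly measurable), so for the $G$ you actually sample, $x\mapsto G(f(x))$ is not measurable and $P^{(G)}$ is not a probability distribution on $\Xcal\times\Ycal$. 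The hypothesis \eqref{eq:df-ci} quantifies over distributions, so there is no inequality $(\star)$ to apply pointwise in $G$, and hence nothing to integrate. Your ``concrete'' alternative---draw $(X_i)_{i\le n+1}$ first, sample only $G(f(X_1)),\dots,G(f(X_{n+1}))$, extend $G$ arbitrarily---never produces an object $G$ to condition on, so the identity ``conditional on $G$, data $\sim(P^{(G)})^{n+1}$'' that the averaging step needs cannot be set up.

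Separately, your coupling gives $Y_i\mid X_i\sim\mathrm{Bernoulli}(\mu(f(X_i)))$, so the marginal joint law of $(X_i,Y_i)_{i\le n+1}$ is $P'^{n+1}$ where $P'_X=P_X$ and $P'(Y=1\mid X=x)=\mu(f(x))$; this differs from $P^{n+1}$ unless $\Exp{P}{Y\mid X}$ already factors through $f(X)$. You bridge this by asserting that $C_n$ depends on $\Dcal_n$ only through $(f(X_i),Y_i)_{i\le n}$, but that is not in the theorem's hypotheses: $\Ccal$ is handed the raw pairs $(X_i,Y_i)$ together with $f$ and is free to use the $X_i$'s beyond their $f$-values. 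Since $\Dcal_n\sim P^n$ and $\Dcal_n\sim P'^n$ then induce different laws on $S_n$, the inequality you derive for $P'$ does not transfer to the target statement about $P$. (Trying to fix this by coupling $Y_i=G(X_i)$ with $G:\Xcal\to\{0,1\}$ independent Bernoullis at rate $\Exp{P}{Y\mid X=x}$ would recover $P$, but then $\Exp{P^{(G)}}{Y\mid f(X)}$ is an average over level sets and is no longer $\{0,1\}$-valued, destroying the CI-to-PS step---so the two gaps are linked.)

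The paper realizes the same reduction via a ghost-sample device rather than a continuum-indexed process: draw $\Tcal$ of $m$ i.i.d. points from $P$, let $Q$ be the empirical distribution on $\Tcal$. Nonatomicity of $P_{f(X)}$ makes $f$ a.s. injective on $\mathrm{supp}(Q_X)$, so under $Q$ one has $Y=\Exp{Q}{Y\mid f(X)}$ deterministically; $Q$ is a bona fide distribution, so the CI hypothesis applies to it directly, and because $Q$ carries the full $(X_i,Y_i)$ no restriction on what $\Ccal$ inspects is needed. One then compares with-replacement sampling from $\Tcal$ ($Q^{n+1}$) to without-replacement sampling, whose marginal over $\Tcal$ is exactly $P^{n+1}$, and sends $m\to\infty$. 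In effect your argument takes $m\to\infty$ first, landing on the pathological continuum object; the paper's order of limits avoids it.
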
}
	
	Above we transformed the CI function $C_n$ to a PS function $S_n$ by performing an intersection with the $\{0,1\}$. 
	Based on the intuition discussed before Definition~\ref{def:f-confidence-interval}, Theorem~\ref{thm:PS-CI-equivalence} can be interpreted as an impossibility result for distribution-free valid CIs that are `informative' for all distributions. 
	\begin{corollary}
		\label{cor:CI-PS-diameter}
		Fix $f : \Xcal \to [0,1]$ and $\alpha \in (0, 0.5)$. If $C_n$ is a distribution-free confidence interval with respect to $f$ \eqref{eq:df-ci}, 
		and $\Pcal_f$ is non-empty, then there exists a distribution $P \in \Pcal_f$ such that 
		\begin{equation*}
		\E_{P^{n+1}}{|C_n(f(X_\npo))|} \geq 0.5 - \alpha. 
		\end{equation*}
	\end{corollary}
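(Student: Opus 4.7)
The plan is to apply Theorem~\ref{thm:PS-CI-equivalence} to a carefully chosen adversarial distribution in $\Pcal_f$, namely one for which $Y$ is independent Bernoulli$(1/2)$. Under such a distribution any valid PS w.r.t. $f$ must output $\{0,1\}$ most of the time, and because $C_n(\cdot)$ is a subinterval of $[0,1]$, this forces $C_n$ to equal $[0,1]$ most of the time.

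Concretely, since $\Pcal_f$ is non-empty, pick some $Q \in \Pcal_f$ and define $P$ by setting $P_X = Q_X$ and drawing $Y$ independently of $X$ with $\Prob(Y=1) = 1/2$. Then $P_{f(X)} = Q_{f(X)}$ is nonatomic, so $P \in \Pcal_f$. By Theorem~\ref{thm:PS-CI-equivalence}, the set-valued function $S_n(\cdot) := C_n(\cdot) \cap \{0,1\}$ is a $(1-\alpha)$-PS w.r.t. $f$ for this $P$, that is,
\begin{equation*}
P^{n+1}\bigl(Y_{n+1} \in S_n(f(X_{n+1}))\bigr) \geq 1 - \alpha.
\end{equation*}

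Next I analyze this coverage. Under $P$, $Y_{n+1}$ is Bernoulli$(1/2)$ independent of $(\Dcal_n, X_{n+1})$, so conditioning on the latter the coverage probability equals $0$, $1/2$, or $1$ according as $S_n(f(X_{n+1}))$ is $\emptyset$, a singleton, or $\{0,1\}$. Writing $p := P^{n+1}(S_n(f(X_{n+1})) = \{0,1\})$ and integrating out, the PS coverage is at most $\tfrac{1}{2}(1-p) + p = \tfrac{1}{2} + \tfrac{1}{2}p$. Combining with the lower bound $1 - \alpha$ gives $p \geq 1 - 2\alpha$. Since $C_n(f(X_{n+1}))$ is a subinterval of $[0,1]$, containing both $0$ and $1$ forces $C_n(f(X_{n+1})) = [0,1]$, so $|C_n(f(X_{n+1}))| = 1$ on the event $\{S_n(f(X_{n+1})) = \{0,1\}\}$. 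Hence
\begin{equation*}
\E_{P^{n+1}}\bigl|C_n(f(X_{n+1}))\bigr| \; \geq \; P^{n+1}\bigl(|C_n(f(X_{n+1}))| = 1\bigr) \; \geq \; 1 - 2\alpha \; \geq \; \tfrac{1}{2} - \alpha,
\end{equation*}
where the final inequality uses $\alpha \in (0, 1/2)$.

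There is no real technical obstacle here: the whole argument reduces to choosing the right $P$ and tracking the three-case coverage identity. The one point to be careful about is justifying $P \in \Pcal_f$ (which is immediate from $P_{f(X)} = Q_{f(X)}$) so that Theorem~\ref{thm:PS-CI-equivalence} is applicable, and observing the geometric fact that a subinterval of $[0,1]$ meeting both endpoints must be all of $[0,1]$, which is what translates the PS lower bound on $\{0,1\}$-coverage into a width-$1$ statement for $C_n$.
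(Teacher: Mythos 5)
Your proof is correct and takes a genuinely different (and slightly tighter) route than the paper's. Both proofs start identically: pick $Q\in\Pcal_f$, set $P_X=Q_X$ and $P(Y=1\mid X)=1/2$, observe $P\in\Pcal_f$, and invoke Theorem~\ref{thm:PS-CI-equivalence} to get the PS guarantee. The paper then \emph{also} uses the CI guarantee (which holds for $P$ by assumption), takes a union bound to get $\{Y_{n+1}, \Exp{}{Y_{n+1}\mid f(X_{n+1})}\}\subseteq C_n(f(X_{n+1}))$ with probability $\geq 1-2\alpha$, and since $\Exp{}{Y_{n+1}\mid f(X_{n+1})}=1/2$ and $Y_{n+1}\in\{0,1\}$, concludes $|C_n|\geq 1/2$ on that event, hence $\E|C_n|\geq 0.5(1-2\alpha)=0.5-\alpha$. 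You instead use \emph{only} the PS guarantee but extract more from it: conditioning on $(\Dcal_n,X_{n+1})$ and exploiting the independence of $Y_{n+1}\sim\text{Bern}(1/2)$, you bound the conditional coverage by $1/2$ unless $S_n=\{0,1\}$, which forces $P(S_n=\{0,1\})\geq 1-2\alpha$; since a subinterval of $[0,1]$ that contains both endpoints must equal $[0,1]$, this event gives $|C_n|=1$, so $\E|C_n|\geq 1-2\alpha$. Your bound $1-2\alpha$ dominates the paper's $0.5-\alpha$ on the whole range $\alpha\in(0,0.5)$, and you only invoke one of the two coverage statements, so your argument is both more economical and quantitatively sharper; the paper's union-bound argument has the modest advantage of not needing the conditional-coverage decomposition. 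One small point worth stating explicitly in a write-up is that $Y_{n+1}$ is independent of $(\Dcal_n,X_{n+1})$ under $P^{n+1}$ (it follows from the i.i.d.\ structure together with $P_{Y\mid X}$ being constant), since this independence is what makes the three-case conditional-coverage identity valid.
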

	
	Note that for every $P$, there exists a CI function with expected width equal to zero: $C_P(\cdot) = \{\Exp{P}{Y \mid f(X) = \cdot}\}$. A desirable property for $C_n$ is consistency: given enough samples from $P$, does $C_n$ recover $C_P$? Corollary~\ref{cor:CI-PS-diameter} shows that if $\Pcal_f$ is non-empty, then no distribution-free CI function can be `distribution-free consistent' for $C_P$ --- there exist $P \in \Pcal_f$ for which the average width of the CI is lower bound by a constant independent of $n$. 
	
	Thus we would like to know when $\Pcal_f$ is non-empty. First, note that if the range of $f$ is countable, then for any $P$, $P_{f(X)}$ contains atoms (due to the subadditivity of measure, any distribution over a countable set must contain atoms). Thus $\Pcal_f$ is empty and Corollary~\ref{cor:CI-PS-diameter} does not apply. On the other hand, Lemma~\ref{lemma:p_f_interval} in Appendix~\ref{appsec:characterizing-f-pf} shows that if the range of $f$ is $[0,1]$ or contains any sub-interval of $[0,1]$, then $\Pcal_f$ is non-empty (the proof relies on a technical probability theory result of \citet{ershov1975extension}). Thus Corollary~\ref{cor:CI-PS-diameter} applies to all standard parametric machine learning models, whose range is usually $[0,1]$ or $(0,1)$. 
	In the following subsection, we use Corollary~\ref{cor:CI-PS-diameter} to show an impossibility result for certain post-hoc calibration algorithms.

	\subsection{Impossibility result for distribution-free post-hoc calibration}
	\label{subsec:partition}
 	Proposition~\ref{prop:calibration-characterization} shows that a function $f$ is  calibrated if and only if it takes the form~\eqref{eq:calibration-characterization} for some function $g$. 
    Observe that 
    $g$ essentially provides a \emph{partition} of $\Xcal$ based on the level sets of $g$. Denote this partition as $\{\Xcal_z\}_{z \in \Zcal}$, where     
	$ \smash{\Xcal_z = \{x \in \Xcal : g(x) = z\}}$. 
	Then we may equivalently define $f$ in \eqref{eq:calibration-characterization} through a
    set of values $\smash{\{f_z = P(Y = 1 \mid X \in \Xcal_z)\}_{z \in \Zcal}}$, setting $f(\cdot) = f_{g(\cdot)}$. 
	In this sense, calibration can be viewed as a goal with two parts: (A) identify a `meaningful' partition of $\Xcal$ and (B) estimate the conditional probabilities for each partition. 
	
	\begin{corollary}[to Proposition~\ref{prop:calibration-characterization}]
		Any calibrated classifier $f$ is characterized by an index set $\Zcal$, \begin{enumerate}[label=(\Alph*)] \item a partition of $\Xcal$ into subsets $\{\Xcal_z\}_{z \in \Zcal}$, and \item corresponding conditional probabilities $\{f_z\}_{z \in \Zcal}$. \end{enumerate}
		\label{cor:calibration-characterization-partition}
	\end{corollary}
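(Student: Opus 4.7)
The plan is to read off the statement directly from Proposition~\ref{prop:calibration-characterization}, whose content is precisely that perfect calibration of $f$ is equivalent to the existence of a function $g : \Xcal \to \Zcal$ with $f(x) = \Exp{}{Y \mid g(X) = g(x)}$ almost surely $P_X$. Given such a $g$, the partition and the conditional probabilities asked for in the corollary arise canonically as the level sets of $g$ and the conditional means of $Y$ over those level sets.

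Concretely, first I would invoke Proposition~\ref{prop:calibration-characterization} to obtain $\Zcal$ and $g : \Xcal \to \Zcal$ witnessing the calibration of $f$. Second, I would define the partition by
\begin{equation*}
\Xcal_z := \{x \in \Xcal : g(x) = z\}, \qquad z \in \Zcal,
\end{equation*}
which is a bona fide partition of $\Xcal$ since every $x \in \Xcal$ lies in exactly one level set of $g$. Third, for each $z \in \Zcal$ with $P(X \in \Xcal_z) > 0$, I would set $f_z := P(Y = 1 \mid X \in \Xcal_z) = \Exp{}{Y \mid g(X) = z}$; for the (null) set of indices $z$ with $P(X \in \Xcal_z) = 0$, I would assign $f_z$ an arbitrary value in $[0,1]$, since those indices are reached by a set of $X$'s of $P_X$-measure zero and therefore do not affect any almost-sure statement. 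Finally, I would check that $f(x) = f_{g(x)}$ for $P_X$-almost every $x$: for any $x$ with $g(x) = z$ in the support of $g(X)$, the Proposition gives $f(x) = \Exp{}{Y \mid g(X) = g(x)} = f_z = f_{g(x)}$ a.s., which is exactly the required representation.

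There is essentially no hard step here; the only minor care required is the handling of indices $z \in \Zcal$ whose level set has $P_X$-measure zero, which I would dispatch by the arbitrary-assignment remark above, noting that all identities are understood in the almost-sure sense inherited from Proposition~\ref{prop:calibration-characterization}. Thus the corollary follows immediately by relabeling the objects furnished by Proposition~\ref{prop:calibration-characterization} as a partition $\{\Xcal_z\}_{z \in \Zcal}$ together with conditional probabilities $\{f_z\}_{z \in \Zcal}$.
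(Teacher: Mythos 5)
Your proof matches the paper's own (implicit) argument exactly: the corollary is presented as an immediate restatement of Proposition~\ref{prop:calibration-characterization}, with the partition taken to be the level sets of $g$ and $f_z = P(Y=1 \mid X \in \Xcal_z)$, precisely as you write. Your extra remark about handling indices $z$ with $P(X \in \Xcal_z) = 0$ is a sensible refinement but does not change the route.
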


    This interpretation motivates the underlying principle of post-hoc calibration.  
    Existing ML techniques often implicitly do (A). They produce $f$ that, while miscalibrated, may have some rough monotonicity with respect to the true probability: $f(x_1) \geq f(x_2) \iff \Prob(Y = 1 \mid X = x_1) \geq \Prob(Y = 1 \mid X = x_2)$ (see \citet[Figures 1 and 2]{zadrozny2002transforming} for examples when such a hypothesis roughly holds on real data). In other words, the partitioning of $\Xcal$ induced by the level sets of $f$, $\{\Xcal_z = \{x : f(x) = z\}\}_{z \in [0,1]}$, is often informative, but $\abs{z - \Prob(Y = 1 \mid X_\npo \in \Xcal_z)}$ may be large. Post-hoc calibration techniques leverage the solution of (A) provided by $f$, and focus on (B); they use calibration data $\Dcal_n$ to estimate $\Prob(Y = 1 \mid X \in \Xcal_z)$ for every $z \in \text{Range}(f)$.

Thus a post-hoc calibration method `recalibrates' $f$ by mapping its output  to a new value in $[0,1]$. Let $h_n=\Acal(\Dcal_n, f)$ be the output of a post-hoc calibration method $\Acal$ and let $m_n : [0,1] \to [0,1]$ be the implicit mapping function so that $h_n(x) = m_n(f(x))$. Consider three popular parametric algorithms for post-hoc calibration: Platt scaling~\citep{Platt99probabilisticoutputs}, temperature scaling~\citep{guo2017nn_calibration}, and beta calibration~\citep{kull2017beyond}. The mapping $m_n$ learnt by each of these methods is strictly monotonic, and hence, injective (one-to-one).\footnote{This assumes that the parameters satisfy natural constraints as discussed in the original papers: $a, b \geq 0$ for beta scaling with at least one of them nonzero, $ A < 0$ for Platt scaling and $T>0$ for temperature scaling.} Let us call these as `injective' post-hoc calibration algorithms. We now state the impossibility result for distribution-free calibration. 


	\begin{theorem} \label{thm:f-must-be-coarse}
        It is impossible for an injective post-hoc calibration algorithm to be distribution-free asymptotically calibrated.
	\end{theorem}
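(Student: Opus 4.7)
}
My plan is a proof by contradiction that chains together the two earlier tripod results: first convert the calibration guarantee of an injective algorithm into a narrow CI with respect to a well-chosen $f$, then invoke Corollary~\ref{cor:CI-PS-diameter} to show the width cannot vanish. Concretely, suppose an injective post-hoc calibration algorithm $\Acal$ were distribution-free asymptotically calibrated, so there exist $\alpha \in (0, 0.5)$ and $\varepsilon_n \to 0$ such that for every $f$ and every $P$, the predictor $h_n = \Acal(\Dcal_n,f) = m_n \circ f$ satisfies \eqref{eq:df-calibration} with parameters $(\varepsilon_n,\alpha)$, where the implicit mapping $m_n:[0,1]\to[0,1]$ is (almost surely) injective.

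For the bad instance I would fix $f : \Xcal \to [0,1]$ whose range is $[0,1]$ (e.g.\ in a setting where $\Xcal = [0,1]$ I take $f$ to be the identity; any measurable $f$ with range containing an interval works by Lemma~\ref{lemma:p_f_interval}). Because $m_n$ is injective, the level sets of $h_n = m_n \circ f$ coincide exactly with the level sets of $f$, so $\sigma(h_n(X)) = \sigma(f(X))$ and hence $\Exp{}{Y \mid h_n(X_\npo)} = \Exp{}{Y \mid f(X_\npo)}$ almost surely under any $P$. Substituting this equality into the approximate calibration bound \eqref{eq:approximate-calibration} for $h_n$ gives, with $P^{n+1}$-probability at least $1-\alpha$,
\begin{equation*}
\bigl|\,\Exp{}{Y \mid f(X_\npo)} - h_n(X_\npo)\,\bigr| \leq \varepsilon_n.
\end{equation*}

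Defining the (random) interval function $C_n(z) := [\,m_n(z) - \varepsilon_n,\; m_n(z) + \varepsilon_n\,]$, the above display says exactly that $\Exp{}{Y \mid f(X_\npo)} \in C_n(f(X_\npo))$ with probability at least $1-\alpha$; that is, $C_n$ is a distribution-free $(1-\alpha)$-CI with respect to the fixed $f$ in the sense of \eqref{eq:df-ci}, for every $P$. Its width is the deterministic constant $2\varepsilon_n$, independent of the input and of $\Dcal_n$.

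To close the contradiction I would invoke Corollary~\ref{cor:CI-PS-diameter} with this $f$: since $f$'s range is $[0,1]$, Lemma~\ref{lemma:p_f_interval} guarantees that $\Pcal_f$ is non-empty, so there exists $P \in \Pcal_f$ with $\E_{P^{n+1}}|C_n(f(X_\npo))| \geq 0.5 - \alpha$. Combining with the width computation yields $2\varepsilon_n \geq 0.5 - \alpha$, i.e.\ $\varepsilon_n \geq (0.5-\alpha)/2 > 0$ for every $n$, contradicting $\varepsilon_n \to 0$. The single step I expect to require the most care is the measurability claim $\sigma(h_n(X)) = \sigma(f(X))$: strictly monotonic $m_n$ on $[0,1]$ is a Borel bijection onto its image with Borel-measurable inverse, so this is routine, but it is the point where the ``injective'' hypothesis on $\Acal$ (as opposed to, say, merely continuous) is genuinely used.
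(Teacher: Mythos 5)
Your proposal is correct and matches the paper's proof essentially step for step: both convert the approximate-calibration guarantee for the injective $h_n = m_n \circ f$ into a distribution-free $(1-\alpha)$-CI $C_n(z) = [m_n(z)-\varepsilon_n, m_n(z)+\varepsilon_n]$ w.r.t.\ $f$, use injectivity of $m_n$ to get $\Exp{}{Y \mid h_n(X)} = \Exp{}{Y \mid f(X)}$, invoke Lemma~\ref{lemma:p_f_interval} to guarantee $\Pcal_f \neq \emptyset$, and then contradict the $0.5 - \alpha$ lower bound from Corollary~\ref{cor:CI-PS-diameter} with the vanishing width $2\varepsilon_n$. Your remark on the $\sigma$-algebra equality is a slightly more explicit treatment of the step the paper states directly, but there is no substantive difference in the argument.
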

	\ifx false 
	In words, 
	the cardinality of the partition induced by $f_n$ must be at most countable for large enough $n$. The following phrasing is convenient: $f$ is said to lead to a \emph{fine partition} of $\Xcal$ if $\smash{|\Xcal^{(f)}| > \aleph_0}$. Then, for the purposes of distribution-free asymptotic calibration, Theorem~\ref{thm:f-must-be-coarse} necessitates us to consider $f$ that do not lead to fine partitions. Popular scoring functions such as logistic regression, deep neural-nets with softmax output and SVMs lead to continuous $f$ that induce fine partitions of $\Xcal$ and thus cannot be asymptotically calibrated without distributional assumptions. 
	\fi

	The proof of Theorem~\ref{thm:f-must-be-coarse} is in Appendix~\ref{appsec:equivalence}, but we briefly sketch its intuition below. Since the mapping $m_n$ produced by $\Acal$ is injective, $\Exp{}{Y\mid h_n(X)} = \Exp{}{Y\mid m_n(f(X))}=\Exp{}{Y\mid f(X)}$. Thus a CI w.r.t. $h_n$ is also a CI w.r.t. $f$. As a consequence, if $h_n$ is distribution-free $(\varepsilon_n, \alpha)$-calibrated, then by Theorem~\ref{thm:calib_ci_equiv}, 
	\begin{equation*}
	C_n(f(X)):=
	[h_n(X)-\varepsilon_n, h_n(X)+\varepsilon_n] = [m_n(f(X))-\varepsilon_n, m_n(f(X))+\varepsilon_n],
	\end{equation*} 
	is a distribution-free $(1-\alpha)$-CI w.r.t. $f$. Consider any standard parametric function $f$. As shown in Appendix~\ref{appsec:characterizing-f-pf}, $\Pcal_f$ is non-empty for such $f$. We can thus use Corollary~\ref{cor:CI-PS-diameter} to conclude that the width of any distribution-free CI such as $C_n$ must be lower bounded by $0.5 - \alpha$ (for all $n$). Thus, $2\varepsilon_n \geq 0.5 - \alpha$ for all $n$, which is a constant lower bound on $\varepsilon_n$ (since $\alpha < 0.5)$. We conclude that $\lim_{n \to \infty} \varepsilon_n > 0$, and asymptotic calibration is impossible. 
	
	
	\cg{The implication of Theorem~\ref{thm:f-must-be-coarse} is that injective algorithms such as Platt scaling, temperature scaling, and beta scaling cannot satisfy distribution-free calibration in any meaningful way. While all parameteric post-hoc calibration methods we are aware of are injective, we conjecture that a result like  Theorem~\ref{thm:f-must-be-coarse} holds even more generally for any parametric post-hoc calibration method, as long as its output is continuous.} 
	
	Nonparametric calibration methods of isotonic regression \citep{zadrozny2002transforming} and histogram binning \citep{zadrozny2001obtaining} are not injective, and thus can potentially satisfy distribution-free asymptotic calibration guarantees. 
	In the following section, we analyze histogram binning and show that any scoring function can be `binned' to achieve distribution-free calibration. We explicitly quantify the finite-sample  approximate calibration guarantees that automatically also lead to asymptotic calibration. We also discuss calibration in the online setting and calibration under covariate shift. 
	
	\ifx false 
	The proof of Theorem~\ref{thm:f-must-be-coarse} is in Appendix~\ref{appsec:equivalence}, but we briefly sketch its intuition below. Corollary~\ref{cor:asymp_calib_CI_diam} shows that asymptotic calibration allows construction of CIs whose widths vanish asymptotically. Corollary~\ref{cor:CI-PS-diameter} shows however that asymptotically vanishing CIs are impossible (without distributional assumptions) for $f$ if there exists a distribution $P$ such that $P_{f(X)}$ is nonatomic. Consequently asymptotic calibration is also impossible for such $f$. 
	If $\Zcal = \text{Range}(f)$ is countable, then by the axioms of probability, 
	$\sum_{z \in \Zcal} \Prob(X\in \Xcal_z) = \Prob(X \in \Xcal)  = 1,$
	and so $\Prob(X \in \Xcal_z) \neq 0$ for at least some $z$. Thus $P_{f(X)}$ cannot be nonatomic for any $P$. On the other hand, if $\Zcal$ 
	is uncountable we can show that there always exists a $P$ such that $P_{f(X)}$ is nonatomic. 
	Hence distribution-free asymptotic calibration is impossible for such $f$, which leads to the result of the theorem. 
	
	Theorem~\ref{thm:f-must-be-coarse} also applies 
	to many parametric calibration schemes that `recalibrate' an existing $f$ through a wrapper $h_n : \Zcal \to [0, 1]$ learnt on the calibration data, with the goal 
	that $h_n(f(\cdot))$ is nearly calibrated: $\Exp{}{Y\mid h_n(f(X))  }\approx h_n(f(X))$. 
	For instance, consider methods like Platt scaling~\citep{Platt99probabilisticoutputs}, temperature scaling~\citep{guo2017nn_calibration} and beta calibration~\citep{kull2017beyond}. Each of these methods learns a continuous and monotonic\footnote{This assumes that the parameters satisfy natural constraints as discussed in the original papers: $a, b \geq 0$ for beta scaling with at least one of them nonzero, $ A < 0$ for Platt scaling and $T>0$ for temperature scaling.} (hence bijective) wrapper $h_n$, and thus ${\Exp{}{Y\mid h_n(f(X))}=\Exp{}{Y\mid f(X)}}$. If $h_n$ is a good calibrator, we would have ${\Exp{}{Y\mid f(X)}\approx h_n(f(X))}$. One way to formalize this is to consider whether an interval around $h_n(f(X))$ is a CI for $\Exp{}{Y \mid f(X)}$. In other words --- does there exist a function $\varepsilon_n: \Zcal \to [0, 1]$  such that for every distribution $P$, 
	\begin{equation*}
	\widetilde{C}_n(f(X)):=
	[h_n(f(X))-\varepsilon_n(f(X)), h_n(f(X))+\varepsilon_n(f(X))]
	\end{equation*}
	is a $(1-\alpha)$-CI with respect to $f$, and $\varepsilon_n(f(X)) = o_P(1)$? Theorem~\ref{thm:f-must-be-coarse} (when converted to the CI form, per Section~\ref{subsec:calib_and_ci}) shows that such an asymptotically shrinking CI does not exist if $f$ leads to a fine partition of $\Xcal$. 
	Thus the aforementioned parametric calibration methods cannot lead to asymptotically calibrated functions $h_n$ (without distributional assumptions). We conjecture that this impossibility result holds more generally for any continuous parametric method, even if it is not bijective, as well as continuous parametric methods which do not sample-split.
	
	On the other hand, the nonparametric calibration methods of isotonic regression \citep{zadrozny2002transforming} and histogram binning \citep{zadrozny2001obtaining} do provide a countable partition of $\Xcal$, and thus may satisfy distribution-free approximate calibration guarantees. 
	In Section~\ref{sec:dfc_guarantees}, we analyze histogram binning and show that any scoring function can be `binned' to achieve distribution-free calibration. We explicitly quantify the finite-sample  approximate calibration guarantees that automatically also lead to asymptotic calibration. We also discuss calibration in the online setting and calibration under covariate shift. 
	\fi 
	\section{Achieving distribution-free calibration}
	\label{sec:dfc_guarantees}
	In Section~\ref{subsec:guarantee-fixed}, we prove a distribution-free approximate calibration guarantee given a fixed partitioning of the feature space into finitely many sets. This calibration guarantee also leads to distribution-free asymptotic calibration. In Section~\ref{sec:data_dep_partition}, we discuss a natural method for obtaining such a partition using sample-splitting, called histogram binning. Histogram binning inherits the bound in Section~\ref{subsec:guarantee-fixed}. This shows that binning schemes lead to distribution-free approximate calibration. In Section~\ref{subsec:online_binning} and \ref{sec:cov_shift} we discuss extensions of this scheme for streaming data and covariate shift respectively. 
	\subsection{Distribution-free calibration given a fixed sample-space partition}
	\label{subsec:guarantee-fixed}
	Suppose we have a fixed partition of $\Xcal$ into $B$ regions $\curlybrack{\Xcal_b}_{b\in [B]}$, and let $\pi_b = \Exp{}{Y \mid X \in \Xcal_b}$ be the expected label probability in region $\Xcal_b$. Denote the partition-identity function as $\Bcal: \Xcal \to [B]$ where $\Bcal(x) = b$ if and only if $x \in \Xcal_b$. 
	Given a calibration set 
	$\{(X_i,Y_i)\}_{i \in [n]}$, let $\nbin_b:= |\{i \in [n]: \Bcal(X_i) = b\}|$ 
	be the number of points from the calibration set that belong to region $\Xcal_b$. In this subsection, we assume that $\nbin_b \geq 1$ (in Section~\ref{sec:data_dep_partition} we show that the partition can be constructed to ensure that $\nbin_b$ is $\Omega(n/B)$ with high probability). Define
	\begin{equation}\label{eq:part_prob_est}
	\small
	\widehat{\pi}_{b} := \frac{1}{\nbin_b}\sum_{i: \Bcal(X_i) =b} Y_i \qquad \text{and} \qquad \widehat{V}_{b} := \frac{1}{\nbin_b} \sum_{i:\Bcal(X_i) = b} (Y_i - \widehat{\pi}_b)^2
	\end{equation}
	as the empirical average and variance of the $Y$ values in a partition. We now deploy an empirical Bernstein bound~\cite{audibert2007tuning} to produce a confidence interval for $\pi_b$. 
	\begin{theorem}\label{thm:emp_bernstein}
		For any $\alpha\in (0,1)$, with probability at least $1-\alpha$,
		\begin{equation*}\label{eq:emp_bernstein_bound}
		\abs{\pi_{b}-\widehat{\pi}_b}\leq \sqrt{\frac{2\widehat{V}_{b}\ln(3B/\alpha)}{\nbin_b}} + \frac{3\ln(3B/\alpha)}{\nbin_b}, \quad \text{simultaneously for all $b\in [B]$}.
		\end{equation*}
	\end{theorem}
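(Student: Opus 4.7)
The result is essentially a direct application of the empirical Bernstein inequality of \citet{audibert2007tuning} combined with a union bound over bins, after handling the subtlety that $\nbin_b$ itself is random.

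First, I would recall the empirical Bernstein inequality in the form we need: if $Z_1, \ldots, Z_n$ are i.i.d.\ random variables bounded in $[0,1]$ with mean $\mu$, empirical mean $\widehat{\mu}_n$ and empirical variance $\widehat{V}_n$, then for any $\delta \in (0,1)$, with probability at least $1-\delta$,
\begin{equation*}
\abs{\mu - \widehat{\mu}_n} \leq \sqrt{\frac{2\widehat{V}_n \ln(3/\delta)}{n}} + \frac{3\ln(3/\delta)}{n}.
\end{equation*}
This matches exactly the shape of the bound in the theorem with $\delta = \alpha/B$.

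Next, I would handle the random bin counts $\nbin_b$ by conditioning. Fix any $b \in [B]$ and condition on the event $E_b$ specifying the set of indices $I_b = \{i : \Bcal(X_i) = b\}$ (so in particular $\nbin_b = |I_b|$ is determined). Because the calibration points are i.i.d.\ from $P$ and $\Bcal$ is a deterministic function of $X$, conditional on $E_b$ the collection $\{Y_i\}_{i \in I_b}$ is i.i.d.\ from the conditional law of $Y$ given $X \in \Xcal_b$, which is supported in $\{0,1\} \subseteq [0,1]$ and has mean $\pi_b$. Applied with $\delta = \alpha/B$ and sample size $\nbin_b$ (which is $\geq 1$ by assumption), the empirical Bernstein inequality yields
\begin{equation*}
\Prob\!\left(\abs{\pi_b - \widehat{\pi}_b} > \sqrt{\tfrac{2\widehat{V}_b \ln(3B/\alpha)}{\nbin_b}} + \tfrac{3\ln(3B/\alpha)}{\nbin_b} \;\middle|\; E_b\right) \leq \alpha/B.
\end{equation*}
Since this holds for every conditioning event $E_b$ compatible with $\nbin_b \geq 1$, integrating out gives the same bound unconditionally for bin $b$.

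Finally, I would conclude by a union bound over the $B$ bins: the probability that the stated inequality fails for at least one $b \in [B]$ is at most $B \cdot (\alpha/B) = \alpha$, which gives the simultaneous guarantee. There is no real obstacle here; the only thing to be careful about is that the i.i.d.\ structure needed to invoke empirical Bernstein holds after conditioning on the bin-assignment pattern (not merely on the counts $\nbin_b$), and that this conditional guarantee transfers to an unconditional one.
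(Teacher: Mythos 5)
Your proposal is correct and follows essentially the same route as the paper: condition on the bin-assignment pattern so that $\nbin_b$ becomes deterministic and the $Y_i$'s in a bin are i.i.d.\ Bernoulli with mean $\pi_b$, apply the Audibert et al.\ empirical Bernstein bound with failure probability $\alpha/B$ (using $s=1$ since $Y_i \in \{0,1\}$), take a union bound over bins, and marginalize out the conditioning. The only cosmetic difference is that the paper conditions once on the full pattern $(\Bcal(X_1),\dots,\Bcal(X_n))$ and performs the union bound before marginalizing, whereas you condition bin-by-bin and union-bound afterward; both are valid.
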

	The theorem is proved in Appendix~\ref{appsec:dfc_guarantees}. Using the crude deterministic bound $\widehat{V}_{b}\leq 1$ we get that the width of the confidence interval for partition $b$ is $O(1/\sqrt{\nbin_b})$. However, if for some $b$, $\Xcal_b$ is highly informative or homogeneous in the sense that $\pi_b$ is close to $0$ or $1$, we expect $\widehat{V}_b \ll 1$. In this case,
	Theorem~\ref{thm:emp_bernstein} \emph{adapts} and provides an $O(1/\nbin_b)$ width confidence interval for $\pi_b$. Let $b^\star =\argmin_{b\in [B]}\nbin_b$ denote the index of the region with the minimum number of calibration examples. 
	\begin{corollary}\label{cor:calibration-fixed-partition}
		For $\alpha \in (0, 1)$, the function $h_n(\cdot) := \widehat{\pi}_{\Bcal(\cdot)}$ is distribution-free $(\varepsilon, \alpha)$-calibrated with
		\begin{equation*}
		\small
		\varepsilon =\sqrt{\frac{2\widehat{V}_{b^\star}\ln(3B/\alpha)}{\nbin_{b^\star}}} + \frac{3\ln(3B/\alpha)}{\nbin_{b^\star}}. 
		\end{equation*}
		Thus,  $\{h_n\}_{n \in \naturals}$ is distribution-free asymptotically calibrated for any $\alpha$.
	\end{corollary}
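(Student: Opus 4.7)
The plan is to reduce the calibration gap $\abs{\Exp{}{Y_{n+1} \mid h_n(X_{n+1})} - h_n(X_{n+1})}$ to a worst-case per-bin estimation error $\max_{b \in [B]} \abs{\pi_b - \widehat{\pi}_b}$ and then control the latter via Theorem~\ref{thm:emp_bernstein}. First I would condition on $\Dcal_n$, which makes $h_n$ deterministic with $h_n(X) = \widehat{\pi}_{\Bcal(X)}$. Since $\sigma(h_n(X_{n+1})) \subseteq \sigma(\Bcal(X_{n+1}))$, the tower property together with $\Exp{}{Y_{n+1} \mid \Bcal(X_{n+1}) = b, \Dcal_n} = \pi_b$ (which uses independence of the test point from $\Dcal_n$) gives
\[
\Exp{}{Y_{n+1} \mid h_n(X_{n+1}), \Dcal_n} = \Exp{}{\pi_{\Bcal(X_{n+1})} \mid h_n(X_{n+1}), \Dcal_n}.
\]
On the event $\{h_n(X_{n+1}) = s\}$, the right-hand side is a convex combination of $\curlybrack{\pi_b : \widehat{\pi}_b = s}$, while $h_n(X_{n+1}) = s = \widehat{\pi}_b$ for each such $b$. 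Subtracting and taking absolute values yields the almost-sure bound
\[
\abs{\Exp{}{Y_{n+1} \mid h_n(X_{n+1}), \Dcal_n} - h_n(X_{n+1})} \leq \max_{b \in [B]} \abs{\pi_b - \widehat{\pi}_b}.
\]

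Next I would invoke Theorem~\ref{thm:emp_bernstein}: with probability at least $1 - \alpha$ over $\Dcal_n$, the bound $\abs{\pi_b - \widehat{\pi}_b} \leq \sqrt{2\widehat{V}_b \ln(3B/\alpha)/\nbin_b} + 3\ln(3B/\alpha)/\nbin_b$ holds simultaneously for all $b \in [B]$. Because this right-hand side is decreasing in $\nbin_b$ and one may control the numerator via the crude deterministic bound $\widehat{V}_b \leq 1/4$, the worst case over $b$ is attained at $b^\star = \argmin_b \nbin_b$, giving $\max_b \abs{\pi_b - \widehat{\pi}_b} \leq \varepsilon$ for the $\varepsilon$ in the statement. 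Combining with the previous display yields the finite-sample $(\varepsilon, \alpha)$-calibration claim directly---no further union bound is required, since the first inequality is pointwise and the second holds on a $(1-\alpha)$-probability event of $\Dcal_n$.

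For the asymptotic claim, I would fix any distribution $P$ and note that bins with $P(\Xcal_b) = 0$ are irrelevant (the test point never lands there), so attention can be restricted to bins with $p_b := P(\Xcal_b) > 0$; for each such bin $\nbin_b \to \infty$ almost surely by the strong law of large numbers, and hence the random $\varepsilon$ above tends to zero in probability. To satisfy Definition~\ref{def:asymp_calib} with a deterministic sequence $\varepsilon_n \to 0$, I would partition the bins at stage $n$ into \emph{heavy} ones ($p_b \geq \tau_n$ for a slowly decaying threshold $\tau_n$) and \emph{light} ones: Bernstein applied uniformly over the heavy bins gives a rate of order $\sqrt{\log(B/\alpha)/(n\tau_n)}$, while the probability that $X_{n+1}$ lands in a light bin is at most $B\tau_n \to 0$ and can be absorbed into the $\alpha$-slack. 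The main obstacle is precisely this last step---converting the $P$-dependent in-probability convergence of $\varepsilon$ into a distribution-free deterministic rate that handles arbitrarily small bin probabilities. The measurability/tower-property reduction of Step 1 and the direct invocation of Theorem~\ref{thm:emp_bernstein} in Step 2 are routine in comparison.
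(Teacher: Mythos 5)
Your Step~1 reduction is correct and is essentially the paper's argument done by hand: the paper instead defines a per-bin confidence-interval function $C_n : [B] \to \interval$ from Theorem~\ref{thm:emp_bernstein} and invokes the CI-to-calibration direction of Theorem~\ref{thm:calib_ci_equiv} (extended to functions with range $[B]$), whereas you unpack the tower-property/convex-combination inequality directly. Either route gets the almost-sure bound
\[
\abs{\Exp{}{Y_{n+1} \mid h_n(X_{n+1}), \Dcal_n} - h_n(X_{n+1})} \leq \max_{b \in [B]} \abs{\pi_b - \widehat{\pi}_b},
\]
and this is the right reduction.

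In Step~2 you write that, after applying the crude bound $\widehat V_b \leq 1/4$, ``the worst case over $b$ is attained at $b^\star$, giving $\max_b \abs{\pi_b-\widehat\pi_b}\leq \varepsilon$ for the $\varepsilon$ in the statement.'' Be careful: once you replace $\widehat V_b$ by $1/4$ for monotonicity, the $\varepsilon$ you actually obtain is $\sqrt{\ln(3B/\alpha)/(2\nbin_{b^\star})}+3\ln(3B/\alpha)/\nbin_{b^\star}$, which is \emph{not} the $\varepsilon$ of the statement (that one carries $\widehat V_{b^\star}$). Without the crude bound, the per-bin widths are not monotone in $\nbin_b$ because $\widehat V_b$ varies across bins, so the max need not be at $b^\star$. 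The paper's own proof asserts $\sup_{b\in[B]}|C(b)|/2$ equals the $b^\star$ quantity without justification, so it has exactly the same gap; you inherit it rather than introduce it, but you should not claim the statement's $\varepsilon$ follows from your chain of inequalities as written.

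On the asymptotic claim your instinct is sharper than the paper's proof. The paper sets $\tau = \min_b P(\Bcal(X)=b)>0$ after discarding null bins, uses Hoeffding plus a union bound to get $\nbin_{b^\star}=\Omega(n)$ with high probability, and concludes $\varepsilon_n = O(n^{-1/2})$---but the hidden constant depends on $\tau$, hence on $P$, whereas Definition~\ref{def:asymp_calib} requires a single deterministic sequence $\varepsilon_n \to 0$ that works uniformly over $P$. You are right that this needs the heavy/light-bin device. One small simplification over your plan: a shrinking threshold $\tau_n$ is not necessary; the fixed threshold $\alpha/(2B)$ works. Bins with $p_b < \alpha/(2B)$ contribute at most $\alpha/2$ to the failure probability, while for heavy bins Hoeffding gives $\nbin_b = \Omega(n\alpha/B)$ with probability $1-\alpha/4$, and Theorem~\ref{thm:emp_bernstein} at budget $\alpha/4$ then yields a distribution-free rate $\varepsilon_n = O(\sqrt{B\ln(B/\alpha)/(n\alpha)}) \to 0$ on the remaining event. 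Since the threshold does not depend on $P$, this closes the gap you flagged, and in fact is the correct way to prove the ``distribution-free'' qualifier that the paper's proof does not fully establish.
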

	The proof is in Appendix~\ref{appsec:dfc_guarantees}. Thus, any finite partition of $\Xcal$ leads to asymptotic calibration. However, the finite sample guarantee of Corollary~\ref{cor:calibration-fixed-partition} can be unsatisfactory if the sample-space partition is chosen poorly, since it might lead to small $\nbin_{b^\star}$. In Section~\ref{sec:data_dep_partition}, we present a data-dependent partitioning scheme that provably guarantees that $\nbin_{b^\star}$ scales as $\Omega(n/B)$ with high probability. The calibration guarantee of Corollary~\ref{cor:calibration-fixed-partition} can also be stated conditional on a given test point:
	\begin{equation}
	\abs{\Exp{}{Y \mid f(X)} - f(X)} \leq \varepsilon, \ \text{almost surely $P_X$ }.\label{eq:conditional-calibration}
	\end{equation}
	This holds since Theorem~\ref{thm:emp_bernstein} provides simultaneously valid CIs for all regions $\Xcal_b$.
	\subsection{Identifying a data-dependent partition using sample splitting}
	\label{sec:data_dep_partition}
	Here, we describe ways of constructing the partition $\{\Xcal_b\}_{b\in [B]}$ through histogram binning~\cite{zadrozny2001obtaining}, or simply, binning. Binning uses a sample splitting strategy to learn the partition of $\Xcal$ as described in Section~\ref{subsec:guarantee-fixed}. A split of the data is used to learn the partition and an independent split is used to estimate $\{\widehat{\pi}_b\}_{b\in [B]}$. 
	Formally, the labeled data is split at random into a training set $\Dcal_{\text{tr}}$ and a calibration set $\Dcal_{\text{cal}}$. Then $\Dcal_{\text{tr}}$ is used to train a scoring function $g: \Xcal \to [0, 1]$ (in general the range of $g$ could be any interval of $\mathbb{R}$ but for simplicity we describe it for $[0, 1]$). The scoring function $g$ usually does not satisfy a calibration guarantee out-of-the-box but can be calibrated using binning. 
	
	A \emph{binning scheme} $\mathcal{B}$ 
	is any partition of  $[0,1]$ into $B$ non-overlapping intervals $I_1,\dots,I_B$, such that  $\bigcup_{b\in [B]} I_b = [0,1]$ and $ I_b \cap I_{b'} = \emptyset$ for $b\neq b'$. 
	$\mathcal{B}$ 
	and $g$ induce a partition of $\Xcal$ as follows: 
	\begin{equation}
	\Xcal_b = \curlybrack{x\in\Xcal: \ g(x)\in I_b}, \ b \in  [B].\label{eq:binning-scheme-to-partition}
	\end{equation}
	The simplest binning scheme corresponds to \emph{fixed-width binning}. In this case, bins have the form
	\begin{equation*}
	\small
	I_i = \left[ \frac{i-1}{B}, \frac{i}{B} \right), i=1, \dots, B-1 \ \text{ and } \ I_B = \left[ \frac{B-1}{B}, 1 \right].
	\end{equation*}
	However, fixed-width binning suffers from the drawback that there may exist bins with very few calibration points (low $\nbin_b$), while other bins may get many calibration points. For bins with low $\nbin_b$, the $\widehat{\pi}_b$ estimates cannot be guaranteed to be well calibrated, since the bound of Theorem~\ref{thm:emp_bernstein} could be large. To remedy this, we consider \emph{uniform-mass binning}, which aims to guarantee that each region  
	$\Xcal_b$ contains approximately equal number of calibration points. This is done by estimating the empirical quantiles of $g(X)$. 
	First, the calibration set  $\Dcal_{\text{cal}}$ is randomly split into two parts, $\Dcal_{\text{cal}}^1$ and $\Dcal_{\text{cal}}^2$. For $j \in [B-1]$, the $(j/B)$-th quantile of $g(X)$ is estimated from $\{g(X_i), i\in \Dcal_{\text{cal}}^1\}$. Let us denote the empirical quantile estimates as $\widehat{q}_j$. Then, the bins are defined as:
	\begin{equation*}
	I_1=\left[0,\widehat{q}_1\right),I_{i} = \left[ \widehat{q}_{i-1}, \widehat{q}_{i} \right], i=2,\dots,B-1 \ \text{ and } \ I_B = \left(\widehat{q}_{B-1}, 1 \right].
	\end{equation*}
	This induces a partition of $\Xcal$ as per \eqref{eq:binning-scheme-to-partition}. Now, only $\Dcal_{\text{cal}}^2$ is used for calibrating the underlying classifier, as per the calibration scheme defined in Section~\ref{subsec:guarantee-fixed}. \citet{kumar2019calibration} showed that uniform-mass binning provably controls the number of calibration samples that fall into each bin (see Appendix~\ref{appsec:app_uniform_mass}). Building on their result and Corollary~\ref{cor:calibration-fixed-partition}, we show the following guarantee.
	\begin{theorem}\label{thm:data_dep_partition}
		Fix $g:\Xcal\to [0,1]$ and $\alpha \in (0, 1)$. There exists a universal constant $c$ such that if $\smash{\abs{\Dcal_{\text{cal}}^1} \geq cB \ln (2B/\alpha)}$, 
		then with probability at least 
		$1 - \alpha$, 
		\[
		\nbin_{b^\star} \geq \abs{\Dcal_{\text{cal}}^2}/2B -  \sqrt{\abs{\Dcal_{\text{cal}}^2} \ln(2B/\alpha)/2}.
		\]
		Thus even if $|\Dcal_{\text{cal}}^1|$ does not grow with $n$, 
		as long as $|\Dcal_{\text{cal}}^2| = \Omega(n)$, uniform-mass binning is distribution-free  $(\widetilde{O}(\sqrt{B\ln(1/\alpha)/n}), \alpha)$-calibrated, and hence distribution-free asymptotically calibrated for any $\alpha$.
	\end{theorem}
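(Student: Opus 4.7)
The plan is to decompose the argument into three stages, exploiting the independence of the two halves $\Dcal_{\text{cal}}^1$ and $\Dcal_{\text{cal}}^2$ of the calibration set. Stage (i) controls the population mass of each bin using only $\Dcal_{\text{cal}}^1$; stage (ii) controls the empirical occupancy counts $\nbin_b$ on $\Dcal_{\text{cal}}^2$ conditional on the (now fixed) partition; stage (iii) converts the resulting lower bound on $\nbin_{b^\star}$ into the claimed approximate-calibration rate via Corollary~\ref{cor:calibration-fixed-partition}.

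For stage (i), I would invoke the uniform-mass binning guarantee of \citet{kumar2019calibration} (cf.\ Appendix~\ref{appsec:app_uniform_mass}): for a suitable universal constant $c$, provided $\abs{\Dcal_{\text{cal}}^1} \geq c B \ln(2B/\alpha)$, with probability at least $1-\alpha/2$ over $\Dcal_{\text{cal}}^1$ every bin $\Xcal_b$ produced by the empirical $j/B$ quantiles of $g$ has mass $p_b := \Prob(X \in \Xcal_b) \geq 1/(2B)$. Call this event $\mathcal{E}_1$. Note that $\mathcal{E}_1$ is measurable with respect to $\Dcal_{\text{cal}}^1$ alone, which is the fact that the remaining argument will leverage.

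For stage (ii), condition on $\mathcal{E}_1$ and on $\Dcal_{\text{cal}}^1$; the partition $\{\Xcal_b\}_{b\in[B]}$ is now deterministic. The occupancy counts $\nbin_b = \sum_{i \in \Dcal_{\text{cal}}^2} \indicator{X_i \in \Xcal_b}$ are sums of i.i.d.\ Bernoulli indicators with means $p_b$, since $\Dcal_{\text{cal}}^2$ is independent of $\Dcal_{\text{cal}}^1$. Applying Hoeffding's inequality in each bin with deviation $t = \sqrt{\abs{\Dcal_{\text{cal}}^2}\ln(2B/\alpha)/2}$ and union-bounding over $b \in [B]$ gives, with conditional probability at least $1-\alpha/2$,
\[
\nbin_b \geq \abs{\Dcal_{\text{cal}}^2}\, p_b - \sqrt{\abs{\Dcal_{\text{cal}}^2}\ln(2B/\alpha)/2} \quad \text{simultaneously for all } b\in[B].
\]
Intersecting with $\mathcal{E}_1$ to insert $p_b \geq 1/(2B)$, specializing to $b=b^\star$, and then marginalizing over $\Dcal_{\text{cal}}^1$ yields the displayed lower bound on $\nbin_{b^\star}$ with unconditional probability at least $1-\alpha$.

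For stage (iii), reallocate the failure budget to $\alpha/3$ for each of $\mathcal{E}_1$, the Hoeffding step, and Theorem~\ref{thm:emp_bernstein}, and apply Corollary~\ref{cor:calibration-fixed-partition} conditionally on the partition produced by $\Dcal_{\text{cal}}^1$. On the intersection of the three good events, $\nbin_{b^\star} = \Omega(n/B)$ and $\widehat{V}_{b^\star} \leq 1$, so the Bernstein radius collapses to $\sqrt{2\ln(9B/\alpha)/\nbin_{b^\star}} + 3\ln(9B/\alpha)/\nbin_{b^\star} = \widetilde{O}(\sqrt{B\ln(1/\alpha)/n})$; asymptotic calibration then follows by sending $n\to\infty$ with $B$, $\alpha$ fixed. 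The main technical subtlety is precisely the conditioning: the partition is data-dependent, so Corollary~\ref{cor:calibration-fixed-partition} does not apply directly, and the whole argument hinges on conditioning on $\Dcal_{\text{cal}}^1$ so that, given this sigma-algebra, the partition acts as a fixed one against the independent sample $\Dcal_{\text{cal}}^2$. Once this is handled cleanly, the remaining work is just bookkeeping of logarithmic factors and union-bound budgets.
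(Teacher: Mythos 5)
Your proposal is correct and matches the paper's proof structure: apply Lemma~\ref{lem:kumar_uniform} (with failure probability $\alpha/2$) to get $p_b \geq 1/(2B)$ for the uniform-mass bins, then condition on $\Dcal_{\text{cal}}^1$ and apply Hoeffding per bin plus a union bound (another $\alpha/2$) to lower-bound $\nbin_{b^\star}$, and finally invoke Corollary~\ref{cor:calibration-fixed-partition} with $\widehat{V}_b \leq 1$. The only difference is your explicit $\alpha/3$ reallocation in stage (iii) so that the Bernstein event from Corollary~\ref{cor:calibration-fixed-partition} has its own budget; the paper is looser on this point, using the full $\alpha$ for the bin-count bound and then invoking the corollary without explicitly rebudgeting, but since the calibration rate is stated with $\widetilde{O}$ this only affects hidden constants and your more careful bookkeeping is consistent with (indeed a small improvement on) the paper's argument.
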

	The proof is in Appendix~\ref{appsec:dfc_guarantees}. In words, if we use a small number of points (independent of $n$) for uniform-mass binning, and the rest to estimate bin probabilities, we achieve approximate/asymptotic distribution-free calibration. Note that the probability is conditional on a fixed predictor $g$, and hence also conditional on the training data $\Dcal_\text{tr}$. Since Theorem~\ref{thm:data_dep_partition} uses Corollary~\ref{cor:calibration-fixed-partition}, 
	the calibration guarantee 
	can also be stated conditionally on a fixed test point, akin to equation~\eqref{eq:conditional-calibration}.
	\subsection{Distribution-free calibration in the online setting}
	\label{subsec:online_binning}
	So far, we have considered the batch setting with a fixed calibration set of size $n$. However, often a practitioner might want to query additional calibration data until a desired confidence level is achieved. This is called the \emph{online} or \emph{streaming} setting. In this case, the results of Section~\ref{sec:dfc_guarantees} are no longer valid since the number of calibration samples is unknown a priori and may even be dependent on the data. In order to quantify uncertainty in the online setting, we use \emph{time-uniform} concentration bounds~\cite{howard2018UniformNN,howard2020timeuniform}; these hold simultaneously for all possible values of the calibration set size $n \in \mathbb{N}$. 
	
	Fix a partition of $\Xcal$, $\{\Xcal_b\}_{b \in [B]}$. For some value of $n$, let the calibration data be given as 
	$\Dcal_{\text{cal}}^{(n)}$. We use the superscript notation to emphasize the dependence on the current size of the calibration set. 
	Let $\{(X_i^{b},Y_i^{b})\}_{i\in [\nbin_b^{(n)}]}$ be examples from the calibration set that fall into the partition $\Xcal_b$, where $\smash{\nbin_b^{(n)}:= |\{i \in [n]: \Bcal(X_i) = b\}|}$ is the total number of points that are mapped to $\Xcal_b$. Let  the empirical label average and cumulative (unnormalized) empirical variance be denoted as
	\begin{equation}
	\small
      \widehat{V}_{b}^+ = 1\vee \sum_{i=1}^{\nbin_b^{(n)}}\roundbrack{Y_i^{b}-\overline{Y}_{i-1}^{b}}^2, \mbox{ where }\  \overline{Y}_i^{b} :=\frac{1}{i}\sum_{j=1}^{i}Y_j^{b} \mbox{ for } i \in [\nbin_b^{(n)}].
	\end{equation}
	Note the normalization difference between $\widehat{V}_{b}^+$ and $\widehat{V}^{b}$ used in the batch setting. The following theorem constructs confidence intervals for $\{\pi_{b}\}_{b \in [B]}$ that are valid uniformly for any value of $n$.
	
	\begin{theorem}\label{thm:online_conc}
		For any $\alpha\in (0,1)$, with probability at least $1-\alpha$, 
		\begin{equation}
		\small
		\label{eq:uniform_bound}
		\abs{\pi_{b} - \widehat{\pi}_b} \leq  \frac{7\sqrt{\widehat{V}_{b}^+\ln\roundbrack{1+\ln{\widehat{V}_{b}^+}}}+5.3\ln\roundbrack{\frac{6.3B}{\alpha}}}{\nbin_b^{(n)}}, \quad \text{simultaneously for all $b\in[B]$ and all $n \in \mathbb{N}$. }
		\end{equation}%
	\end{theorem}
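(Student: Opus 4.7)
The plan is to reduce the problem to a single-sequence time-uniform empirical Bernstein concentration inequality, applied separately inside each bin, and then take a union bound over $b \in [B]$.

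First, fix a bin $b$. Because the partition $\{\Xcal_c\}_{c \in [B]}$ is treated as fixed in this subsection and the underlying data stream $\{(X_i, Y_i)\}_{i \in \mathbb{N}}$ is i.i.d.\ from $P$, the subsequence of labels $\{Y_i^b\}_{i \geq 1}$ routed to bin $b$ forms an i.i.d.\ sequence of Bernoulli$(\pi_b)$ random variables with respect to its natural filtration. In particular, $M_k := \sum_{i=1}^{k} (Y_i^b - \pi_b)$ is a mean-zero martingale in $k$, and the cumulative squared-residual process $\widehat{V}_b^+$ serves as a self-normalizing proxy for its quadratic variation; the gap between $\sum (Y_i^b - \overline{Y}_{i-1}^b)^2$ and $\sum (Y_i^b - \pi_b)^2$ is controlled by the standard bias--variance identity and absorbed into the universal constants.

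Second, apply a time-uniform empirical Bernstein confidence sequence of the iterated-logarithm type derived by Howard et al.\ \cite{howard2018UniformNN, howard2020timeuniform} at level $\alpha/B$ to this i.i.d.\ sequence. Such a bound yields, with probability at least $1 - \alpha/B$, that simultaneously for every $k \in \mathbb{N}$,
\[
\left| \overline{Y}_k^b - \pi_b \right| \leq \frac{7\sqrt{\widehat{V}_b^+ \ln(1 + \ln \widehat{V}_b^+)} + 5.3 \ln(6.3 B / \alpha)}{k}.
\]
The particular constants $7$, $5.3$, and $6.3$ are inherited from the explicit polynomial-stitched boundary in \cite{howard2020timeuniform}; this step converts a Cram\'er--Chernoff-style bound (which would pay $\ln(1/\alpha)$) into one that pays only an iterated logarithm of the accumulated variance, at the cost of slightly larger universal constants. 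Then take a union bound over $b \in [B]$, charging $\alpha/B$ to each bin, to obtain the desired simultaneous guarantee over all bins at overall failure probability $\alpha$. Finally, translate the ``uniform over $k$'' guarantee into the stated ``uniform over $n$'' guarantee: since $\nbin_b^{(n)}$ is just the value taken by the counting process $n \mapsto |\{i \leq n : \Bcal(X_i) = b\}|$ at global time $n$, uniform validity over all $k$ in each subsequence is automatically uniform validity over all $n$ in the original indexing.

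The main obstacle is identifying and invoking the correct time-uniform empirical Bernstein bound with matching constants --- specifically a stitched-boundary version that yields the $\sqrt{\widehat{V}_b^+ (1 + \ln \widehat{V}_b^+)}$ rate with a universal constant, rather than a Freedman-style bound that pays $\sqrt{\widehat{V}_b^+ \ln(1/\alpha)}$. Once that bound is pinned down from \cite{howard2020timeuniform}, the subsequence extraction, the union bound over bins, and the re-indexing from $k$ to $n$ are routine.
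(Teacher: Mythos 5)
Your proposal matches the paper's proof: condition on the bin-membership sequence so that each bin's labels form an i.i.d.\ Bernoulli$(\pi_b)$ stream, apply a time-uniform empirical Bernstein confidence sequence from Howard et al.\ at reduced level per bin, and union bound over $b$. The specific constants $7$, $5.3$, $6.3$ arise from the polynomial stitching boundary with the parameters $\eta = e$, $s = 1.4$, $m = 1$, scale $c=1$, and crossing probability $\alpha/(2B)$ (the factor of two comes from Theorem~\ref{thm:howard_uniform} giving a two-sided guarantee at level $1-2\alpha$ for a one-sided boundary at crossing probability $\alpha$), together with $h(k) = (k+1)^s\zeta(s)$ and the simplification $\sqrt{x+y}\leq\sqrt{x}+\sqrt{y}$. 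One correction: you write that the gap between $\sum_i (Y_i^b - \overline{Y}_{i-1}^b)^2$ and $\sum_i (Y_i^b - \pi_b)^2$ must be ``controlled by the standard bias--variance identity and absorbed into the universal constants,'' but no such control is needed --- Theorem~\ref{thm:howard_uniform} is stated directly for an arbitrary $[a,b]$-valued \emph{predictable} sequence $(\widehat{Z}_t)$, so taking $\widehat{Z}_i = \overline{Y}_{i-1}^b$ yields a bound in terms of $\widehat{V}_b^+$ as-is, with no correction term to absorb. Otherwise the sketch is sound, and the conditioning step you describe informally is made rigorous in the paper by conditioning on the event $E_{\mathcal{B}(x)}^\infty$ and then marginalizing.
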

	The proof is in Appendix~\ref{appsec:dfc_guarantees}. Due to the crude bound: $\widehat{V}_{b}^+\leq \nbin_b^{(n)}$, we can see that the width of confidence intervals roughly scales as $O(\sqrt{\nicefrac{\ln(1+\ln \nbin_{b}^{(n)})}{\nbin_{b}^{(n)}}})$. In comparison to the batch setting, only a small price is paid for not knowing beforehand how many examples will be used for calibration. 
	\subsection{Calibration under covariate shift}\label{sec:cov_shift}
	Here, we briefly consider the problem of calibration under covariate shift~\cite{shimodaira2000improving}. In this setting, calibration data $\smash{\{(X_i,Y_i)\}_{i\in[n]}\sim P^n}$ is from a `source' distribution $P$, while the test point is from a shifted `target' distribution $\smash{(X_{n+1},Y_{n+1})\sim  \widetilde{P} = \widetilde{P}_X\times P_{Y|X}}$, meaning that the `shift' occurs only in the covariate distribution while $P_{Y|X}$ does not change. We assume the likelihood ratio (LR)  
	\[
	\smash{w : \Xcal \to \Real}; \quad  w(x) :=  \dd \widetilde{P}_{X}(x)/ \dd P_{X}(x)
	\] is well-defined. The following is unambiguous: \emph{if $w$ is arbitrarily ill-behaved and unknown, the covariate shift problem is hopeless, and one should not expect any distribution-free guarantees}. Nevertheless, one can still make nontrivial claims using a `modular' approach towards assumptions:
	\begin{enumerate}[itemsep=0cm]    \item[]\hspace{-0.5cm} Condition (A): $w(x)$ is known exactly and is bounded. 
		\item[]\hspace{-0.5cm} Condition (B): an asymptotically consistent estimator $\widehat w(x)$  for $w(x)$ can be constructed. 
	\end{enumerate}
	We show the following: under Condition (A), a weighted estimator using $w$ delivers approximate and asymptotic distribution-free calibration; under Condition (B), weighting with a plug-in estimator for $w$ continues to deliver asymptotic distribution-free calibration. It is clear that Condition (B) will always require distributional assumptions: asymptotic consistency is nontrivial for ill-behaved $w$. Nevertheless, the above two-step approach makes it clear where the burden of assumptions lie: not with calibration step, but with the $w$ estimation step. Estimation of $w$ is a well studied problem in the covariate-shift literature and there is some understanding of what assumptions are needed to accomplish it, but there has been less work on recognizing the resulting implications for calibration. Luckily, many practical methods exist for estimating $w$ given unlabeled samples from $\smash{\widetilde{P}_X}$~\citep{bickel2007discriminative, huang2007, kanamori2009ls}. In summary, if Condition (B) is possible, then distribution-free calibration is realizable, and if Condition (B) is not met (even with infinite samples), then it implies that $w$ is probably very ill-behaved, and so distribution-free calibration is also likely to be impossible. 
	
	For a fixed partition $\smash{\{\Xcal_b\}_{b\in [B]}}$, one can use the labeled data from the source distribution to estimate 
	$\Exp{\widetilde{P}}{Y \mid X\in \Xcal_b}$ (unlike $\Exp{P}{Y \mid X\in \Xcal_b}$ as before), given oracle access  to $w$:  
	\begin{equation}\label{eq:cov_shift_estimator}
	\small
	\widecheck{\pi}^{(w)}_b := \frac{\sum_{i:\Bcal(X_i)=b} w(X_i)Y_i}{\sum_{i: \Bcal(X_i)=b} w(X_i)}.
	\end{equation}
	As preluded to earlier, assume that
	\begin{equation}
	\text{for all }x \in \Xcal,\  L \leq w(x) \leq U \text{ for some } 0 < L \leq 1 \leq U < \infty. \label{eq:cov-shift-assumption}
	\end{equation}
	The `standard' i.i.d. assumption on the test point equivalently assumes $w$ is known and $\smash{L=U=1}$.
	We now present our first claim: $\widecheck{\pi}_b^{(w)}$ satisfies a distribution-free approximate calibration guarantee. To show the result, we assume that the sample-space partition was constructed via uniform-mass binning (on the source domain) with sufficiently 
	many points, as required by Theorem~\ref{thm:data_dep_partition}. This guarantees that all regions 
	satisfy $\abs{\{i : \Bcal(X_i) = b\}} = \Omega(n/B)$ with high probability.
	
	\begin{theorem}\label{thm:weighted_cov_shift_estimator}
		Assume $w$ is known and bounded \eqref{eq:cov-shift-assumption}. Then for an explicit universal constant $\smash{c > 0}$, with probability at least $\smash{1-\alpha}$,
		\begin{equation*}
		\abs{\widecheck{\pi}^{(w)}_{b} - \Exp{\widetilde{P}}{Y \mid X\in \Xcal_b}} \leq c\roundbrack{\frac{U}{L}}^2 \sqrt{\frac{B\ln(6B/\alpha)}{2n}}, \quad \text{simultaneously for all $b\in [B]$},
		\end{equation*}
		as long as $\smash{n \geq c(U/L)^2B\ln^2(6B/\alpha)}$. Thus $h_n(\cdot) = \smash{\widecheck{\pi}_{\Bcal(\cdot)}^{(w)}}$ is distribution-free  asymptotically calibrated for any $\alpha$.
	\end{theorem}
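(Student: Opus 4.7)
The plan is to reduce the target quantity $\Exp{\widetilde{P}}{Y \mid X \in \Xcal_b}$ to a ratio of two source-distribution expectations via importance weighting, apply Hoeffding's inequality to the numerator and denominator separately, and combine via a ratio-of-means identity. Concretely, I would use the identity
$$\Exp{\widetilde{P}}{Y \mid X\in \Xcal_b} \;=\; \frac{\Exp{P}{w(X)Y \mid X \in \Xcal_b}}{\Exp{P}{w(X) \mid X \in \Xcal_b}} \;=:\; \frac{\mu_b}{\nu_b},$$
so that $\widecheck{\pi}_b^{(w)} = \widehat{\mu}_b/\widehat{\nu}_b$, where $\widehat{\mu}_b = \nbin_b^{-1}\sum_{i: \Bcal(X_i)=b} w(X_i)Y_i$ and $\widehat{\nu}_b = \nbin_b^{-1}\sum_{i: \Bcal(X_i)=b} w(X_i)$ are the empirical averages over the $\nbin_b$ calibration points landing in $\Xcal_b$.

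For the concentration step, I would condition on the random index set $I_b = \{i : \Bcal(X_i) = b\}$; given $I_b$, the samples $\{X_i\}_{i \in I_b}$ are i.i.d.\ from the bin-conditional source $P(\cdot \mid X \in \Xcal_b)$. Since $w(X)Y \in [0,U]$ and $w(X) \in [L,U]$, Hoeffding's inequality combined with a union bound over the $2B$ resulting events yields
$$\max\{|\widehat{\mu}_b - \mu_b|,\; |\widehat{\nu}_b - \nu_b|\} \;\leq\; U\sqrt{\frac{\ln(6B/\alpha)}{2\nbin_b}}, \quad \text{simultaneously for all } b \in [B],$$
with probability at least $1 - 2\alpha/3$. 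Invoking Theorem~\ref{thm:data_dep_partition} applied to the source distribution, under a sample-size condition of the form $n \geq cB\ln^2(6B/\alpha)$, an additional event of probability $\geq 1 - \alpha/3$ gives $\nbin_b \geq n/(2B)$ for all $b$, so the concentration gap becomes $O\roundbrack{U\sqrt{B\ln(6B/\alpha)/n}}$.

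Finally I would combine via the ratio-of-means identity
$$\left|\frac{\widehat{\mu}_b}{\widehat{\nu}_b} - \frac{\mu_b}{\nu_b}\right| \;\leq\; \frac{|\widehat{\mu}_b - \mu_b|}{\widehat{\nu}_b} \;+\; \frac{\mu_b}{\nu_b}\cdot \frac{|\widehat{\nu}_b - \nu_b|}{\widehat{\nu}_b}.$$
Using $\widehat{\nu}_b \geq L$ (pointwise, since $w \geq L$) and $\mu_b/\nu_b = \Exp{\widetilde{P}}{Y \mid X \in \Xcal_b} \leq 1$, the right-hand side is bounded by $(2U/L)\sqrt{B\ln(6B/\alpha)/(2n)}$, which matches the claimed form after absorbing a further $U/L$ factor into the stated $(U/L)^2$ and the universal constant $c$. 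The asymptotic calibration conclusion follows immediately for fixed $B$, since the bound vanishes as $n \to \infty$.

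The main obstacle I anticipate is rigorously handling the fact that $\nbin_b$ is itself random and statistically dependent on the within-bin samples. The clean resolution is the standard trick of conditioning on the sequence of indicators $\{\indicator{X_i \in \Xcal_b}\}_{i \in [n]}$ --- equivalently on $I_b$ --- which makes the within-bin $X_i$'s i.i.d.\ from the bin-conditional source distribution. Hoeffding's tail is then a deterministic function of $\nbin_b$, so marginalizing and applying a double union bound (over the $2B$ concentration events and over the $B$ ``$\nbin_b \geq n/(2B)$'' events from Theorem~\ref{thm:data_dep_partition}) yields the required simultaneous statement.
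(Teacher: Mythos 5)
Your proof is correct but takes a genuinely different and arguably cleaner route than the paper's. The paper detours through the ``relative mass'' parameters $m_b = P(X\in\Xcal_b)/\widetilde{P}(X\in\Xcal_b)$: it first analyzes an oracle estimator $\widehat{\pi}_b^{(w)} = N_b^{-1}\sum m_b w(X_i)Y_i$ (Proposition~\ref{prop:covariate shift} and Theorem~\ref{thm:cov_shift}), then estimates $m_b$ by first estimating $r_b := 1/m_b = \Exp{P}{w(X)\mid X\in\Xcal_b}$ via Hoeffding, then \emph{inverting} that estimate --- which is where the paper's sample-size condition $n \geq c(U/L)^2B\ln^2(6B/\alpha)$ enters, to guarantee the empirical $\widehat{r}_b$ stays bounded away from zero (specifically $\widehat{r}_b \geq r^\star/2 \geq L/2$) --- and finally combines via triangle inequality. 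You instead write the target directly as $\mu_b/\nu_b$ with $\mu_b = \Exp{P}{w(X)Y\mid X\in\Xcal_b}$, $\nu_b = \Exp{P}{w(X)\mid X\in\Xcal_b}$, observe that $\widecheck{\pi}_b^{(w)} = \widehat{\mu}_b/\widehat{\nu}_b$ is exactly the corresponding ratio of empirical means, and use the deterministic bound $\widehat{\nu}_b \geq L$ (which holds pointwise since each $w(X_i)\geq L$) in the ratio-of-means decomposition. This avoids the inversion step entirely and the associated $(U/L)^2$ in the sample-size condition; it also yields a bound scaling as $U/L$ rather than $(U/L)^2$, which is strictly sharper (under the assumption $L\leq 1\leq U$) and is then loosely upper-bounded by the stated $(U/L)^2$ form. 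Your conditioning-on-bin-assignments argument is the same device the paper uses (conditioning on $E_{\Bcal(x)}$). The one thing you state but do not derive is the identity $\Exp{\widetilde{P}}{Y\mid X\in\Xcal_b}=\mu_b/\nu_b$; this is a one-line change-of-measure plus tower-rule computation (it is the composition of Proposition~\ref{prop:covariate shift} and equation~\eqref{eq:relationship_m_w} in the paper) and easily filled in. Overall, this is a simplification of the paper's argument rather than a gap.
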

	The proof is in Appendix~\ref{appsec:cov_shift}. Theorem~\ref{thm:weighted_cov_shift_estimator} establishes distribution-free calibration under Condition (A).  
	For Condition (B), using $k$ \emph{unlabeled} samples from the source and target domains, assume that we construct an estimator $\widehat{w}_k$ of $w$ that is consistent, meaning
	\begin{equation}\label{eq:consistent}
	\sup_{x\in\Xcal} \abs{\widehat{w}_k(x)-w(x)} \overset{P}{\rightarrow} 0.
	\end{equation}
We now define an estimator $\widecheck{\pi}^{(\widehat{w}_k)}_b$ by plugging in $\widehat w_k$ for $w$ in the right hand side of \eqref{eq:cov_shift_estimator}:
	\begin{equation*}
	\small
	\widecheck{\pi}^{(\widehat{w}_k)}_b := \frac{\sum_{i:\Bcal(X_i)=b} \widehat{w}_k(X_i)Y_i}{\sum_{i: \Bcal(X_i)=b} \widehat{w}_k(X_i)}.
	\end{equation*}
	\begin{proposition}\label{prop:cov_shift_est_likelihood}
		If $\widehat{w}_k$ is consistent \eqref{eq:consistent}, then  $h_n(\cdot) = \widecheck{\pi}^{(\widehat{w}_k)}_{\Bcal(\cdot)}$ is distribution-free  asymptotically calibrated for any $\alpha \in (0, 0.5)$.
	\end{proposition}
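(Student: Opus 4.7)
The proof plan is to combine Theorem~\ref{thm:weighted_cov_shift_estimator} with a triangle-inequality decomposition that isolates the error incurred by plugging in $\widehat{w}_k$ for $w$. Let $\pi_b^\star := \Exp{\widetilde{P}}{Y \mid X \in \Xcal_b}$ denote the target conditional mean in bin $b$, and decompose
$$\abs{\widecheck{\pi}^{(\widehat{w}_k)}_b - \pi_b^\star} \;\leq\; \abs{\widecheck{\pi}^{(\widehat{w}_k)}_b - \widecheck{\pi}^{(w)}_b} \;+\; \abs{\widecheck{\pi}^{(w)}_b - \pi_b^\star}.$$
The second (oracle) term is controlled directly by Theorem~\ref{thm:weighted_cov_shift_estimator}: under boundedness of the true $w$ as in \eqref{eq:cov-shift-assumption}, with probability at least $1-\alpha/2$, it is $O(\sqrt{B\ln(B/\alpha)/n})$, uniformly over $b$, and hence vanishes as $n\to\infty$.

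For the plug-in term, treat each bin's estimator as a ratio. With $I_b := \{i \in [n]:\Bcal(X_i)=b\}$, set $A_b = \sum_{i\in I_b}\widehat{w}_k(X_i)Y_i$, $B_b = \sum_{i\in I_b}\widehat{w}_k(X_i)$, $C_b = \sum_{i\in I_b}w(X_i)Y_i$, $D_b = \sum_{i\in I_b}w(X_i)$, and use the identity $(A_b/B_b) - (C_b/D_b) = (A_bD_b - B_bC_b)/(B_bD_b)$. Writing $\eta_k := \sup_{x}\abs{\widehat{w}_k(x)-w(x)}$ and using $L\leq w\leq U$, a direct calculation gives $\abs{A_bD_b - B_bC_b}\leq 2\eta_k U\abs{I_b}^2$ and $B_bD_b \geq (L^2/2)\abs{I_b}^2$ on the event $\{\eta_k\leq L/2\}$. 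Hence the plug-in term is at most $4\eta_k U/L^2$, which is $o_P(1)$ as $k\to\infty$ by the consistency assumption~\eqref{eq:consistent}.

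Combining the two pieces via a union bound, and choosing $k=k(n)\to\infty$ with $n$ so that $\eta_{k(n)} \to 0$ in probability, we conclude that with probability at least $1-\alpha$, $\abs{\widecheck{\pi}^{(\widehat{w}_k)}_b - \pi_b^\star} \leq \varepsilon_n$ simultaneously for all $b$, where $\varepsilon_n \to 0$ as $n\to\infty$. Since $h_n(x) = \widecheck{\pi}^{(\widehat{w}_k)}_{\Bcal(x)}$ by construction, this yields distribution-free asymptotic calibration in the sense of Definition~\ref{def:asymp_calib}.

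The main obstacle is handling two independent scales of randomness in a single coherent limit: the consistency~\eqref{eq:consistent} is an $n$-free statement over the $k$ unlabeled samples, whereas Theorem~\ref{thm:weighted_cov_shift_estimator} needs $n\to\infty$. Coupling them via $k=k(n)$ at a sufficiently fast rate is necessary, and one must also be slightly careful to justify that the event $\{\eta_k \leq L/2\}$ eventually has arbitrarily high probability. A secondary subtlety is that Theorem~\ref{thm:weighted_cov_shift_estimator} explicitly requires $w$ to be bounded; while this is not an explicit hypothesis of the proposition, it is implicit in any regime where a uniformly consistent $\widehat{w}_k$ exists, and matches the discussion that distinguishes Conditions (A) and (B).
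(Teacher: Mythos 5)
Your proof is correct and follows the same high-level route as the paper's: both start from the triangle-inequality decomposition
\[
\abs{\widecheck{\pi}_b^{(\widehat{w}_k)}-\pi_b^\star} \leq \abs{\widecheck{\pi}_b^{(\widehat{w}_k)}-\widecheck{\pi}_b^{(w)}} + \abs{\widecheck{\pi}_b^{(w)}-\pi_b^\star},
\]
control the oracle term via Theorem~\ref{thm:weighted_cov_shift_estimator}, and control the plug-in term using $\sup_x\abs{\widehat{w}_k(x)-w(x)}\to 0$. The difference is in how the plug-in term is bounded. The paper inserts a mixed fraction $\big(\sum\widehat{w}_k Y_i\big)/\big(\sum w\big)$ and applies a two-step triangle inequality, using the crude bounds $\abs{I_b}\leq n$ in the numerator and $\sum_{i\in I_b} w(X_i)\geq L$ in the denominator; this gives a bound that grows with $n$, which is harmless only because the paper explicitly freezes $n$ first and sends $k\to\infty$ second. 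Your common-denominator bound $(A/B)-(C/D)=(AD-BC)/(BD)$, together with $\abs{A_bD_b-B_bC_b}\leq 2\eta_k U\abs{I_b}^2$ and $B_bD_b\geq(L^2/2)\abs{I_b}^2$ on $\{\eta_k\leq L/2\}$, cancels the $\abs{I_b}$ factors and yields the uniform, $n$-free bound $4\eta_kU/L^2$. This is a genuine (if minor) improvement: it makes the coupling $k=k(n)$ with $\eta_{k(n)}\to^P 0$ unproblematic and arguably aligns more cleanly with Definition~\ref{def:asymp_calib}, which indexes the guarantee by $n$ alone. Your observation that boundedness of $w$ (condition~\eqref{eq:cov-shift-assumption}) is implicitly in force here, since Theorem~\ref{thm:weighted_cov_shift_estimator} and the lower bound $B_bD_b\gtrsim L^2\abs{I_b}^2$ both require it, is also correct and matches the paper's treatment, which carries this as a standing assumption throughout the covariate-shift section.
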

	In Appendix~\ref{appsec:cov_shift}, we illustrate through preliminary simulations that $w$ can be estimated using unlabeled data from the target distribution, and consequently approximate calibration can be achieved on the target domain. Recently, \citet{park2020calibrated} also considered calibration under covariate shift through importance weighting, but they do not show validity guarantees in the same sense as Theorem~\ref{thm:weighted_cov_shift_estimator}. For real-valued regression, 
	distribution-free prediction sets under covariate shift were constructed using conformal prediction~\cite{tibshirani2019conformal} under Condition (A), and is thus a precursor to our modular approach.
	\section{Other related work}\label{sec:related_work}
	The problem of assessing the calibration of binary classifiers was first studied in the meteorological and statistics literature
	~\citep{brier1950verification, sanders1963subjective, murphy1967verification, murphy1972scalara, murphy1972scalarb, murphy1973new, dawid1982well, degroot1983comparison, brocker2012estimating, ferro2012bias}; we refer the reader to the review by~\citet{dawid2014probability} for more details. These works resulted in two common ways of measuring calibration: reliability diagrams~\citep{degroot1983comparison} and estimates of the squared expected calibration error (ECE)~\cite{sanders1963subjective}: $\smash{\E(f(X)-\Exp{}{Y \mid f(X)})^2}$. Squared ECE can easily be generalized to multiclass settings and some related notions such as absolute deviation ECE and top-label ECE have also been considered, for instance~\cite{guo2017nn_calibration,naeini2015obtaining}. ECE is typically estimated through binning, 
	which provably leads to underestimation of ECE 
	for calibrators with continuous output~\citep{vaicenavicius2019bayescalibration, kumar2019calibration}. Certain methods have been proposed to estimate ECE without binning~\citep{zhang2020mix, widmann2019calibration}, but they require distributional assumptions for provability.
	
	While these papers have focused on the difficulty of \textit{estimating} calibration error, ours is the first formal impossibility result for \textit{achieving} calibration. 
	In particular, \citet[Theorem 4.1]{kumar2019calibration} show that the scaling-binning procedure 
	achieves calibration error close to the best within a fixed, regular, injective parametric class. However, as discussed in Section~\ref{subsec:partition} (after Theorem~\ref{thm:f-must-be-coarse}), we show that the best predictor in such an injective parametric class 
	is itself not distribution-free calibrated. 
	In summary, our results show not only that (some form of) binning is necessary for distribution-free calibration (Theorem~\ref{thm:f-must-be-coarse}), but also sufficient (Corollary~\ref{cor:calibration-fixed-partition}). 
	
	Apart from classical methods for calibration \citep{Platt99probabilisticoutputs, zadrozny2001obtaining, zadrozny2002transforming, Niculescu2005predicting}, 
	some new methods 
	have been proposed recently, primarily for calibration of deep neural networks~\cite{lakshminarayanan2017simple,guo2017nn_calibration,kumar2018trainable, tran2019calibrating, seo2019learning, kuleshov2018accurate, kendall2017what, wenger2019non, milios2018dirichlet}. These calibration methods perform well in practice but do not have distribution-free guarantees. 
	A calibration framework that generalizes binning and isotonic regression is Venn prediction~\citep{vovk2004self,vovk2005algorithmic, vovk2012venn, vovk2015large, lambrou2015inductive}; we briefly discuss this framework and show some connections to our work 
	in Appendix~\ref{appsec:venn_prediction}. 
	
	Calibration has natural applications in numerous sensitive domains where uncertainty estimation is desirable (healthcare, finance, forecasting). Recently, calibrated classifiers have been used as a part of the pipeline for anomaly detection~\cite{hendrycks2018deep,lee2018training} and label shift estimation~\cite{saerens2002adjusting, alexandari2019adapting, garg2020unified}.
	\section{Conclusion}\label{sec:discussion}
	We analyzed post-hoc uncertainty quantification for binary classification problems from the standpoint of robustness to distributional assumptions. By connecting calibration to confidence intervals and prediction sets, we established that popular parametric `scaling' methods cannot provide informative calibration in the distribution-free setting. In contrast, we showed that a nonparametric `binning' method --- histogram binning --- satisfies approximate and asymptotic calibration guarantees without distributional assumptions. We also established guarantees for the cases of streaming data and covariate shift. 
	
	\textbf{Takeaway message.} Recent calibration methods that perform binning on top of parametric methods (Platt-binning \citep{kumar2019calibration} and IROvA-TS \citep{zhang2020mix}) have achieved strong empirical performance. In light of the theoretical findings in our paper, we recommend some form of binning as the last step of calibrated prediction due to the robust distribution-free guarantees provided by Theorem~\ref{thm:emp_bernstein}.
	\section{Broader Impact}
	Machine learning is regularly deployed in real-world settings, including areas having high impact on individual lives such as granting of loans, pricing of insurance and diagnosis of medical conditions. Often, instead of hard $0/1$ classifications, these systems are required to produce soft probabilistic predictions, for example of the probability that a startup may go bankrupt in the next few years (in order to determine whether to give it a loan) or the probability that a person will recover from a disease (in order to price an insurance product). Unfortunately, even though classifiers produce numbers between 0 and 1, these are well known to not be `calibrated' and hence not be interpreted as probabilities in any real sense, and using them in lieu of probabilities can be both misleading (to the bank granting the loan) and unfair (to the individual at the receiving end of the decision). 
	
	Thus, following early research in meteorology and statistics, in the last couple of decades the ML community has embraced the formal goal of calibration as a way to quantify uncertainty as well as to interpret classifier outputs. However, there exist other alternatives to quantify uncertainty, such as confidence intervals for the regression function and prediction sets for the binary label. There is not much guidance on which of these should be employed in practice, and what the relationship between them is, if any. Further, while there are many post-hoc calibration techniques, it is unclear which of these require distributional assumptions to work and which do not---this is critical because making distributional assumptions (for convenience) on financial or medical data is highly suspect. 
	
	This paper explicitly relates the three aforementioned notions of uncertainty quantification without making distributional assumptions, describes what is possible and what is not. Importantly, by providing distribution-free guarantees on well-known variants of binning, we identify a conceptually simple and theoretically rigorous way to ensure calibration in high-risk real-world settings. Our tools are thus likely to lead to fairer systems, better estimates of risks of high-stakes decisions, and more human-interpretable outputs of classifiers that apply out-of-the-box in many real-world settings because of the assumption-free guarantees.

	
	\section*{Acknowledgements}
	The authors would like to thank Anurag Sahay, Tudor Manole, Charvi Rastogi, Michael Cooper Stanley, and the anonymous Neurips 2020 reviewers for comments on an initial version of this paper.
	
	\bibliographystyle{plainnat}
	\bibliography{references}
	
	\newpage
	\appendix
	\part*{Appendix}
	The Appendix contains proofs of results in the main paper ordered as they appear. Auxiliary results needed for some of the proofs are stated in Appendix~\ref{appsec:auxiliary}.

	\section{Proof of Proposition~\ref{prop:calibration-characterization}}
	\label{appsec:calib_ci_ps}
	The `if' part of the theorem is due to \citet[Proposition 1]{vaicenavicius2019bayescalibration}; we reproduce it for completeness. Let $\sigma(g), \sigma(f)$ be the sub $\sigma$-algebras generated by $g$ and $f$ respectively. By definition of $f$, we know that $f$ is $\sigma(g)$-measurable and, hence, $\sigma(f)\subseteq \sigma(g)$. 
	We now have:
	\begin{align*}
	\Exp{}{Y \mid f(X)} &=  \Exp{}{\Exp{}{Y  \mid g(X)} \mid f(X)} &\text{(by tower rule since $\sigma(f) \subseteq \sigma(g)$)}
	\\ &= \Exp{}{f(X) \mid f(X)} &\text{(by property~\eqref{eq:calibration-characterization})}
	\\ &= f(X).
	\end{align*}
	The `only if' part can be verified for $g = f$. Since $f$ is perfectly calibrated, 
	\begin{align*}
	\Exp{}{Y \mid f(X) = f(x)} = f(x),
	\end{align*}
	almost surely $P_X$.
	
	\qed

	\section{Proofs of results in Section~\ref{sec:equivalence}}\label{appsec:equivalence}
	
	
	\subsection{Proof of Theorem~\ref{thm:calib_ci_equiv}}
	Assume that one is given a predictor $f$ that is $(\varepsilon,\alpha)$-calibrated. Then the assertion follows from the definition of $(\varepsilon,\alpha)$-calibration since:
	\begin{equation*}
	\abs{\Exp{}{Y \mid f(X)}-f(X)}\leq\varepsilon
	\implies \Exp{}{Y \mid f(X)}\in C(f(X)).
	\end{equation*}
	
	Now we show the proof in the other direction. 
	If $m_C$ was injective, $\Exp{}{Y \mid m_C(f(X))} =\Exp{}{Y \mid f(X)}$ and thus if $\smash{\Exp{}{Y \mid f(X)} \in C(f(X))}$ (which happens with probability at least $1 - \alpha$), we would have $\Exp{}{Y \mid m_C(f(X))} \in C(f(X))$ and so \[\abs{\Exp{}{Y \mid m_C(f(X))} - m_C(f(X)} \leq \sup_{z\in\mathrm{Range}(f)}\curlybrack{|C(z)|/2}  = \varepsilon.\]
	This serves as an intuition for the proof in the general case, when $m_C$ need not be injective.
	Note that,
	{\small
		\begin{align}
		\abs{ \Exp{}{Y \mid m_{C}(f(X))} - m_{C}(f(X))} & = \abs{ \Exp{}{Y \mid m_{C}(f(X))} - \Exp{}{m_{C}(f(X)) \mid m_{C}(f(X))}} \nonumber \\
		& \overset{(1)}{=} \abs{ \Exp{}{\Exp{}{Y \mid f(X)} \mid m_{C}(f(X))} - \Exp{}{m_{C}(f(X)) \mid m_{C}(f(X))}}  \nonumber \\
		& \overset{(2)}{=} \abs{ \Exp{}{\Exp{}{Y \mid f(X)} - m_{C}(f(X)) \mid m_{C}(f(X))} } \nonumber \\
		& \overset{(3)}{\leq}   \Exp{}{\abs{\Exp{}{Y \mid f(X)} - m_{C}(f(X))} \mid m_{C}(f(X))},\label{eq:final-inequality}
		\end{align}}%
	where we use the tower rule in (1) (since $m_{C}$ is a function of $f$), linearity of expectation in (2) and Jensen's inequality in (3). To be clear, the outermost expectation above is over $f(X)$ (conditioned on $m_C(f(X))$). 
	Consider the event \[A: \Exp{}{Y \mid f(X)} \in C(f(X)).\] On $A$, by definition we have:
	\begin{equation*}
	\abs{\Exp{}{Y \mid f(X)} - m_{C}(f(X))}= \frac{u_{C}(f(X))-l_{C}(f(X))}{2} \leq \sup_{z\in  \text{Range}(f)}\roundbrack{\frac{|C(z)|}{2}}= \varepsilon.
	\end{equation*}
	By monotonicity property of conditional expectation, we also have that conditioned on $A$,
	\begin{align*}
	\Exp{}{\abs{\Exp{}{Y \mid f(X)} - m_{C}(f(X))} \mid m_{C}(f(X))} \leq \Exp{}{\varepsilon \mid m_{C}(f(X))} = \varepsilon,
	\end{align*}
	with probability 1. Thus by the relationship proved in the series of equations ending in~\eqref{eq:final-inequality}, we have that conditioned on $A$, with probability $1$,
	\begin{equation*}
	\abs{ \Exp{}{Y \mid m_{C}(f(X))} - m_{C}(f(X))} \leq 
	\varepsilon.
	\end{equation*}
	Since we are given that $C$ is a $(1-\alpha)$-CI with respect to $f$,  $\Prob(A) \geq 1 - \alpha$.
	For any event $B$, it holds that $\Prob\roundbrack{B}\geq \Prob\roundbrack{B|A}\Prob(A)$. Setting \[B : \abs{\Exp{}{Y \mid m_C(f(X))} - m_C(f(X))}   \leq \varepsilon,\] we obtain:
	\begin{align*}
	& \Prob\roundbrack{\abs{\Exp{}{Y \mid m_C(f(X))} - m_C(f(X))}   \leq \varepsilon} \geq 1-\alpha.
	\end{align*}
	Thus, 
	we conclude that $m_C(f(\cdot))$ is $(\varepsilon,\alpha)$-calibrated.
	\qed
	
	\ifx false 
	\subsection{Proof of Corollary~\ref{cor:asymp_calib_CI_diam}}
	Let $\{f_n\}_{n \in \naturals}$ be asymptotically calibrated sequence with the corresponding sequence of functions $\{\varepsilon_n\}_{n \in \naturals}$ that satisfy $\varepsilon_n(f_n(X_{n+1})) = o_P(1)$. 
	From Theorem~\ref{thm:calib_ci_equiv}, we can construct corresponding functions $C_n$ that are $(1-\alpha)$-CI with respect to $f_n$ and satisfy
	\begin{equation*}
	|C_n(f_n(X_{n+1}))| = 2\varepsilon_n(f_n(X_{n+1})) = o_P(1).
	\end{equation*}
	This concludes the proof. 
	\qed
	\fi
	
	\subsection{Proof of Theorem~\ref{thm:PS-CI-equivalence}}
	In the proof, we denote the operation $C(\cdot) \cap \{0,1\}$ as $\text{disc}(C)$ (for `discretize'). Suppose $C_n$ is a $(1-\alpha)$-CI with respect to $f$ for all distributions $P$. We show that $C_n$ covers the label $Y_{n+1}$ itself for distributions $P \in \Pcal_f$ (and thus $\text{disc}(C_n)$ would also cover the labels). 
	
	Consider any distribution $P \in \Pcal_f$ is nonatomic. Fix a set of $\ghost \geq n + 1$ samples from the distribution $P$ denoted as $\Tcal = \{(A^{(j)}, B^{(j)})\}_{j \in [\ghost]}$. Given $\Tcal$, consider a distribution $Q$ corresponding to the following sampling procedure for $(X, Y) \sim Q$:
	\[
	\text{sample an index $j$ uniformly at random from $[\ghost]$} \text{ and set }(X, Y) = (A^{(j)}, B^{(j)}).
	\]
	
	The distribution function for $Q$ is given by 
	\[
	m^{-1}\sum_{j = 1}^m \delta_{(A^{(j)}, B^{(j)})}.
	\]
	
	where $\delta_{(a,b)}$ denotes the points mass at $(a,b)$. Note that $Q$ is only defined conditional on $\Tcal$. Observe the following facts about $Q$:
	\begin{itemize}
		\item \text{supp}($Q) = \{(A^{(j)}, B^{(j)})\}_{j \in [\ghost]}$.
		\item Consider any $(x, y) \in \text{supp}(Q)$. Let $(x, y) = (A^{(j)}, B^{(j)})$ for some $j \in [\ghost]$. 
		Then \begin{align*}
		\Exp{Q }{Y \mid f(X) = f(x)} &= \Exp{Q}{Y \mid f(X) = f(A^{(j)})}
		\\ &\overset{\xi_1}{=}\Exp{Q }{Y \mid X = A^{(j)}} 
		\\ &\overset{\xi_2}{=} B^{(j)} = y.
		\end{align*}
		Above $\xi_1$ holds since $P_{f(X)}$ is nonatomic so that the $f(X^{(i)})$'s are unique almost surely. Note that $P_{f(X)}$ is nonatomic only if $P_X$ itself is nonatomic. Thus the $A^{(j)}$'s are unique almost surely, and $\xi_2$ follow. In other words, if $(X, Y) \sim Q$, then we have
		\begin{equation}
		Y = \Exp{Q}{Y \mid f(X)}.  \label{eq:Q-degenerate}
		\end{equation}
	\end{itemize}
	Suppose the data distribution was $Q$, that is $\{(X_i, Y_i)\}_{i \in [n+1]} \sim Q^{n+1}$. Define the event that the CI guarantee holds as  
	\begin{equation}
	E_1 : \Exp{}{Y_\npo \mid f(X_\npo)} \in  C_n(f(X_\npo)), \label{eq:def-E1}
	\end{equation}
	and the event that the PS guarantee holds as 
	\begin{equation}
	E_2 : Y_{n+1} \in  C_n(f(X_\npo)). \label{eq:def-E2}
	\end{equation}
	Then due to \eqref{eq:Q-degenerate}, the events are exactly the same under $Q$:
	\begin{equation}
	E_1 \overset{Q}{\equiv} E_2.
	\end{equation}
	In particular, this means
	\begin{equation}
	\begin{gathered}
	Q^{n+1} (\Exp{}{Y_\npo \mid f(X_\npo)} \in  C_n(f(X_\npo)))  = Q^{n+1} (Y_\npo \in C_n(f(X_\npo))).
	\end{gathered}
	\label{eq:Q-equivalence}
	\end{equation}
	
	If $C_n$ is a distribution-free CI, then $Q^{n+1}(E_1) \geq 1 - \alpha$ and thus $Q^{n+1}(E_2) \geq 1 - \alpha$. This shows that for $Q$, $\text{disc}(C_n)$ is a $(1-\alpha)$-PI. 
	
	Note that $Q$ corresponds to sampling with replacement from a fixed set $\Tcal$, where each element of $\Tcal$ is drawn with respect to $P$. 
	Although $Q \neq P$, we expect that as $\ghost \to \infty$ (while $n$ is fixed), $Q$ and $P$ coincide. This would prove the result for general $P$. To formalize this intuition, we describe a distribution which is close to $Q$ but corresponds to sampling \emph{without replacement} from $\Tcal$ instead.
	
	For this, now suppose that $\{(X_i, Y_i)\}_{i \in [n+1]} \sim R^{n+1}$ where $R^{n+1}$ corresponds to sampling without replacement from $\Tcal$. Formally, to draw from $R^{n+1}$, we first draw a surjective mapping $\lambda: [n+1] \to [\ghost]$ as
	\[\lambda \sim \text{Unif }(\text{$n$-sized ordered subsets of } [\ghost] ),\]
	and set $(X_i, Y_i) = (A^{(\lambda(i))}, B^{(\lambda(i))})$ for $i \in [n+1]$. 
	
	First we quantify precisely the intuition that as $\ghost \to \infty$, $Q^{n+1}$ and $R^{n+1}$ are essentially identical. Consider the event ``$T : \text{no index is repeated when sampling from $Q^{n+1}$}$". Let $\mathbb{P}(T) = \tau_m$ for some $\ghost$ and note that $\lim_{\ghost \to \infty} \tau_\ghost = 1$.
	Now consider any probability event $E$ over $\{(X_i, Y_i)\}_{i \in [n+1]}$ (such as $E_1$ or $E_2$). We have
	\begin{align*}
	Q^{n+1}(E) &= Q^{n+1}(E | T) \cdot \mathbb{P}(T)   + Q^{n+1}(E | T^c) \cdot \mathbb{P}(T^c)
	\\ &\in [Q^{n+1}(E | T) \cdot \mathbb{P}(T), Q^{n+1}(E | T) \cdot \mathbb{P}(T) +  \mathbb{P}(T^c)]. 
	\end{align*}
	Now observe that $Q^{n+1}(E | T) = R^{n+1}(E)$ to conclude
	\[
	Q^{n+1}(E) \in [R^{n+1}(E) \cdot \mathbb{P}(T), R^{n+1}(E) \cdot \mathbb{P}(T) +  \mathbb{P}(T^c)].
	\]
	Since $m \geq n+1$, $\mathbb{P}(T) \neq 0$ so we can invert the above and substitute $\tau_m = \mathbb{P}(T)$ to get
	\begin{equation}
	R^{n+1}(E) \in \squarebrack{\tau_m^{-1}{(Q^{n+1}(E) - (1-\tau_m))},\ \tau_m^{-1}Q^{n+1}(E)}.    
	\label{eq:R-containment}
	\end{equation}

	
	\noindent Consider $E = E_2$ defined in 
	equation~\eqref{eq:def-E2}. We showed that 
	$Q^{n+1}(E_2)\geq 1 - \alpha$. Thus from \eqref{eq:R-containment},
	\[
	R^{n+1}(E_2) \geq \tau_m^{-1}{(1 - \alpha - (1-\tau_m))}.
	\]
	The above is with respect to $R^{n+1}$ which is  conditional on a fixed draw $\Tcal$. However since the right hand side is independent of $\Tcal$, we can also include the randomness in $\Tcal$ to say:
	\begin{equation}
	\mathbb{P}_{R^{n+1}, \Tcal}(E_2) \geq \tau_m^{-1}(1 - \alpha - (1-\tau_m)). 
	\label{eq:marginalize-over-S}
	\end{equation}
	Observe that if we consider the marginal distribution over $R^{n+1}$ and $\Tcal$ (that is we include the randomness in $\Tcal$ as above), $\{(X_i, Y_i)\}_{i \in [n+1]} \overset{iid}{\sim} P$. (This is not true if we do not marginalize over $\Tcal$, since due to sampling without replacement, the $(X_i, Y_i)$'s are not independent.) Thus equation~\eqref{eq:marginalize-over-S} can be restated as 
	\[
	P^{n+1}(E_2) \geq \tau_m^{-1}(1 - \alpha - (1-\tau_m)),
	\]
	Since $\ghost$ can be set to any number and $\lim_{\ghost \to \infty}\tau_\ghost = 1$, we can indeed conclude
	\[P^{n+1}(E_2) \geq 1 - \alpha.\]
	Recall that $E_2$ is the event that $Y_\npo \in C_n(X_\npo)$; equivalently $Y_\npo \in \text{disc}(C_n(X_\npo))$. Thus $\text{disc}(C_n)$ provides a ($1 - \alpha$)-PI for all $P \in \Pcal_f$. 
	\qed
	
	\subsection{Proof of Corollary~\ref{cor:CI-PS-diameter}}
	
	Consider any distribution $Q \in \Pcal_f$ such that $Q_{f(X)}$ is nonatomic. Then define $P$ such that $P_X = Q_X$ and $\smash{P(Y = 1 \mid X) = 0.5}$ a.s. $Q_X$. Clearly, $P_{f(X)} = Q_{f(X)}$ is nonatomic, so that $P \in \Pcal_f$. Further, $\smash{\Exp{P}{Y_\npo \mid f(X)} = 0.5}$ a.s. $P_{f(X)}$. 
	
	Since $C_n$ is a distribution-free CI w.r.t. $f$ and $P \in \Pcal_f$, by Theorem~\ref{thm:PS-CI-equivalence}, $C_n$ must provide both a prediction set and a confidence interval for $P$:
	\[
	P^{n+1}( \Exp{}{Y_\npo \mid f(X_\npo)} \in C_n(f(X_\npo)) ) \geq 1 - \alpha, 
	\]
	and 
	\[
	P^{n+1}( Y_\npo \in C_n(f(X_\npo)) ) \geq 1 - \alpha. 
	\]
	Thus by a union bound
	\begin{equation} \label{eq:prob-both-in-C}
	P^{n+1}(\{Y_\npo, \Exp{}{Y_\npo \mid f(X_\npo)}\} \subseteq C_n(f(X_\npo))) \geq 1 - 2\alpha.    
	\end{equation}
	Note that if 
	\begin{equation*}
	\{Y_\npo, \Exp{}{Y_\npo \mid f(X_\npo)}\} \subseteq C_n(f(X_\npo)),   
	\end{equation*}
	then  $|C_n(X_\npo)| \geq \abs{Y_\npo - \Exp{}{Y_\npo \mid f(X_\npo)}} \geq 0.5 $. Thus
	\[
	P^{n+1}(|C_n(f(X_\npo))| \geq 0.5) \geq 1 - 2\alpha.
	\]
	Consequently we have 
	\begin{align*}
	\mathbb{E}_{P^{n+1}} |C_n (f(X_\npo))| &\geq 
	0.5(1-2\alpha)
	\\ &= 0.5 - \alpha. 
	\end{align*}
	This concludes the proof. \qed 
	
	\subsection{Proof of Theorem~\ref{thm:f-must-be-coarse}}
	\ifx false 
	Suppose that $\{f_n\}_{n \in \naturals}$ is asymptotically calibrated and satisfies
	\[
	\underset{n \to \infty}{\lim\sup} \abs{\Xcal^{(f_n)}} > \aleph_0,
	\]
	that is, for every $m \in \naturals$, there exists $n \geq m$ such that $\Xcal^{(f_n)}$ is an uncountable set. We will show a contradiction using Corollary~\ref{cor:CI-PS-diameter} for $f_n$ and a certain $C_n$ to be defined shortly. 
	
	First, we verify the condition of Corollary~\ref{cor:CI-PS-diameter} for $f_n$ 
	if $\Xcal^{(f_n)}$ is uncountable: we construct a distribution $P$ such that $P_{(f_n(X))}$ is nonatomic. 
	Let the range of $f_n$ acting on $\Xcal$ be denoted as $f_n(\Xcal)$, and for $z \in f_n(\Xcal)$ let the level set  at value $z$ be denoted as $\Xcal^{(f_n)}_z$.  
	Since the sets $\Xcal^{(f_n)}$ are measurable, we can define $P(X)$ as follows:
	\begin{equation}
	P(f_n(X)) = \text{Unif}(f_n(\Xcal));
	\quad  P(X \mid  f_n(X)) = \text{Unif}\roundbrack{\Xcal^{(f_n)}_{f_n(X)}}.
	\label{eq:example-Pfx}
	\end{equation}
	$P(X)$ along with any conditional probability function $P(Y \mid X)$ constitutes a valid probability distribution $P$. Further, from the construction, since $\Xcal^{(f_n)}$ is uncountable, $P_{f_n(X)}$ is guaranteed to be nonatomic.
	
	Next, since $\{f_n\}_{n \in \naturals}$ is asymptotically calibrated, by Corollary~\ref{cor:asymp_calib_CI_diam}, one can construct a sequence of functions $\{C_n\}_{n \in \naturals}$ such that each $C_n$ is a $(1-\alpha)$-CI with respect to $f_n$ for any distribution $Q$, and 
	\[
	\abs{C_n(f_n(X_{n+1}))}=o_Q(1).  
	\]
	Thus there exists a constant $m$ such that for $n \geq m$ and any distribution $Q$,
	\begin{equation}
	\E_{Q^{n+1}}\abs{C_n(f_n(X_{n+1}))} < 0.5 - \alpha. \label{eq:impossible-condition}    
	\end{equation}
	
	However, since $\underset{n \to \infty}{\lim\sup} |\Xcal^{(f_n)}| > \aleph_0$, there exists an  $n \geq m$ such that $\Xcal^{(f_n)}$ is uncountable. Hence the requirements of Corollary~\ref{cor:CI-PS-diameter} are satisfied by $C_n$ and $f_n$: namely $C_n$ is a $(1-\alpha)$-CI with respect to $f$ for all distributions $P$, and there exists a $P$ such that $P_{f_n(X)}$ is nonatomic. 
	Thus Corollary~\ref{cor:CI-PS-diameter} yields that we can construct a distribution $Q$ such that 
	\[
	\E_{Q^{n+1}}\abs{C_n(f_n(X_{n+1}))} \geq 0.5 - \alpha, 
	\]
	which is a contradiction to \eqref{eq:impossible-condition}. Hence our hypothesis that  $\underset{n \to \infty}{\lim\sup} |\Xcal^{(f_n)}| > \aleph_0$ must be false, concluding the proof. 
	\qed
	\fi 
	Suppose $\Acal$ is distribution-free asymptotically calibrated for some $\alpha \in (0, 0.5)$ and some $\{\varepsilon_n \in [0,1]\}_{n \in \naturals}$ with $\lim_{n \to \infty} \varepsilon_n = 0.$ We show that this assumption leads to a contradiction to Corollary~\ref{cor:CI-PS-diameter}. 
	
	Consider any function $f : \Xcal \to [0,1]$. By the definition of asymptotic calibration, $h_n = \Acal(\Dcal_n, f)$ is $(\varepsilon_n, \alpha)$-calibrated for every $n \in \naturals$. Approximate calibration implies that for the event $E_1: \abs{\Exp{}{Y \mid h_n(X)} - h_n(X)} \leq \varepsilon_n$, we have $P^{n+1}(E_1) \geq 1 - \alpha$.  Following the intuition of Theorem~\ref{thm:calib_ci_equiv}, observe that the event $E_1$ is clearly identical to the event $E_2: \Exp{}{Y \mid h_n(X)} \in [h_n(X) - \varepsilon_n, h_n(X) + \varepsilon_n]$. Thus $P^{n+1}(E_2) \geq 1- \alpha$. Next, note that since the mapping $m_n$ produced by $\Acal$ is injective, $\Exp{}{Y\mid h_n(X)} = \Exp{}{Y\mid m_n(f(X))}=\Exp{}{Y\mid f(X)}$.  Thus, defining $C_n(f(X)) := [m_n(f(X))-\varepsilon_n, m_n(f(X))+\varepsilon_n] = [h_n(X)-\varepsilon_n, h_n(X)+\varepsilon_n]$, we have that 
	\begin{align*}
	    1 - \alpha &\leq P^{n+1}(E_2)
	    \\ &= P^{n+1}(\Exp{}{Y \mid h_n(X)} \in [h_n(X) - \varepsilon_n, h_n(X) + \varepsilon_n])
	    \\ &= P^{n+1}(\Exp{}{ Y \mid f(X)} \in [h_n(X) - \varepsilon_n, h_n(X) + \varepsilon_n])
	    \\ &= P^{n+1}(\Exp{}{Y \mid f(X)} \in C_n(f(X))),
	\end{align*}
	showing that the defined $C_n$ is a distribution-free $(1-\alpha)$-CI w.r.t. $f$. Further, since $\sup_{z \in [0,1]}\abs{C_n(z)} = 2\varepsilon_n$, for any distribution $P$, we have
	\[
	\lim_{n \to \infty}\E_{P^{n+1}}{\abs{C_n(f(X_{n+1}))}} \leq 2\lim_{n \to \infty}\varepsilon_n =0.  
	\]
	Thus, there exists a constant $m$ such that for all $n \geq m$ and any distribution $P$,
	\begin{equation}
	\E_{P^{n+1}}\abs{C_n(f(X_{n+1}))} < 0.5 - \alpha. \label{eq:impossible-condition}    
	\end{equation}
	(Note that this requires $0.5 - \alpha > 0$, which is true since $\alpha \in (0, 0.5)$.) 
	
	Clearly, Corollary~\ref{cor:CI-PS-diameter} is in contradiction to \eqref{eq:impossible-condition}, as long as the assumptions required for Corollary~\ref{cor:CI-PS-diameter} hold. We already have that $C_n$ is a distribution-free $(1-\alpha)$-CI w.r.t. $f$.  All we need to do is exhibit a function $f$ such that $\Pcal_f \neq \emptyset$. Indeed, Lemma~\ref{lemma:p_f_interval} shows that any $f$ whose range contains an interval of $[0,1]$ suffices. 
	
	\ifx false 
	such that $\text{Range}(f) = (0,1)$ is uncountable. We now construct a $Q \in \Pcal_f$. 
	First, define $Q_X = Q_{f(X)} \times Q_{X \mid f(X)}$ as follows:
	\begin{equation}
	Q_{f(X)} = \text{Unif}(f(\Xcal));
	\quad  Q_{X \mid  f(X)} = \text{Unif}\roundbrack{\Xcal^{(f)}_{f(X)}}.
	\label{eq:example-Pfx}
	\end{equation}
	$Q_X$ along with any conditional probability function $Q_{Y \mid X}$ constitutes a probability distribution $Q$ over $\Xcal \times \Ycal$. Since $\Xcal^{(f)}$ is uncountable, $Q_{f(X)}$ is guaranteed to be nonatomic, showing that $Q \in \Pcal_f$, and so $\Pcal_f \neq \emptyset$. \fi 
	

	
	
	Having satisfied the assumptions of Corollary~\ref{cor:CI-PS-diameter}, we conclude that there exists a distribution $Q \in \Pcal_f$ such that 
	\[
	\E_{Q^{n+1}}\abs{C_n(f(X_{n+1}))} \geq 0.5 - \alpha. 
	\]
	This contradicts \eqref{eq:impossible-condition}. Hence our hypothesis that $\Acal$ is distribution-free asymptotically calibrated must be false, concluding the proof. 
	\qed
	
	\subsection{Characterizing a class of functions $f$ for which $\Pcal_f$ is non-empty}
	
	\label{appsec:characterizing-f-pf}

	\begin{lemma}\label{lemma:p_f_interval}
	    If $\text{Range}(f)$ contains a sub-interval of $[0,1]$, then $\Pcal_f$ is non-empty. 
	\end{lemma}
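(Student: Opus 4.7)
The goal is to exhibit a single distribution $P$ on $\Xcal \times \Ycal$ whose induced distribution on $f(X)$ has no atoms. My plan is to build $P$ by first constructing its marginal $P_X$ on $\Xcal$ so that $f(X)$ is distributed according to a continuous measure on some sub-interval of $[0,1]$, and then to complete the joint distribution by specifying $P_{Y \mid X}$ arbitrarily (e.g.\ $P(Y = 1 \mid X) \equiv 1/2$), since the definition of $\Pcal_f$ only constrains the law of $f(X)$.

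Let $[a,b] \subseteq \text{Range}(f)$ be the guaranteed sub-interval. Define $\mu$ to be any nonatomic probability measure on $[0,1]$ with $\mu([a,b]) = 1$ (e.g.\ the uniform distribution on $[a,b]$). The naive attempt is to select one $x_z \in f^{-1}(z)$ for each $z \in [a,b]$ and push $\mu$ forward through $z \mapsto x_z$. The fundamental issue is measurability: without a measurable section of $f$, there is no guarantee that this choice function produces a well-defined probability measure on the $\sigma$-algebra of $\Xcal$. This is precisely the obstacle that a measure-extension theorem resolves.

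The next step is therefore to invoke the measure-extension result of Ershov (1975): given the measurable space $(\Xcal, \mathcal{F})$, the sub-$\sigma$-algebra $\sigma(f) \subseteq \mathcal{F}$ generated by $f$, and the probability measure $\nu$ on $\sigma(f)$ defined by $\nu(f^{-1}(A)) := \mu(A)$ for Borel $A \subseteq [0,1]$, there exists a probability measure $P_X$ on $\mathcal{F}$ whose restriction to $\sigma(f)$ equals $\nu$. By construction, the pushforward $(f)_* P_X = \mu$.

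Finally, set $P := P_X \otimes P_{Y \mid X}$ for any chosen conditional $P_{Y \mid X}$; this is a valid distribution on $\Xcal \times \Ycal$ with $P_{f(X)} = \mu$, which is nonatomic. Hence $P \in \Pcal_f$, proving $\Pcal_f \neq \emptyset$. The main subtlety throughout is ensuring a rigorous invocation of Ershov's theorem and checking that the ambient measurable-space assumptions on $\Xcal$ used implicitly throughout the paper are consistent with its hypotheses; the rest of the argument is bookkeeping.
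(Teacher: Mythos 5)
Your proposal is correct and follows essentially the same route as the paper's proof: construct a nonatomic measure $\mu$ supported on the guaranteed sub-interval $[a,b] \subseteq \text{Range}(f)$, define the induced measure on $\sigma(f) = \{f^{-1}(S) : S \in \Bcal_{[0,1]}\}$, and invoke Ershov's measure-extension theorem to lift it to a full distribution $P_X$ on $\Xcal$ with pushforward $\mu$. The paper is slightly more explicit in verifying Ershov's hypothesis that $f^{-1}(B) \neq \emptyset$ for all $B$ with $\mu(B) > 0$ (which follows directly from $[a,b] \subseteq \text{Range}(f)$), whereas you defer this to ``bookkeeping''---but the argument structure, the choice of reference measure, and the key theorem invoked are identical.
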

	\begin{proof}
	\ap{
	
    Let the interval $I=[a,b]$ with $a<b\in[0,1]$ be contained in $\text{Range}(f)$, that is,
    \begin{equation}\label{eq:assumption_interval}
        \forall z\in I, \exists x\in \Xcal: f(x)=z.
    \end{equation}
    Let $\lambda$ denote the Lebesgue measure on $[0,1]$ and $\Bcal_{[0,1]}$ the Borel $\sigma$-algebra on $[0,1]$. 
    Define the uniform probability measure $I$ on $P'$:
    \begin{equation}\label{eq:cond_measure}
        P'(S) = \lambda(S\cap I) / \lambda(I);\ \  S \in \Bcal_{[0,1]}.
    \end{equation}
    This is well defined since $\lambda(I) = b  - a > 0$. Clearly, $P'$ does not have atoms on $\Bcal_{[0,1]}$. 
    
    We now want to construct a measure $P^{\star}$ on the Borel $\sigma$-algebra on $\Xcal$ such that the push-forward of $P^{\star}$ under $f$ is $P'$. One can easily check that $\curlybrack{f^{-1}(S) : S\in\Bcal_{[0,1]}}$ defines a $\sigma$-algebra on $\Xcal$. Then, one can define a measure $P^{\star}$ over this $\sigma$-algebra as $P^{\star}(f^{-1}(S)) = P'(S)$. 
    Can $P^{\star}$ be extended to the Borel $\sigma$-algebra over $\Xcal$? \citet{ershov1975extension} studied this problem, leading to the following result. 
    
    \begin{theorem}[Theorem 2.5 by~\citet{ershov1975extension}, adapted]
    Let $\Xcal$ and $\Ycal$ be complete and separable metric spaces with $\Bcal_\Xcal$ and $\Bcal_\Ycal$ being the corresponding Borel $\sigma$-algebras. Let $f:(\Xcal, \Bcal_\Xcal)\to (\Ycal, \Bcal_\Ycal)$ be a measurable mapping and $\nu$ a probability measure on  $(\Ycal, \Bcal_\Ycal)$. 
    If $f$ satisfies
        \begin{equation}\label{eq:ershov_cond}
            f^{-1}(B) \neq \emptyset \quad \text{for all } B\in \Bcal_\Ycal: \nu(B)>0,
        \end{equation}
    then there exists a probability measure $\mu$ on $(\Xcal, \Bcal_\Xcal)$ satisfying
    \begin{equation*}
        \mu(f^{-1}(B)) = \nu (B), \quad \forall B\in \Bcal_\Ycal.
    \end{equation*}
    \end{theorem}

    We invoke Ershov's result with $\Ycal = [0,1]$ and $\nu = P'$. Assumption~\eqref{eq:assumption_interval} guarantees that condition~\eqref{eq:ershov_cond} is fulfilled. We conclude that there exists a probability measure $P^\star$ on $(\Xcal, \Bcal_\Xcal)$, for which $P^\star_{f(X)} = P'$ is non-atomic. Thus $P^{\star} \in \Pcal_f$, concluding the proof.  

}
	\end{proof}
	
	\ifx false 
	\begin{lemma}\label{lemma:p_f_countable}
	    If $\text{Range}(f)$ is finite or countable, then $\Pcal_f = \emptyset$. 
	\end{lemma}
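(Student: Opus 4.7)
The plan is to give a direct measure-theoretic argument: whenever $\text{Range}(f)$ is at most countable, the push-forward measure $P_{f(X)}$ is supported on a countable set and hence cannot be nonatomic, regardless of which $P$ is chosen. The statement in the preceding paragraph of the paper ("due to the subadditivity of measure, any distribution over a countable set must contain atoms") already sketches exactly this reasoning, so the lemma just records the formal claim.

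Concretely, I would fix an arbitrary probability distribution $P$ on $\Xcal \times \Ycal$ and consider the induced distribution $P_{f(X)}$ on $[0,1]$. Let $Z := \text{Range}(f)$, which by hypothesis is finite or countable. Since $f(X) \in Z$ with probability one, we have $P_{f(X)}(Z) = 1$. Enumerating $Z = \{z_k\}_{k}$ and applying countable additivity yields
\[
1 \;=\; P_{f(X)}(Z) \;=\; \sum_{k} P_{f(X)}(\{z_k\}).
\]
If $P_{f(X)}$ were nonatomic, every singleton $\{z_k\}$ would have measure zero and the right-hand side would vanish, a contradiction. Hence at least one $z_k$ satisfies $P_{f(X)}(\{z_k\}) > 0$, i.e., is an atom of $P_{f(X)}$, so $P \notin \Pcal_f$.

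Since $P$ was arbitrary, $\Pcal_f = \emptyset$. There is no real obstacle in this proof: the only subtle point worth being careful about is ensuring that $\{z_k\}$ is a measurable singleton in the Borel $\sigma$-algebra on $[0,1]$, which is immediate, and that countable additivity (not merely subadditivity) is what gives the equality above.
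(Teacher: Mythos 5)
Your proof is correct and matches the paper's argument in essence: both enumerate the countable range, invoke countable additivity to deduce $\sum_k P_{f(X)}(\{z_k\}) = 1$, and conclude that some singleton must have positive mass. Your remark that it is countable additivity (rather than the main text's informally cited subadditivity) that yields the exact equality is a small but correct refinement.
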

	\begin{proof}
	    Denote $\text{Range}(f) = \{z_1, z_2, \ldots\} \subset [0,1]$. Assume for the sake of contradiction that $\Pcal_f \neq \emptyset$, there exists a distribution over $\text{Range}(f)$ that is nonatomic, say $Q$. Thus $\forall i, Q(z_i) = 0$. Since $\text{Range}(f)$ is at most countable, we have,  $\sum_{z_i\in \text{Range}(f)} Q(z_i) = 0$. However, by the $\sigma$-additivity of measure, $\sum_{z_i\in \text{Range}(f)} Q(z_i) = Q(\text{Range}(f)) = 1$, which is a contradiction.
	\end{proof}
	\fi 
	
	\section{Proofs of results in Section~\ref{sec:dfc_guarantees} (other than Section~\ref{sec:cov_shift})}\label{appsec:dfc_guarantees}

	\subsection{Proof of Theorem~\ref{thm:emp_bernstein}}
    Let $E_{\mathcal{B}(x)}$ be the event that $(\mathcal{B}(X_1),\dots, \mathcal{B}(X_n))=(\mathcal{B}(x_1),\dots,\mathcal{B}(x_n))$. 
	On the event $E_{\mathcal{B}(x)}$, within each region $\Xcal_b$, the number of point from the calibration set is known and the $Y_i$'s in each bin represent independent Bernoulli random variables that share the same mean $\pi_{b}=\Exp{}{Y\mid X\in \Xcal_b}$. Consider any fixed region $\mathcal{X}_{b}$, $b\in [B]$. Using Theorem~\ref{thm:audibert_emp_bernstein}, we obtain that:
	\begin{equation*}
	\Prob\roundbrack{\abs{\pi_{b}-\widehat{\pi}_{b}}> \sqrt{\frac{2\widehat{V}_{b}\ln(3B/\alpha)}{\nbin_{b}}}+\frac{3\ln(3B/\alpha)}{\nbin_{b}} \ \Big| \ E_{\mathcal{B}(x)}}\leq \alpha/B.
	\end{equation*}
	Applying union bound across all regions of the sample-space partition, we get that:
	\begin{equation*}
	\Prob\roundbrack{\forall b\in[B]: \ \abs{\pi_{b}-\widehat{\pi}_b}\leq  \sqrt{\frac{2\widehat{V}_{b}\ln(3B/\alpha)}{\nbin_b}}+\frac{3\ln(3B/\alpha)}{\nbin_b}\ \Big| \ E_{\mathcal{B}(x)}}\geq 1-\alpha.
	\end{equation*}

	Because this is true for any $E_{\mathcal{B}(x)}$, we can marginalize to obtain the assertion of the theorem in unconditional form.
	\qed
	
	\subsection{Proof of Corollary~\ref{cor:calibration-fixed-partition}}
	We convert the per-bin confidence interval of Theorem~\ref{thm:emp_bernstein} to a calibration guarantee using the same intuition as that of Theorem~\ref{thm:calib_ci_equiv}. 
	Define the function $C: [B] \to \interval$ given by 
	\[
	C_n(b) = \squarebrack{\widehat{\pi}_{b} - \roundbrack{\sqrt{\frac{2\widehat{V}_{b}\ln(3B/\alpha)}{\nbin_b}} + \frac{3\ln(3B/\alpha)}{\nbin_b}}, \widehat{\pi}_{b} + \sqrt{\frac{2\widehat{V}_{b}\ln(3B/\alpha)}{\nbin_b}} + \frac{3\ln(3B/\alpha)}{\nbin_b}}, \ b \in [B].
	\]
	Then by Theorem~\ref{thm:emp_bernstein}, $C_n$ provides a `\omaci with respect to $\Bcal : \Xcal \to [B]$'. While Definition~\ref{def:f-confidence-interval} defined CIs with respect to a function whose range is $[0,1]$, it can be naturally extended for CIs with respect to functions with another range such as $\Bcal$. In Section~\ref{subsec:calib_and_ci}, the calibrated function constructed from $C$ was defined as $\widetilde{f}(x) := m_C(f(x))$; the same construction applies even if $\text{Range}(f) \neq [0,1]$. Specifically, for the $C$ defined above, $\widetilde{f}(x) = \widehat{\pi}_{\Bcal(x)}$. The arguments in the proof of the CI-to-calibration part of Theorem~\ref{thm:calib_ci_equiv} give that $\widetilde{f}$ is $(\varepsilon, \alpha)$-calibrated with 
	\[\varepsilon = \sup_{b \in [B]}\abs{C(b)}/2 =
	\sqrt{\frac{2\widehat{V}_{b^\star}\ln(3B/\alpha)}{\nbin_{b^\star}}} + \frac{3\ln(3B/\alpha)}{\nbin_{b^\star}}. 
	\]
	 This shows the approximate calibration result. Next, we show the asymptotic calibration result.   
	
	Suppose some bin $b$ has $\Prob(\Bcal(X) = b) = 0$. Then, a test point $X_\npo$ almost surely does not belong to the bin, and the bin can be ignored for our calibration guarantee. Thus without loss of generality, suppose every $b \in [B]$ satisfies 
	\[
	\Prob(\Bcal(X) = b) > 0. 
	\]
	Let $\min_{b \in [B]} \Prob(\Bcal(X) = b) = \tau > 0$. Then for a fixed number of samples $n$, any particular bin $b$, and any constant $ \alpha \in (0, 1)$ we have by Hoeffding's inequality with probability $1 - \alpha/B$
	\[
	\nbin_b \geq n\tau - \sqrt{\frac{n \ln(B/\alpha)}{2}}.
	\]
	Taking a union bound, we have with probability $1 - \alpha$, simultaneously for every $b \in [B]$, 
	\[
	\nbin_b \geq n\tau - \sqrt{\frac{n \ln(B/\alpha)}{2}} = \Omega(n),
	\]
	and in particular $\nbin_{b^\star} = \Omega(n)$ where $b^\star=\argmin_{b\in [B]}\nbin_b$. Thus by the first part of this corollary, $h_n$ is $\varepsilon_n$ calibrated where $\varepsilon_n = O(\sqrt{n^{-1}}) = o(1)$. This concludes the proof. 
	\qed

	\subsection{Proof of Theorem~\ref{thm:data_dep_partition}}
	Denote $|\Dcal_{cal}^2|=n$. Let $p_j=\Prob(g(X)\in I_j)$ be the true probability that a random point 
	falls into partition $\Xcal_j$. Assume $c$ is such that we can use Lemma~\ref{lem:kumar_uniform} to guarantee that 
	with probability at least $1-\alpha/2$, uniform mass binning scheme is 2-well-balanced. Hence, with probability at least $1-\alpha/2$:
	\begin{equation}\label{eq:lemma_11_1}
	\frac{1}{2B}\leq p_j\leq \frac{2}{B}, \ \forall j \in [B].
	\end{equation}
	Moreover, by Hoeffding's inequality we get that for any fixed region of sample-space partition, with probability at least $1 - \alpha/2B$, for a fixed $j \in [B]$,
	\begin{equation}
	\nbin_j \geq np_j -  \sqrt{\frac{n \ln(2B/\alpha)}{2}}.
	\end{equation}
	Hence, by union bound across applied accross all regions and using~\eqref{eq:lemma_11_1}, we get that with probability at least $1-\alpha/2$:
	\[
	\nbin_{b^\star} \geq \frac{n}{2B} -  \sqrt{\frac{n \ln(2B/\alpha)}{2}},
	\]
	where the first term dominates asymptotically (for fixed $B$). Hence, we get that with probability at least $1-\alpha$, $N_{b^\star}=\Omega\roundbrack{n/B}$. By invoking the result of Corollary~\ref{cor:calibration-fixed-partition} and observing that $\widehat{V}_b \leq 1$, we conclude that uniform mass binning is $(\varepsilon, \alpha)$-calibrated with $\varepsilon = O(\sqrt{B\ln(B/\alpha)/n})$ as desired. This also leads to  asymptotic calibration by Corollary~\ref{cor:calibration-fixed-partition}. 
	\qed
	
	\subsection{Proof of Theorem~\ref{thm:online_conc}}
	
	The proof is based on the result for an empirical-Bernstein confidence sequences for bounded observations~\cite{howard2018UniformNN}. We condition on the event $E_{\mathcal{B}(x)}^\infty$ defined as $(\mathcal{B}(X_1),\mathcal{B}(X_1),\dots)=(\mathcal{B}(x_1),\mathcal{B}(x_2), \dots)$, that is the random variables denoting which partition the infinite stream of samples fall in (thus allowing our bound to hold for every possible value of $n$).  
	On $E_{\mathcal{B}(x)}^\infty$, the label values within each partition of the sample-space partition represent independent Bernoulli random variable that share the same mean $\pi_{b}=\Exp{}{Y\mid X\in\Xcal_b},b\in [B]$. Consequently, the bound obtained can be marginalized over $E_{\mathcal{B}(x)}^\infty$ to obtain the assertion of the theorem in unconditional form. 
	Now we show the bound that applies conditionally on $E_{\mathcal{B}(x)}^\infty$.

	Consider any fixed region of the sample-space partition $\Xcal_{b}$ and corresponding points  $\curlybrack{\roundbrack{X_i^{b},Y_i^{b}}}_{i=1}^{\nbin_{b}}$. Then $S_t = \roundbrack{\sum_{i=1}^t Y_i^{b}} - t\pi_{b}$ is a sub-exponential process with variance process:
	\begin{equation*}
	\widehat{V}_t^+ = \sum_{i=1}^t\roundbrack{Y_i^{b}-\overline{Y}_{i-1}^{b}}^2.   
	\end{equation*}
	\citet[Proposition 2]{howard2020timeuniform} implies that $S_t$ is also a sub-gamma process with variance process $\widehat{V}_t$ and the same scale $c=1$. Since the theorem holds for any sub-exponential uniform boundary, we choose one based on analytical convenience. Recall definition of the polynomial stitching function
	\begin{equation*}
	\mathcal{S}_\alpha (v) := \sqrt{k_1^2vl(v)+k_2^2c^2l^2(v)}+k_2cl(v), \ \text{ where } \begin{cases}
	l(v) := \ln h(\ln_\eta(v/m))+\ln(l_0/\alpha),\\
	k_1 := (\eta^{1/4}+\eta^{-1/4})/\sqrt{2},\\
	k_2 := (\sqrt{\eta}+1)/\sqrt{2}.
	\end{cases}
	\end{equation*}
	where $l_0=1$ for the scalar case. Note that for $c>0$ it holds that $\mathcal{S}_\alpha (v)\leq k_1\sqrt{vl(v)}+2ck_2l(v)$.
	
	From \citet[Theorem 1]{howard2018UniformNN}, it follows that $u(v)=\mathcal{S}_\alpha(v \vee m)$ is a sub-gamma uniform boundary with scale $c$ and crossing probability $\alpha$. Applying Theorem~\ref{thm:howard_uniform} with $h(k) \leftarrow (k+1)^s\zeta(s)$ where $\zeta(\cdot)$ is Riemann zeta function and parameters $\eta\leftarrow e$, $s \leftarrow 1.4$, $c\leftarrow 1$, $m\leftarrow 1$ and $\alpha\leftarrow\alpha/(2B)$, yields that $k_2 \leq 1.88, k_1 \leq 1.46$ and $l(v) = 1.4 \cdot \ln \ln\roundbrack{ev}+\ln (2\zeta(1.4)B/\alpha)$.  Since Theorem~\ref{thm:howard_uniform} provides a bound that holds uniformly across time $t$, then it provides a guarantee for $t=\nbin_{b}$, in particular. Hence, with probability at least $1-\alpha/B$,
	{\small{
			\begin{align*}
			\abs{\pi_{b} - \widehat{\pi}_{b}} &\leq  \frac{1.46 \sqrt{\widehat{V}^+_{b}\cdot 1.4 \cdot \ln \ln\roundbrack{e\roundbrack{\widehat{V}^+_{b}\vee 1}}+\ln (6.3 B/\alpha)}}{\nbin_{b}} 
			+ \frac{5.27 \cdot \ln \ln\roundbrack{e\roundbrack{\widehat{V}^+_{b}\vee 1}}+3.76\ln (6.3B/\alpha)}{\nbin_{b}} \\
			& \leq \frac{7\sqrt{\widehat{V}^+_{b}\cdot \ln \ln\roundbrack{e\roundbrack{\widehat{V}^+_{b}\vee 1}}} + 5.3 \ln (6.3B/\alpha)}{\nbin_{b}}.
			\end{align*}}}
	using that $\sqrt{x+y}\leq\sqrt{x}+\sqrt{y}$ and $\ln\ln(ex)\leq\sqrt{x\ln\ln ex}$ for $x\geq 1$. Finally, we apply a union bound to get a guarantee that holds simultaneously for all regions of the sample-space partition.
	\qed

	\section{Calibration under covariate shift (including proofs of results in Section~\ref{sec:cov_shift})}
	\label{appsec:cov_shift}

	The results from Section~\ref{sec:cov_shift} are proved in Appendix~\ref{appsec:cs-proof-1} (Theorem~\ref{thm:weighted_cov_shift_estimator}) and \ref{appsec:cs-proof-2} (Proposition~\ref{prop:cov_shift_est_likelihood}). To show Theorem~\ref{thm:weighted_cov_shift_estimator}, we first propose and analyze a slightly different estimator than \eqref{eq:cov_shift_est_w_m} that is unbiased for $\pi_b^\bw$, but needs  additional oracle access to the parameters $\{m_b\}_{b \in [B]}$ defined as 
	\[m_b ={P}(X\in \Xcal_b) \ / \ {\widetilde{P}}(X\in\Xcal_b).\] 
	The ratio $m_b$ denotes the `relative mass' of region $\Xcal_b$. (For simplicity, we assume that ${\widetilde{P}}(X \in \Xcal_b) > 0$ for every $b$ since otherwise the test-point almost surely does not belong to $\Xcal_b$ and estimation in that bin is not relevant for a calibration guarantee.) We then show that $m_b$ can be estimated using $w$, which would lead to the proposed estimator $\widecheck{\pi}_b^\bw$. First, we establish the following relationship between 
	$\Exp{\widetilde{P}}{Y \mid X\in \Xcal_b}$ and $\Exp{P}{Y \mid X\in \Xcal_b}$.
	
	\begin{proposition}\label{prop:covariate shift}
		Under the covariate shift assumption, for any $b \in [B]$,
		\begin{equation*}
		\Exp{\widetilde{P}}{Y \mid X\in \Xcal_b } = m_b\cdot  \Exp{P}{w(X)Y\mid X\in \Xcal_b}.
		\end{equation*}
	\end{proposition}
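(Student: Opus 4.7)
The plan is to prove the identity by a direct change-of-measure calculation, using the covariate-shift structure $\widetilde{P}_{Y|X} = P_{Y|X}$ together with the likelihood ratio $w$. First I would rewrite the conditional expectation on the left-hand side as a ratio:
\begin{equation*}
\Exp{\widetilde{P}}{Y \mid X\in \Xcal_b} \;=\; \frac{\Exp{\widetilde{P}}{Y\cdot \indicator{X\in \Xcal_b}}}{\widetilde{P}(X\in\Xcal_b)}.
\end{equation*}

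Next, I would use the covariate-shift decomposition $\dd\widetilde{P}(x,y) = \widetilde{P}_X(\dd x)\, P_{Y\mid X}(\dd y\mid x) = w(x)\,\dd P(x,y)$ (the second equality from the definition of $w$) to rewrite the numerator via a Radon--Nikodym change of measure as $\Exp{P}{w(X)\, Y\cdot\indicator{X\in\Xcal_b}}$. Unpacking this as a conditional expectation gives $\Exp{P}{w(X)Y\mid X\in\Xcal_b}\cdot P(X\in\Xcal_b)$, so combining with the denominator yields the desired identity with $m_b = P(X\in\Xcal_b)/\widetilde{P}(X\in\Xcal_b)$.

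There is essentially no obstacle here: the argument is a textbook importance-weighting calculation, and the only subtle point is to verify that the change of measure is valid, i.e.\ that $\widetilde{P}$ is absolutely continuous with respect to $P$ on the joint space, which follows from the covariate-shift assumption plus the existence of $w$ as a well-defined likelihood ratio on $\Xcal$. I would note this briefly for rigor but would not belabor it. The proof should fit in a few lines.
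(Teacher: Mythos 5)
Your proof is correct and amounts to the same change-of-measure argument the paper uses; the only difference is presentational. You work directly with the joint identity $\dd\widetilde{P}(x,y) = w(x)\,\dd P(x,y)$ and indicator functions, unfolding the conditional expectation as a ratio of unconditional expectations, whereas the paper first computes the conditional Radon--Nikodym derivative $\dd\widetilde{P}(X\mid X\in\Xcal_b)/\dd P(X\mid X\in\Xcal_b) = w(X)\,m_b$ and then moves between $\widetilde{P}$ and $P$ via a chain of tower-rule equalities. Both routes hinge on exactly the same two facts---that $P_{Y\mid X}$ is invariant and that $w$ is the marginal likelihood ratio---and both land on the same factor $m_b$, so this is a matter of exposition rather than a genuinely different argument; your version is a bit more compact.
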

	\begin{proof}
		Observe that 
		\begin{align*}
		\frac{d\widetilde{P}(X \mid X \in \Xcal_b)}{dP(X \mid X \in \Xcal_b)} &= \frac{d\widetilde{P}(X)}{dP(X)}\cdot \frac{P\roundbrack{X\in \Xcal_b}}{\widetilde{P}(X \in \Xcal_b)}
		=  w(X) \cdot m_b.
		\end{align*}
		Thus we have,
		\begin{align*}
		\Exp{\widetilde{P}}{Y \mid X\in \Xcal_b} & \overset{(1)}{=} \Exp{\widetilde{P}}{\Exp{\widetilde{P}}{Y\mid X} \mid X\in \Xcal_b} \\
		& \overset{(2)}{=} \Exp{\widetilde{P}}{\Exp{P}{Y\mid X} \mid X\in \Xcal_b} \\
		& \overset{(3)}{=}  
		\Exp{P}{\frac{d\widetilde{P}(X \mid X \in \Xcal_b)}{dP( X \mid X \in \Xcal_b)}\cdot\Exp{P}{Y\mid X} \mid X\in \Xcal_b} \\
		& \overset{(4)}{=} m_b \cdot \Exp{P}{w(X)\Exp{P}{Y\mid X} \mid X\in \Xcal_b} \\
		& \overset{(5)}{=} m_b \cdot \Exp{P}{\Exp{P}{w(X)Y\mid X} \mid X\in \Xcal_b} \\
		& \overset{(6)}{=} m_b \cdot \Exp{P}{w(X)Y \mid X\in \Xcal_b},
		\end{align*}
		where in (1) we use the tower rule, in (2) we use the covariate shift assumption, (3) can be seen by using the integral form of the expectation, 
		(4) uses the observation at the beginning of the proof, (5) uses that $w(X)$ is a function of $X$ and finally, (6) uses the tower rule.
	\end{proof}
	
	Let $\nbin_b$ denote the number of calibration points from the source domain that belong to bin $b$. Given Proposition~\ref{prop:covariate shift}, a natural estimator for $\Exp{\widetilde{P}}{Y \mid X \in \Xcal_b}$ is given by:
	\begin{equation}
	\label{eq:cov-shift-oracle-mb}
	\widehat{\pi}^{(w)}_{b} :=\frac{1}{\nbin_b}\sum_{i: \Bcal(X_i)=b } m_b w(X_i)Y_i.
	\end{equation}
	
	Estimation properties of $\widehat{\pi}_b^\bw$ are given by the following theorem.
	
	\begin{theorem}\label{thm:cov_shift}
		Assume that $\sup_x w(x)=U<\infty$. For any $\alpha\in(0,1)$, with probability at least $1-\alpha$,
		\begin{equation*}
		\abs{\widehat{\pi}^{(w)}_{b} - \Exp{\widetilde{P}}{Y \mid X\in \Xcal_b}} \leq \sqrt{\tfrac{2\widehat{V}^{(w)}_{b}\ln(3B/\alpha)}{\nbin_b}}+\tfrac{3m_bU\ln(3B/\alpha)}{\nbin_b}, \quad \text{simultaneously for all $b\in [B]$},
		\end{equation*}
		where $ \widehat{V}_{b}^{(w)} = \frac{1}{\nbin_b} \sum_{i:\Bcal(X_i) = b} (m_b w(X_i) Y_i - \widehat{\pi}^{(w)}_b)^2$.
	\end{theorem}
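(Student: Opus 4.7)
The plan is to mimic the proof of Theorem~\ref{thm:emp_bernstein} almost verbatim, applying an empirical Bernstein inequality to the rescaled random variables $Z_i := m_b w(X_i) Y_i$ rather than to the raw labels $Y_i$.

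First, I would condition on the event $E_{\Bcal(x)}$ that $(\Bcal(X_1),\ldots,\Bcal(X_n)) = (\Bcal(x_1),\ldots,\Bcal(x_n))$. On this event, the indices falling into any fixed bin $\Xcal_b$ are deterministic, the sample size $\nbin_b$ is known, and the points $\{(X_i,Y_i): \Bcal(X_i)=b\}$ are i.i.d.\ draws from the source conditional distribution $P(\cdot \mid X\in\Xcal_b)$. Consequently, the variables $\{Z_i = m_b w(X_i) Y_i : \Bcal(X_i) = b\}$ are i.i.d., bounded in $[0, m_b U]$ since $w(X_i) \leq U$ and $Y_i \in \{0,1\}$, and by Proposition~\ref{prop:covariate shift} their common mean equals $\Exp{\widetilde{P}}{Y\mid X\in\Xcal_b}$. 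The estimator $\widehat{\pi}_b^\bw$ is exactly the empirical average of these $Z_i$'s, and $\widehat{V}_b^\bw$ is their empirical variance.

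Next, I would invoke the Audibert--Munos--Szepesvári empirical Bernstein inequality (Theorem~\ref{thm:audibert_emp_bernstein}) applied to $\{Z_i\}$. The standard form of that inequality is stated for variables in $[0,1]$; the corresponding statement for variables in $[0,M]$ (with $M = m_b U$) follows by the usual rescaling argument $Z_i \mapsto Z_i/M$, which scales the deterministic $1/n$ term by $M$ but leaves the variance term involving $\widehat{V}$ in the correct form (since $\widehat{V}$ of the unscaled variables appears as-is under the square root when the $M^2$ from rescaling the variance is absorbed). This yields, with probability at least $1 - \alpha/B$ conditional on $E_{\Bcal(x)}$,
\[
\abs{\widehat{\pi}_b^\bw - \Exp{\widetilde{P}}{Y \mid X \in \Xcal_b}} \leq \sqrt{\frac{2\widehat{V}_b^\bw \ln(3B/\alpha)}{\nbin_b}} + \frac{3 m_b U \ln(3B/\alpha)}{\nbin_b}.
\]

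Finally, I would apply a union bound over the $B$ bins to obtain the simultaneous guarantee, and then marginalize over $E_{\Bcal(x)}$ to drop the conditioning, exactly as in the proof of Theorem~\ref{thm:emp_bernstein}. The only subtlety worth double-checking, and the main (minor) obstacle, is the rescaling step: one must verify that the range bound on each $Z_i$ is $m_b U$ rather than something looser like $U \cdot \max_b m_b$, which is important so that bins with small relative mass $m_b$ still enjoy a favorable $1/\nbin_b$ term. Since $Z_i = m_b w(X_i) Y_i$ with the particular $m_b$ attached to bin $b$, this bound is tight and the stated inequality goes through cleanly.
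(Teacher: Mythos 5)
Your proof is correct and follows essentially the same route as the paper: condition on the bin assignments, observe that the variables $m_b w(X_i) Y_i$ are i.i.d.\ in $[0, m_b U]$ with common mean $\Exp{\widetilde{P}}{Y\mid X\in\Xcal_b}$ by Proposition~\ref{prop:covariate shift}, apply Theorem~\ref{thm:audibert_emp_bernstein}, union-bound over bins, and marginalize. The rescaling concern you raise is moot since the paper's Theorem~\ref{thm:audibert_emp_bernstein} is already stated for variables bounded in $[0,s]$ with arbitrary $s>0$, so you can apply it directly with $s=m_b U$.
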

	
	The proof is given in Appendix~\ref{appsubsec:cov_shift}. Next, we discuss a way of estimating $m_b$ using likelihood ratio $w$ instead of relying on oracle access. Observe that 
	\begin{align*}
	\frac{d\widetilde{P}(X \mid X \in \Xcal_b)}{dP(X \mid X \in \Xcal_b)} &= \frac{d\widetilde{P}(X)}{dP(X)}\cdot \frac{P\roundbrack{X\in \Xcal_b}}{{\widetilde{P}}(X \in \Xcal_b)} =  w(X) \cdot m_b.
	\end{align*}
	
	Thus we have, 
	\begin{equation}\label{eq:relationship_m_w}
	\Exp{ P}{w(X)\mid X\in\Xcal_b} = m_b^{-1}\Exp{P}{\frac{d \widetilde{P}(X \mid X \in \Xcal_b)}{d P(X \mid X \in \Xcal_b)}\mid X\in\Xcal_b} = m_b^{-1},
	\end{equation}
	which suggests a possible estimator for $m_b$ given by
	\begin{equation}\label{eq:mb_estimator}
	\widehat{m}_b = \roundbrack{\frac{\sum_{i:\Bcal(X_i)=b}w(X_i)}{\nbin_b}}^{-1}, \quad b\in[B].
	\end{equation}
	On substituting this estimate for $m_b$ in \eqref{eq:cov-shift-oracle-mb}, we get a new estimator 
	\[
	\frac{\sum_{i : \Bcal(X_i) = b}w(X_i)Y_i}{\sum_{i : \Bcal(X_i) = b}w(X_i)},
	\]
	which is exactly $\widecheck{\pi}_b^\bw$. With this observation, we now prove Theorem~\ref{thm:weighted_cov_shift_estimator}. 
	

	\subsection{Proof of Theorem~\ref{thm:weighted_cov_shift_estimator}}
	\label{appsec:cs-proof-1}
	
	Let us define $r_b := 1/m_b$ and 
	\begin{equation}\label{eq:mean_weights}
	\widehat{r}_b = \frac{\sum_{i:\Bcal(X_i)=b}w(X_i)}{\nbin_b}.
	\end{equation}

	\paragraph{Step 1 (Uniform lower bound for $\nbin_b$).} Since the regions of the sample-space partition were constructed using uniform-mass binning, the guarantee of  Theorem~\ref{thm:data_dep_partition} holds. Precisely, we have that with probability at least $1 - \alpha/3$, simultaneously for every $b \in [B]$, 
	\[
	\nbin_{b} \geq \frac{n}{2B} -  \sqrt{\frac{n \ln(6B/\alpha)}{2}}.
	\]
	\paragraph{Step 2 (Approximating $r_b$).} Observe that the estimator~\eqref{eq:mean_weights} is an average of $\nbin_b$ random variables bounded by the interval $[0,U]$. Let $E_{\mathcal{B}(x)}$ be the event that $(\mathcal{B}(X_1),\dots, \mathcal{B}(X_n))=(\mathcal{B}(x_1),\dots,\mathcal{B}(x_n))$. On the event $E_{\mathcal{B}(x)}$, within each region $\Xcal_b$, the number of point from the calibration set is known and the $Y_i$'s in each bin represent independent Bernoulli random variables that share the same mean $\Exp{}{w(X)\mid X\in \Xcal_b}$. Consider any fixed region $\mathcal{X}_{b}$, $b\in [B]$. By Hoeffding's inequality, it holds that
	\begin{equation*}
	\Prob\roundbrack{\abs{r_b-\widehat{r}_b}> \sqrt{\frac{U^2\ln(6B/\alpha)}{2\nbin_{b}}} \ \Big| \ E_{\mathcal{B}(x)}}\leq \alpha/(3B).
	\end{equation*}
	Applying union bound across all regions of the sample-space partition, we get that:
	\begin{equation*}
	\Prob\roundbrack{\exists b\in[B]: \ \abs{r_b-\widehat{r}_b}>\sqrt{\frac{U^2\ln(6B/\alpha)}{2\nbin_{b}}}\ \Big| \ E_{\mathcal{B}(x)}}\leq \alpha/3.
	\end{equation*}
	Because this is true for any $E_{\mathcal{B}(x)}$, we can marginalize to obtain that with probability at least $ 1- \alpha/3$, 
	\begin{equation}\label{eq:imp_weights_approx_error}
	\forall b \in [B], \ \abs{r_b-\widehat{r}_b} \leq \sqrt{\frac{U^2\ln(6B/\alpha)}{2\nbin_{b}}}.
	\end{equation}

	\paragraph{Step 3 (Going from $r_b$ to $m_b$).} Define $\smash{r^\star = \min_{b\in [B]} \Exp{}{w(X)\mid X\in \Xcal_b}}$. Suppose $\forall b \in [B]$, $\abs{r_b - \widehat{r}_b} \leq \varepsilon$ and $\varepsilon<r^\star/2$. Then, we have with probability at least $ 1- \alpha/3$: 
	\begin{equation}
	\abs{m_b - \widehat{m}_b} = \abs{\frac{1}{r_b}-\frac{1}{\widehat{r}_b}}= \abs{\frac{r_b-\widehat{r}_b}{r_b\cdot \widehat{r}_b}} \leq \frac{\varepsilon}{r_b^2|1-\varepsilon/r_b|}\leq \frac{2\varepsilon}{r_b^2}=2 m_b^2\varepsilon, \quad \forall b\in [B].
	\label{eq:connecting-imp-errors}
	\end{equation}
	We now set $\varepsilon = \sqrt{\frac{U^2\ln(6B/\alpha)}{2\nbin_{b}}}$ as specified in equation~\eqref{eq:imp_weights_approx_error} and verify that $\varepsilon < r^\star/2$.
	\begin{itemize}
		\item First, from step 1, with probability at least $ 1- \alpha/3$, 
		$\nbin_{b^\star} = \Omega(n/B)$ and thus $\nbin_{b} = \Omega(n/B)$ for every $b \in [B]$.
		\item 
		By the condition in the theorem statement, for every $b \in [B]$, 
		\[
		\varepsilon = \sqrt{\frac{U^2\ln(6B/\alpha)}{2\nbin_{b}}} = O\roundbrack{\sqrt{\frac{U^2B\ln(6B/\alpha)}{n}}} = O\roundbrack{\sqrt{\frac{U^2B\ln(6B/\alpha)}{\roundbrack{\frac{U^2B\ln(6B/\alpha)}{L^2}}}}} = O\roundbrack{L}.
		\]
		Finally recall that $L\leq r^\star$. Thus we can pick $c$ in the theorem statement to be large enough such that $\varepsilon < L/2 \leq r^\star/2$. 
	\end{itemize}
	
	Thus for $\varepsilon = \sqrt{\frac{U^2\ln(6B/\alpha)}{2\nbin_{b}}}$, by a union bound over the event in \eqref{eq:imp_weights_approx_error} and step 1, the conditions for \eqref{eq:connecting-imp-errors} are satisfied with probability at least $ 1 - 2\alpha/3$. Hence we have for some large enough constant $c > 0$, 
	\begin{equation*}
	\abs{m_b-\widehat{m}_b}\leq c m_b^2\cdot \sqrt{\frac{U^2B\ln(6B/\alpha)}{2n}}\leq  c\cdot \frac{U}{L^2} \sqrt{\frac{B\ln(6B/\alpha)}{2n}}.
	\end{equation*}
	The final inequality holds by observing that $m_b\leq 1/L$ which follows from relationship~\eqref{eq:relationship_m_w} and the assumption that $\inf_x w(x)\geq L$.

	\paragraph{Step 4 (Computing the final deviation inequality for $\widecheck{\pi}_b^\bw$).}
	
	Recall the definitions of  the two estimators:
	\[
	\widehat{\pi}^{(w)}_{b} :=\frac{1}{\nbin_b}\sum_{i: \Bcal(X_i)=b } m_b w(X_i)Y_i, 
	\]
	and
	\[
	\widecheck{\pi}^{(w)}_{b} :=\frac{1}{\nbin_b}\sum_{i:\Bcal(X_i)=b} \widehat{m}_b w(X_i)Y_i,
	\]
	which differ by replacing $m_b$ by its estimator $\widehat{m}_b$ defined in~\eqref{eq:mb_estimator}. By triangle inequality,
	\[
	\abs{\widecheck{\pi}_b-\Exp{}{Y\mid X\in\Xcal_b}}\leq \abs{\widecheck{\pi}_b^{(w)}-\widehat{\pi}_b^{(w)}}+\abs{\widehat{\pi}_b^{(w)}-\Exp{}{Y\mid X\in\Xcal_b}}.
	\]
	
	Theorem~\ref{thm:cov_shift} bounds the term $\abs{\widehat{\pi}_b^{(w)}-\Exp{}{Y\mid X\in\Xcal_b}}$ with high probability. In the proof of Theorem~\ref{thm:cov_shift}, we can replace the empirical Bernstein's inequality by Hoeffding's inequality to obtain with probability at least $ 1- \alpha/3$,
	\begin{equation*}
	\abs{\widehat{\pi}^{(w)}_{b}-\Exp{}{Y\mid X\in\Xcal_b}}\leq\sqrt{\frac{U^2\ln(6B/\alpha)}{2\nbin_b}} \leq \roundbrack{\frac{U}{L}}^2\sqrt{\frac{\ln(6B/\alpha)}{2\nbin_b}},
	\end{equation*}
	simultaneously for all $b\in[B]$ (the last inequality follows since $L \leq 1 \leq U$). To bound $\abs{\widehat{\pi}_b^{(w)}-\widecheck{\pi}_b^{(w)}}$, first note that:
	\begin{align*}
	\abs{\widehat{\pi}^{(w)}_{b}-\widecheck{\pi}^{(w)}_{b}} &= \abs{\frac{1}{\nbin_b}\sum_{i:\Bcal(X_i)=b} \roundbrack{\widehat{m}_b - m_b} w(X_i)Y_i} \\
	&\leq U\cdot \abs{\frac{1}{\nbin_b}\sum_{i:\Bcal(X_i)=b} \roundbrack{\widehat{m}_b - m_b}}\\ 
	& =  U\cdot \abs{\widehat{m}_b - m_b}.
	\end{align*}
	
	Then we use the results from steps 1 and 3 to conclude that with probability at least $1-2\alpha/3$, 
	\[
	\abs{\widecheck{\pi}_b^{(w)}-\widehat{\pi}_b^{(w)}}  \leq  c\cdot \roundbrack{\frac{U}{L}}^2 \sqrt{\frac{B\ln(6B/\alpha)}{2n}}, \ \text{ and } \ \nbin_{b} \geq n/B -  \sqrt{\frac{n \ln(6B/\alpha)}{2}}.
	\]
	simultaneously for all $b \in [B]$. Thus by union bound, we get that it holds with probability at least $1-\alpha$,
	\begin{equation*}
	\abs{\widecheck{\pi}_b-\Exp{}{Y\mid X\in\Xcal_b}} \leq c\cdot \roundbrack{\frac{U}{L}}^2 \sqrt{\frac{B\ln(6B/\alpha)}{2n}},
	\end{equation*}
	simultaneously for all $\smash{b\in [B]}$ and large enough absolute constant $\smash{c>0}$. This concludes the proof. 
	\qed

	\subsection{Proof of Theorem~\ref{thm:cov_shift}}\label{appsubsec:cov_shift}

	Consider the event $E_{\Bcal(x)}$ defined as $(\Bcal(X_1), \dots, \Bcal(X_n))$ $=$ $(\Bcal(x_1), \dots, \Bcal(x_n))$. Conditioned on $E_{\mathcal{B}(x)}$, since  $\sup_{x} w(x)\leq U$, 
	we get that $\widehat{\pi}_{b}^{(w)}$ is an average of independent nonnegative random variables $m_bw(X_i)Y_i$  that are bounded by $m_bU$ and share the same mean $m_b\ \Exp{P}{w(X)Y \mid X\in \Xcal_b} = \Exp{\widetilde{P}}{Y \mid X\in \Xcal_b }$ (by Proposition~\ref{prop:covariate shift}).
	Using Theorem~\ref{thm:audibert_emp_bernstein} for a fixed $b \in [B]$, we obtain:
	\begin{equation*}
	\Prob\roundbrack{\abs{\widehat{\pi}^{(w)}_{b}-\Exp{\widetilde{P}}{Y \mid X\in \Xcal_b}}> \sqrt{\frac{2\widehat{V}_{b}\ln(3B/\alpha)}{\nbin_{b}}}+\frac{3m_bU \ln(3B/\alpha)}{\nbin_{b}} \ \Big| \ E_{\mathcal{B}(x)}}\leq \alpha/B.
	\end{equation*}
	Applying a union bound over all $b \in [B]$, we get:
	{ \small \begin{equation*}
		\Prob\roundbrack{\forall b\in[B]: \ \abs{\widehat{\pi}^{(w)}_{b}-\Exp{\widetilde{P}}{Y \mid X\in \Xcal_b}}\leq \sqrt{\frac{2\widehat{V}_{b}\ln(3B/\alpha)}{\nbin_b}}+\frac{3m_bU\ln(3B/\alpha)}{\nbin_b}\ \Big| \ E_{\mathcal{B}(x)}}\geq 1 - \alpha.
		\end{equation*}}%
	Because this is true for any $E_{\mathcal{B}(x)}$, we can marginalize to obtain the assertion of the theorem in unconditional form.
	\qed

	\subsection{Proof of Proposition~\ref{prop:cov_shift_est_likelihood}}
	\label{appsec:cs-proof-2}
	
	Fix any $\alpha\in (0,0.5)$. For any $k\in\naturals$ observe that by triangle inequality,
	\[
	\abs{\widecheck{\pi}_b^{(\widehat{w}_k)}-\Exp{\widetilde{P}}{Y \mid X\in \Xcal_b}} \leq \abs{\widecheck{\pi}_b^{(w)}-\Exp{\widetilde{P}}{Y \mid X\in \Xcal_b}} + \abs{\widecheck{\pi}_b^{(w)}-\widecheck{\pi}_b^{(\widehat{w}_k)}}.
	\]
	Consider any $\varepsilon>0$. Note that by Theorem~\ref{thm:weighted_cov_shift_estimator}, there exists sufficiently large $n$ such that the first term is larger than $\varepsilon/2$ with probability at most $\alpha/2$ simultaneously for all $b\in [B]$. Hence, it suffices to show that there exists a large enough $k$ such that the probability of the second term exceeding $\varepsilon/2$ is at most $\alpha/2$ simultaneously for all $b\in [B]$. While analyzing the second term, we treat $n$ 
	as a constant while leveraging the  consistency of $\widehat{w}_k$ as $k \to \infty$. For simplicity,  denote $\Delta_k = \sup_x\abs{w(x)-\widehat{w}_k(x)}$. Then for any $b\in [B]$:
	\begin{align*}
	\abs{\widecheck{\pi}_b^{(w)}-\widecheck{\pi}_b^{(\widehat{w}_k)}} & = \abs{ \frac{\sum_{i:\Bcal(X_i)=b} w(X_i)Y_i}{\sum_{i: \Bcal(X_i)=b} w(X_i)} -  \frac{\sum_{i:\Bcal(X_i)=b} \widehat{w}_k(X_i)Y_i}{\sum_{i: \Bcal(X_i)=b}\widehat{w}_k(X_i)}} \\
	& \overset{(1)}{\leq}  \abs{ \frac{\sum_{i:\Bcal(X_i)=b} w(X_i)Y_i}{\sum_{i: \Bcal(X_i)=b} w(X_i)} -  \frac{\sum_{i:\Bcal(X_i)=b} \widehat{w}_k(X_i)Y_i}{\sum_{i: \Bcal(X_i)=b}w(X_i)}} \\
	& +\abs{ \frac{\sum_{i:\Bcal(X_i)=b} \widehat{w}_k(X_i)Y_i}{\sum_{i: \Bcal(X_i)=b} w(X_i)} -  \frac{\sum_{i:\Bcal(X_i)=b} \widehat{w}_k(X_i)Y_i}{\sum_{i: \Bcal(X_i)=b}\widehat{w}_k(X_i)}} \\
	& \overset{(2)}{\leq} n\cdot \Delta_k \cdot \abs{ \frac{1}{\sum_{i: \Bcal(X_i)=b} w(X_i)}} \\
	& +\abs{ \frac{1}{\sum_{i: \Bcal(X_i)=b} w(X_i)} -  \frac{1}{\sum_{i: \Bcal(X_i)=b}\widehat{w}_k(X_i)}}\abs{\sum_{i:\Bcal(X_i)=b} \widehat{w}_k(X_i)Y_i} \\
	& \overset{(3)}{\leq} \frac{n}{L}\cdot \Delta_k +  \roundbrack{\frac{n\cdot \Delta_k}{(L-\Delta_k)L}}\cdot \roundbrack{\roundbrack{U+\Delta_k}\cdot n },
	\end{align*}
	where (1) is due to the triangle inequality, (2) is due to the facts that the number of points in any bin is at most $n$ and that absolute difference between $\widehat{w}$ and $w$ is at most $\Delta_k$, (3) combines the aforementioned reasons in (2) and the assumptions: $L\leq \inf_x w(x) \leq \sup_x w(x)\leq U$. Since $\Delta_k\overset{P}{\rightarrow} 0$, clearly there exists a large enough  $k$ such that:  
	\[
	\Prob\roundbrack{\abs{\widecheck{\pi}_b^{(w)}-\widecheck{\pi}_b^{(\widehat{w}_{k})}}\geq \varepsilon/2}\leq \alpha/2.
	\]
	Thus we conclude that $\widecheck{\pi}_{\Bcal(\cdot)}^{(\widehat{w}_k)}$ is asymptotically calibrated at level $\alpha$.
	\qed
	\subsection{Preliminary simulations}
	This section is structured as follows. We first describe the overall procedure for calibration under covariate shift. The finite-sample calibration guarantee of Theorem~\ref{thm:weighted_cov_shift_estimator} holds for oracle $w$ whereas in our experiments we will estimate $w$; to assess the loss in calibration due to this approximation, we introduce some standard techniques used in literature. The preliminary experiments are performed with simulated data which are described after this. Finally, we propose a modified estimator $\widetilde{\pi}_b^{(\widehat{w})}$ of $\Exp{\widetilde{P}}{Y \mid X \in \Xcal_b}$ which appears natural but has poor performance in practice. 
	
	\paragraph{Procedure.} We describe how to construct approximately calibrated predictions practically. This involves approximating the importance weights $w$ and the relatives mass terms $\{m_b\}_{b \in [B]}$. The summarized calibration procedure 
	consists of the following steps:
	\begin{enumerate}
		\item Split the calibration set into two parts and use the first to perform \emph{uniform mass} binning 
		\item Given unlabeled examples from both source and target domain, estimate $\widehat{w}$. The unconstrained Least-Squares Importance Fitting (uLSIF)  procedure~\cite{kanamori2009ls} is used for this. 
		\item Compute for every $b \in [B]$, the estimator as per~\eqref{eq:cov_shift_estimator}, replacing $w$ with $\widehat{w}$:
		\begin{equation}\label{eq:cov_shift_est_w_m}
		\widecheck{\pi}_{b}^{(\widehat{w})} := \frac{ \sum_{i: \mathcal{B}(X_i)=b } \widehat{w}(X_i) Y_i }{\sum_{i: \mathcal{B}(X_i)=b} \widehat{w}(X_i)  }.
		\end{equation}
		\item On a new test point from the target distribution, output the calibrated estimate $\widecheck{\pi}_{\Bcal(X_{n+1})}^{(\widehat{w})}$.
	\end{enumerate}

	\paragraph{Assessment through reliability diagrams and ECE.} Given a test set (from the target distribution) of size $m$: $\{(X'_i,Y'_i)\}_{i\in [m]}$ and a function $g: \Xcal \to [0, 1]$ that outputs approximately calibrated probabilities, we consider the reliability diagram to estimate its calibration properties. A reliability diagram is constructed using splitting the unit interval $[0,1]$ into non-overlapping intervals $\{I_b\}_{b\in [B']}$ for some $B'$ as \begin{equation*}
	I_i = \left[ \frac{i-1}{B'}, \frac{i}{B'} \right), \ i=1,\dots,B'-1 \ \text{ and } \ I_{B'} = \left[ \frac{B'-1}{B'}, 1 \right].
	\end{equation*}
	Let $\Bcal': [0, 1] \to [B']$ denote the binning function that corresponds to this binning. We then compute the following quantities for each bin $b \in [B']$:
	\begin{align*}
	\text{FP} (I_b) = \frac{\sum_{i: \Bcal'(X_i') = b} Y_i'}{\abs{\{i: \Bcal'(X_i') = b\}}} \qquad &\text{(fraction of positives in a bin)},    \\
	\text{MP}(I_b) = \frac{\sum_{i: \Bcal'(X_i') = b} g(X_i')}{\abs{\{i: \Bcal'(X_i') = b\}}} \qquad &\text{(mean predicted probability in a bin)}.
	\end{align*}
	If $g$ is perfectly calibrated, the reliability diagram is diagonal. Define the proportion of points that fall into various bins as:
	\begin{equation*}
	\widehat{p}_b = \frac{\abs{\{i: \Bcal'(X_i') = b\}}}{m}, \quad b\in [B'].
	\end{equation*}
	Then ECE (or $\ell_1$-ECE) is defined as:
	\begin{equation*}
	\text{ECE}(g) = \sum_{b\in [B']} \widehat{p}_b \cdot  \abs{\text{MP}(I_b)-\text{FP} (I_b)}.
	\end{equation*}
	ECE can also be defined in the $\ell_p$ sense and for multiclass problems but we limit our attention to the $\ell_1$-ECE for binary problems. 
	\begin{figure}[ht]
		\begin{subfigure}{.45\textwidth}
			\centering
			\includegraphics[width=0.9\linewidth]{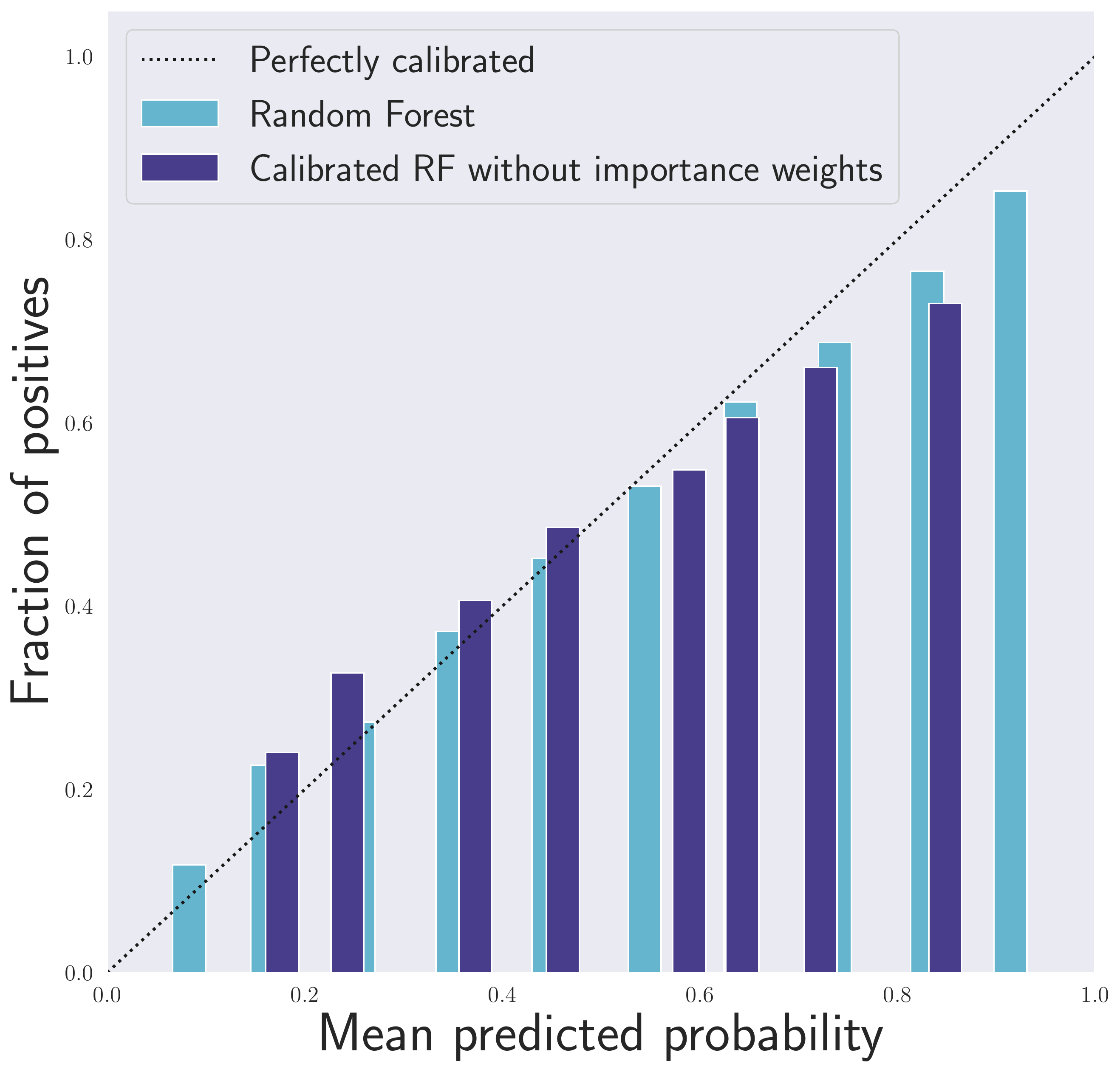}  
			\caption{}
			\label{fig:sub-first}
		\end{subfigure}
		\begin{subfigure}{.45\textwidth}
			\centering
			\includegraphics[width=0.9\linewidth]{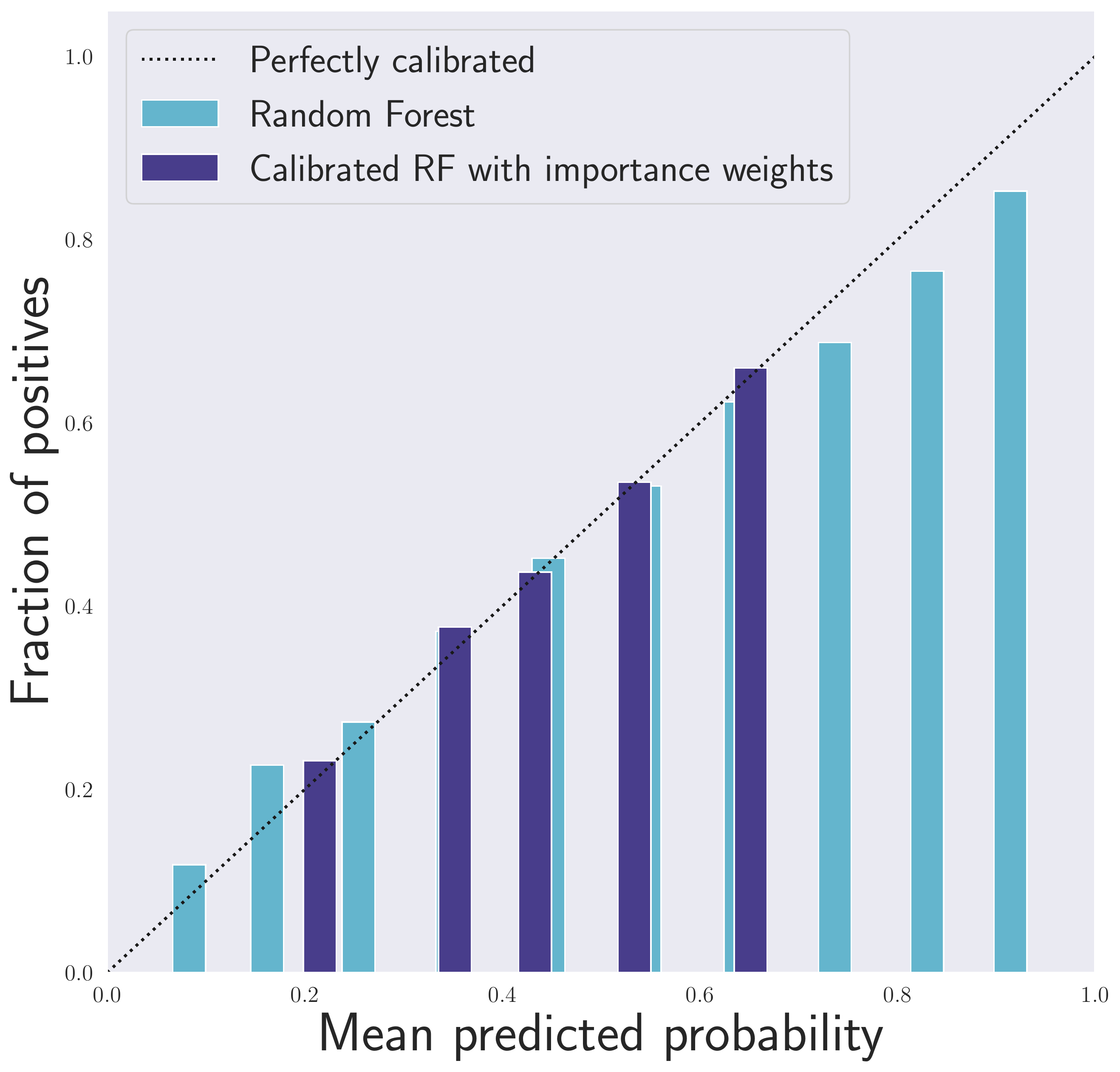}  
			\caption{}
			\label{fig:sub-second}
		\end{subfigure}
		\caption{In Figure~\ref{fig:sub-first} uncalibrated Random Forest ($\text{ECE}\approx 0.023$) is compared with calibration that does not take the covariate shift into account ($\text{ECE}\approx 0.047$). In Figure~\ref{fig:sub-second} uncalibrated Random Forest is compared with calibration that takes the covariate shift into account ($\text{ECE}\approx 0.017$).}
		\label{fig:fig}
	\end{figure}

	\paragraph{Simulations with synthetic data.} 
	We illustrate the performance of our proposed estimator \eqref{eq:cov_shift_estimator} using the following simulated example, for which we can explicitly control the covariate shift. Consider the following data generation pipeline: for the source domain each component of the feature vector is drawn from $\text{Beta}(\alpha,\beta)$ where $\alpha=\beta=1$, which corresponds to uniform draws from the unit cube. For the target distribution each component can be drawn independently from $\text{Beta}(\alpha^\prime,\beta^\prime)$. If the dimension is $d$, the true likelihood ratio is given as 
	\begin{equation*}
	w(x) =  \frac{d\widetilde{P}_X(x)}{dP_X(x)} = \frac{B^{d}(\alpha;\beta)}{B^{d}(\alpha^\prime;\beta^\prime)} \prod_{i=1}^d \frac{(x_{(i)})^{\alpha^\prime-1}(1-x_{(i)})^{\beta^\prime-1}}{(x_{(i)})^{\alpha-1}(1-x_{(i)})^{\beta-1}},
	\end{equation*}
	where $x_{(i)}$ are the coordinates of feature vector $x$. We set $d=3$ and $\alpha^\prime=2,\beta^\prime=1$ so that $w(x) =8\cdot x_{(1)}x_{(2)}x_{(3)}$. The labels for both source and target distributions are assigned according to:
	\begin{equation*}
	\mathbb{P}(Y=1 \mid X=x) = \frac{1}{2}\roundbrack{1+\sin{\roundbrack{\omega\roundbrack{x_{(1)}^2+x_{(2)}^2+x_{(3)}^2}}}},
	\end{equation*}
	for $\omega=20$. As the underlying classifier we use a Random Forest with 100 trees (from \texttt{sklearn}). 14700 data points were used to train the underlying Random Forest classifier, 2000 data points from both source and target were used for the estimation of importance weights. The parameters $\sigma$ and $\lambda$ for uLSIF were tuned by leave-one-out cross-validation: we considered 25 equally spaced values on a log-scale in range $(10^{-2},10^2)$ for $\sigma$ and 100 equally spaced values on a log-scale in range $(10^{-3},10^3)$ for $\lambda$. Uniform mass binning was performed with 10 bins and 1940 data points from the source domain were used to estimate the quantiles. 7840 source data points were used for the calibration and finally, 28000 data points from the target domain were used for evaluation purposes. We note that this simulation is a `proof-of-concept'; the sample sizes we used are not necessarily optimal can presumably be improved. 
	
	We compare the unweighted estimator~\eqref{eq:part_prob_est} 
	which corresponds to weighing points in each bin equally as we would do if there was no covariate shift, and 
	the estimator~\eqref{eq:cov_shift_estimator} that uses an estimate of $w$ to account for covariate shift. The reliability diagrams are presented in Figure~\ref{fig:fig}, with the ECE reported in the caption. For the ECE estimation and reliability diagrams, we used $B' = 10$.
	\begin{figure}[ht]
		\centering
		\includegraphics[width=0.5\linewidth]{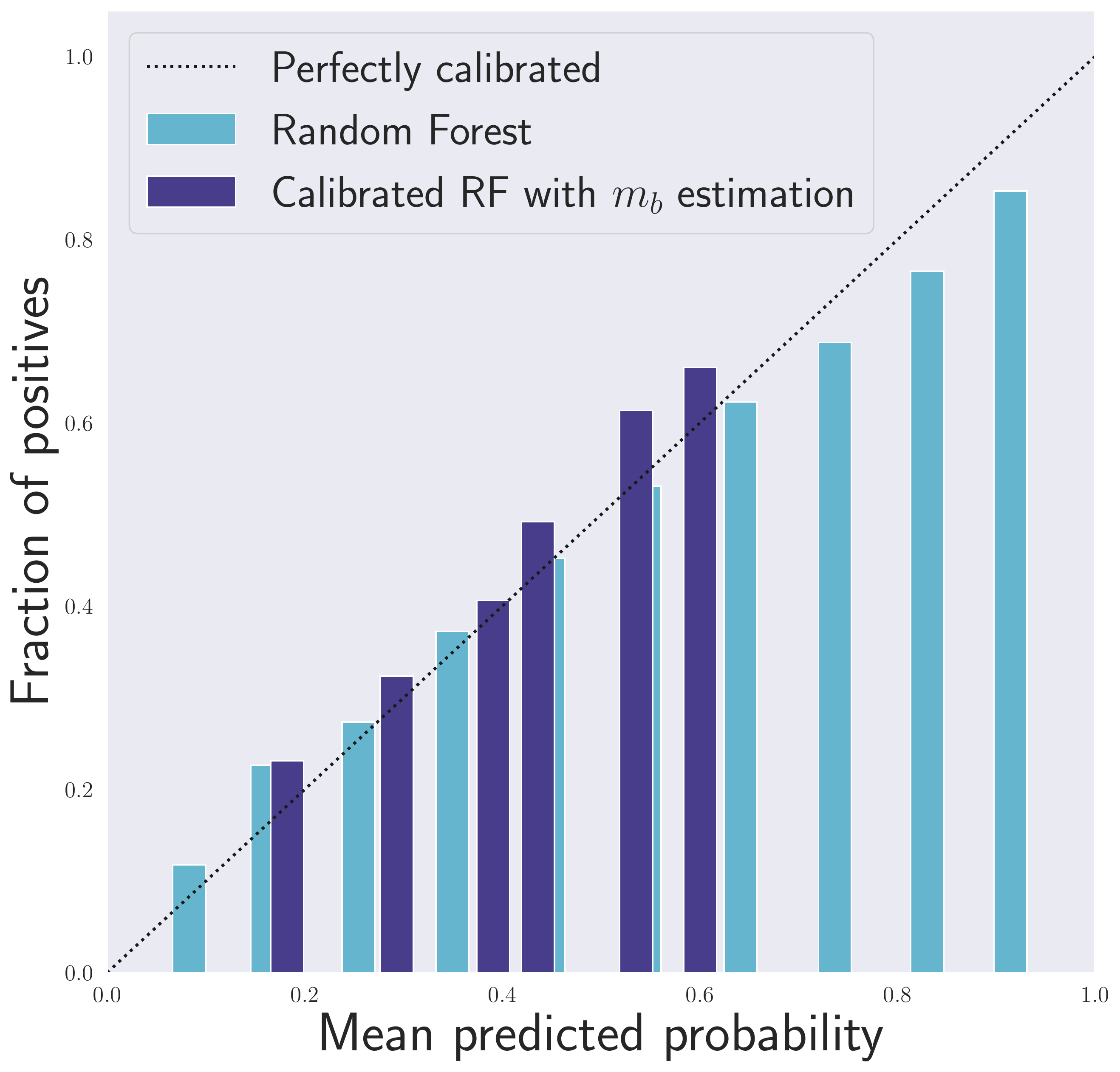}
		\caption{Calibration of Random Forest with $m_b$ estimated as per equation~\eqref{eq:mb_estimator} ($\text{ECE}\approx 0.05$).}
		\label{fig:est_mb}
	\end{figure}
	
	\paragraph{Alternative estimator for $m_b$.} Estimator~\eqref{eq:mb_estimator} is one way of estimating $m_b$ using the $w$ values, that leads to \eqref{eq:cov_shift_estimator}. However, there exists another natural estimator which we propose and show some preliminary empirical results for. 
	Suppose we have access to additional unlabeled data from the source and target domains ($\{X_i^{s}\}_{i\in [n_s]}$, and $\{X_i^{t}\}_{i\in [n_t]}$ respectively). From the definition of $m_b ={P_X}(X \in \Xcal_b)/{\widetilde{P}_X}(X \in \Xcal_b)$, a natural estimator is,
	\begin{equation}\label{eq:m_b_alt_est}
	\widehat{m}_b = \frac{\frac{1}{n_s} \abs{\{i \in [n_s]: \Bcal(X_i^s) = b\}}}{\frac{1}{n_t}\abs{\{i \in [n_t]: \Bcal(X_i^t) = b\}}}, \quad b\in [B].
	\end{equation}
	In this case, the estimator \eqref{eq:cov-shift-oracle-mb} reduces to: 
	\[
	\widetilde{\pi}_b^{(\widehat{w})} = \frac{\widehat{m}_b}{\nbin_b} \sum_{i: \Bcal(X_i) = b} \widehat{w}(X_i) Y_i.
	\]
	We show experimental results with this estimation procedure. We used 8500 data points from the source domain and 8000 points from the target domain to compute~\eqref{eq:m_b_alt_est}. The reliability diagram and ECE with this estimator is reported in Figure~\ref{fig:est_mb}. On our simulated dataset, we observe that the estimators $\widetilde{\pi}_b^{(\widehat{w})}$ perform significantly worse than the estimators $\widecheck{\pi}_b^{(\widehat{w})}$. While this is only a single experimental setup, we outline some drawbacks of this estimation method that may lead to poor performance in general.
	\begin{enumerate}
		\item $\widetilde{\pi}_b^{(\widehat{w})}$ requires access to additional unlabeled data from the source and target domains without leading to increase in performance.
		\item The denominator of $\widehat{m}_b$  could be badly behaved if the number of points from the target domain in bin $b$ are small. We could perform uniform-mass binning on the target domain to avoid this, but in this case $\nbin_b$ may be small which would lead to the estimator $\widetilde{\pi}_b^{(\widehat{w})}$ performing poorly. 
	\end{enumerate}
	Our overall recommendation through these preliminary experiments is to use the estimator $\widehat{\pi}_b^{(\widehat{w})}$ as proposed in Section~\ref{sec:cov_shift} instead of $\widetilde{\pi}_b^{(\widehat{w})}$. 


	\section{Venn prediction}\label{appsec:venn_prediction}
	Venn prediction~\citep{vovk2004self, vovk2005algorithmic, vovk2012venn, lambrou2015inductive} is a  calibration framework that provides distribution-free guarantees, which are different from the ones in Definitions \ref{def:app_calib} and \ref{def:asymp_calib}. 
	For a multiclass problem with $L$ labels, Venn prediction produces $L$ predictions, one of which is guaranteed to be perfectly calibrated (although it is impossible to know which one). These are called multiprobabilistic predictors, formally defined as a collection of predictions $(f_1, f_2, \ldots f_L)$ where each $f_i \in \{\Xcal \to \Delta_{L-1}\}$ (here $\Delta_{L-1}$ is the probability simplex in $\Real^L$). 
	\citet{vovk2012venn} defined two calibration guarantees for multiprobabilistic predictors, the first being oracle calibration.
	\begin{definition}[Oracle calibration]
		$(f_1, f_2, \ldots f_L)$ is oracle calibrated if there exists an oracle selector $S$ such that $f_S$ is perfectly calibrated. 
	\end{definition}
	
	\ifx false 
	\begin{align*}
	\Exp{}{Y|f_Y(X)} = f_Y(X)? \text{ or } \Exp{}{Y|f_S(X),S=Y} = f_S(X)?\\
	\Exp{}{Y|f_1(X),Y=1} = f_1(X), \text{ and } \Exp{}{Y|f_1(X),Y=0} = f_0(X)\\
	\Exp{}{Y|f_1(X)} = f_1(X) \text{ or } \Exp{}{Y|f_0(X)} = f_0(X)
	\end{align*}
	
	\[
	\Exp{}{Y_{n+1} \mid \underbrace{\frac{\sum_{i=1}^{n+1} Y_i}{n+1} = a}_{f_{Y_{n+1}}(X_{n+1})}} = a 
	\]
	Aadi suggestion:
	\[
	\abs{\Exp{}{Y \mid \frac{\sum_{i=1}^n Y_i + \widetilde{Y}}{n+1}} - \frac{\sum_{i=1}^n Y_i + \widetilde{Y}}{n+1}} 
	\]
	\fi 
	
	Venn predictors satisfy oracle calibration \citep[Theorem 1]{vovk2012venn} with $S = Y$. In the binary case, this means that when $Y=1$, $f_1(X)$ is perfectly calibrated but we do not have any guarantee on $f_0(X)$; on the other hand if $Y=0$, $f_0(X)$ is perfectly calibrated but we know nothing about $f_1(X)$. 
	Since $Y$ is unknown, oracle calibration seems to us to  primarily serve as theoretical guidance, but does not give a clear prescription on what to output and what theoretical guarantee that output satisfies. In practice, it seems reasonable to suspect that if $f_0(X)$ and $f_1(X)$ are close, then their average should be approximately calibrated in the sense of Definition~\ref{def:app_calib}, but to the best of our knowledge, such results have not been shown formally (other aggregate functions apart from average are also suggested (without formal guarantees) by \citet[Section 4]{vovk2012venn}).
	For instance, it may be tempting to think that oracle calibration of a multiprobabilistic predictor leads to approximate calibration in the following way. 
	Consider the prediction function 
	\[
	f(X) = \frac{\min f_i(X) + \max f_i(X)}{2},
	\]
	and the radius of the interval $[\min f_i(X), \max f_i(X)]$:
	\[
	\varepsilon(X) =  \frac{\max f_i(X) - \min f_i(X)}{2}.
	\]
	Since Venn predictors satisfy oracle calibration, one might conjecture that $f$ is $(\varepsilon, \alpha)$-calibrated (per Definition~\ref{def:app_calib}) for the given function $\varepsilon$ and for any $\alpha \in (0, 1)$. We examined this claim but were unable to prove such a guarantee formally. In fact, it seems that no general calibration guarantee should
	be possible with the size of the calibration interval being $O(\varepsilon(X))$; we evidence this through the following construction. 
	
	Consider a setup, with no covariates and only label values $Y$, and a single bin that contains all points (in the Venn prediction language: a taxonomy under which all points are equivalent). For a test-point $Y_{n+1}$ and any predictor $f$, note that  $\Exp{}{Y_{n+1} \mid f}$ is simply equal to $\Exp{}{Y_{n+1}}$ since any information used to construct $f$ is independent of $Y_{n+1}$. To ensure calibration, we may look for a guarantee of the following form for some $\delta$: 
	\[
	\abs{\Exp{}{Y_\npo \mid f} - f} = \abs{\Exp{}{Y_\npo} - f} \leq \delta.
	\]
	In essence, $f$ is an estimator  for the parameter $\Exp{}{Y}$ with a corresponding deviation bound of $\delta$. Without distributional assumptions, we only expect to estimate such a parameter with error at best $\delta = O(1/\sqrt{n})$ for a fixed constant probability of failure. On the other hand, the Venn prediction interval $[\min f_i, \max f_i]$ often has radius $O(1/n)$. Thus for valid approximate calibration, we would need to provide a larger interval than $[\min f_i, \max f_i]$, even though one of the $f_i$'s is perfectly calibrated. Given this example, our conjecture is that it might be possible to show that  there always exists an $f_i(X)$ that is $\roundbrack{n^{-0.5}\polylog{1/\alpha}), \alpha}$ calibrated. Without knowing which $f_i(X)$ to pick, perhaps one can show that an aggregate point in the interval $[\min f_i, \max f_i]$ is $((\max f_i - \min f_i) + n^{-0.5}\polylog{1/\alpha}, \alpha)$-calibrated. 
	In Section~\ref{sec:dfc_guarantees}, we showed such a result for histogram binning (which can be interpreted as a Venn predictor). 
	It would be interesting to study if such results can be shown for general Venn predictors.

	Another guarantee for multiprobabilistic predictors is calibration in the large. 
	
	\begin{definition}[Calibration in the large]
		$(f_1, f_2, \ldots f_L)$ is calibrated in the large if the following is satisfied:
		$\Exp{}{Y} \in [\E \min f_i(X), \E \max f_i(X)]$.
	\end{definition}
	\citet[Theorem 2]{vovk2012venn} show that Venn predictors satisfy calibration in the large. Due to the expectation signs and the coverage of the marginal probability $\Exp{}{Y}$, calibration in the large does not lead to a clear interpretable guarantee for uncertainty quantification, but rather a minimum requirement that serves as a guiding principle. 

	\section{Auxiliary results}
	\label{appsec:auxiliary}
	
	\subsection{Concentration inequalities}
	\begin{theorem}[\citet{howard2018UniformNN}, Theorem 4]\label{thm:howard_uniform}
		Suppose $Z_t\in [a,b]$ a.s. for all $t$. Let $(\widehat{Z}_t)$ be any $[a,b]$-valued predictable sequence, and let $u$ be any sub-exponential uniform boundary with crossing probability $\alpha$ for scale $c=b-a$. Then:
		\begin{equation*}
		\Prob\roundbrack{\forall t\geq 1: \abs{\overline{Z}_t-\mu_t}<\frac{u\roundbrack{\sum_{i=1}^t\roundbrack{Z_i-\widehat{Z}_i}^2}}{t}}\geq 1-2\alpha.
		\end{equation*}
	\end{theorem}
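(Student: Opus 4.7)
The plan is to construct a sub-exponential supermartingale from the centered partial-sum process and then invoke the uniform-boundary calculus. Let $S_t := \sum_{i=1}^t (Z_i - \mu_i)$ and $V_t := \sum_{i=1}^t (Z_i - \widehat{Z}_i)^2$, where $\mu_i = \Exp{}{Z_i \mid \mathcal{F}_{i-1}}$. I would first verify that, using the bounded-range assumption $Z_i \in [a,b]$ together with the predictability of $\widehat{Z}_i$ (so $\widehat{Z}_i$ and $\mu_i$ are both $\mathcal{F}_{i-1}$-measurable), a Bennett-style cumulant bound produces a sub-exponential function $\psi_E$ depending only on the scale $c = b-a$ for which, for every admissible $\lambda > 0$,
\begin{equation*}
\Exp{}{\exp\roundbrack{\lambda (Z_i - \mu_i) - \psi_E(\lambda)\,(Z_i - \widehat{Z}_i)^2} \mid \mathcal{F}_{i-1}} \leq 1.
\end{equation*}
Telescoping these per-step bounds shows that $M_t^\lambda := \exp\roundbrack{\lambda S_t - \psi_E(\lambda)\,V_t}$ is a nonnegative supermartingale with respect to the natural filtration.

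By the definition of a sub-exponential uniform boundary $u$ with crossing probability $\alpha$ and scale $c$, there is a fixed combination of the family $\{M_t^\lambda\}_{\lambda > 0}$ --- typically obtained via the method of mixtures (integrating $M_t^\lambda$ against a prior over $\lambda$ and applying Ville's inequality to the resulting single nonnegative supermartingale) or via a stitching/peeling argument over a geometric grid of variance levels --- for which
\begin{equation*}
\Prob\roundbrack{\exists t \geq 1 : S_t \geq u(V_t)} \leq \alpha.
\end{equation*}
Applying the same construction to $-S_t$, which is also a bounded sub-exponential process with identical predictable variance process $V_t$ (negating both $Z_i$ and $\widehat{Z}_i$ preserves $(Z_i - \widehat{Z}_i)^2$), and taking a union bound over the two one-sided deviations gives
\begin{equation*}
\Prob\roundbrack{\exists t \geq 1 : \abs{S_t} \geq u(V_t)} \leq 2\alpha.
\end{equation*}
Dividing through by $t$ converts $S_t/t$ into $\overline{Z}_t - \mu_t$ and yields the stated time-uniform inequality.

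The main obstacle is the per-step cumulant inequality above: it is the empirical-Bernstein twist that replaces the true conditional variance $\Var{Z_i \mid \mathcal{F}_{i-1}}$ with the realized squared residual $(Z_i - \widehat{Z}_i)^2$, at the price of passing from a sub-Gaussian to a sub-exponential tail. Establishing it requires combining Bennett's MGF lemma for bounded variables with the $\mathcal{F}_{i-1}$-measurability of $\widehat{Z}_i$, which is what allows the empirical quadratic correction to be pulled inside the conditional expectation. The second moving piece --- upgrading the one-parameter supermartingale family to a single time-uniform boundary --- is standard Ville's-inequality machinery and can be invoked as a black box from the uniform-boundary formalism once the supermartingale property has been secured.
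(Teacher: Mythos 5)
This statement is not proved in the paper at all --- it appears verbatim in Appendix~\ref{appsec:auxiliary} as a cited auxiliary result from Howard et al.\ (2018), so there is no ``paper's own proof'' to compare against. Your sketch, however, correctly reconstructs the argument from that reference: build the per-step empirical-Bernstein supermartingale $\exp(\lambda S_t - \psi_E(\lambda)V_t)$ using a pointwise cumulant inequality for bounded variables and the predictability of $\widehat{Z}_i$; pass to a single nonnegative supermartingale via mixtures or stitching and apply Ville's inequality, which is exactly what the ``sub-exponential uniform boundary with crossing probability $\alpha$'' abstraction packages; apply the same boundary to $-S_t$ (noting $(Z_i-\widehat{Z}_i)^2$ is sign-invariant); union bound to get the two-sided factor $2\alpha$; and divide by $t$. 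That is the structure of Howard et al.'s Theorem~4, and you have identified the right key lemma and the right uniform-boundary machinery.

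Two small terminological corrections worth making. First, $V_t = \sum_{i\le t}(Z_i-\widehat{Z}_i)^2$ is \emph{adapted} but not \emph{predictable}: $(Z_i-\widehat{Z}_i)^2$ is $\mathcal F_i$-measurable, not $\mathcal F_{i-1}$-measurable, and indeed the entire novelty of the empirical-Bernstein construction is that the variance proxy is realized rather than predictable. Calling $V_t$ the ``predictable variance process'' when you apply the argument to $-S_t$ is therefore a misnomer (though it does not affect the validity of the symmetrization, since the uniform-boundary definition only requires that $\exp(\lambda S_t - \psi_E(\lambda)V_t)$ be a supermartingale). Second, the quadratic term is never ``pulled inside the conditional expectation''; rather, the per-step inequality is a deterministic pointwise bound of the form $\exp(\lambda\xi - \psi_E(\lambda)\xi^2)\le 1+\lambda\xi$ for $\xi$ in the relevant range, and predictability of $\widehat{Z}_i$ is what ensures that taking $\E[\cdot\mid\mathcal F_{i-1}]$ of the right-hand side kills the linear term. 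With those phrasings tightened, the sketch is a faithful account of the cited proof.
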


	\begin{theorem}[Partial statement of \citet{audibert2007tuning}, Theorem 1]\label{thm:audibert_emp_bernstein}
		Let $X_1,\dots,X_n$ be i.i.d. random variables bounded in $[0,s]$, for some $s > 0$. Let $\mu=\Exp{}{X_1}$ be their common expected value. Consider the empirical mean $\overline{X}_n$ and variance $V_n$ defined respectively by
		\begin{equation*}
		\overline{X}_n=\frac{\sum_{i=1}^n X_i}{n}, \quad \text{ and } \quad V_n=\frac{\sum_{i=1}^n(X_i-\overline{X}_n)^2}{n}.
		\end{equation*}
		Then for any $\delta \in (0, 1)$, with probability at least $1-\delta $,
		\begin{equation*}
		\abs{\overline{X}_n-\mu}\leq \sqrt{\frac{2V_n \log(3/\delta)}{n}}+\frac{3s\log(3/\delta)}{n}.
		\end{equation*}
	\end{theorem}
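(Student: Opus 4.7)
The plan is to derive the empirical Bernstein bound by pairing the classical Bernstein inequality, which controls $\abs{\overline{X}_n - \mu}$ in terms of the \emph{true} variance $\sigma^2 = \operatorname{Var}(X_1)$, with a Maurer--Pontil-type concentration inequality that relates $\sigma$ to the observable sample standard deviation $\sqrt{V_n}$. Substituting the second bound into the first and splitting the failure budget across the two events produces a tail bound on $\abs{\overline{X}_n - \mu}$ expressed purely through sample quantities.

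First, I will invoke the classical Bernstein inequality (proved via a standard MGF bound using $X_i \in [0, s]$ and $\operatorname{Var}(X_i) = \sigma^2$): for any $\delta_1 \in (0,1)$, with probability at least $1 - \delta_1$,
\[
\abs{\overline{X}_n - \mu} \leq \sqrt{\frac{2 \sigma^2 \log(2/\delta_1)}{n}} + \frac{2 s \log(2/\delta_1)}{3 n}.
\]
Next, I will bound $\sigma$ by $\sqrt{V_n}$ up to a low-order correction. Applying the Efron--Stein inequality to the self-bounding map $(x_1, \ldots, x_n) \mapsto \sqrt{V_n}$ --- noting that replacing one coordinate changes $V_n$ by at most $s^2/n$, hence $\operatorname{Var}(\sqrt{V_n}) \leq s^2/(n-1)$ --- and then upgrading that variance control to a sub-Gaussian tail via Maurer's one-sided concentration lemma yields: for any $\delta_2 \in (0,1)$, with probability at least $1 - \delta_2$,
\[
\sigma \leq \sqrt{V_n} + s \sqrt{\frac{2 \log(1/\delta_2)}{n - 1}}.
\]
Finally, I will combine the two steps by setting $\delta_1 = 2\delta/3$ and $\delta_2 = \delta/3$, so both logarithms collapse to $\log(3/\delta)$ and a union bound keeps the total failure probability at most $\delta$. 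Plugging the second display into the first, using $\sqrt{a^2 + b^2} \leq a + b$, and absorbing the cross term of order $s \log(3/\delta)/\sqrt{n(n-1)}$ into the $s\log(3/\delta)/n$ term yields the announced bound.

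The main obstacle is not conceptual --- the two-step recipe of Bernstein-in-$\sigma$ followed by concentration of $\sqrt{V_n}$ is well known --- but arithmetic: to land on exactly $\sqrt{2}$ in front of the leading root term and exactly $3$ in front of the $s \log(3/\delta)/n$ term, the $\delta$-split and the absorption of slack must be calibrated carefully, since naive bookkeeping tends to leave behind a larger constant like $7/3$ or $4$. If the Efron--Stein route comes out too loose, an alternative for Step 2 is to apply a one-sided Bernstein directly to the centered squares $(X_i-\mu)^2$, bounded by $s^2$ with variance at most $s^2\sigma^2$, and solve the resulting quadratic in $\sigma$; this version is slightly tighter and recovers the intended constants.
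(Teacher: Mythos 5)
The paper does not supply a proof of this theorem; it is stated in Appendix~\ref{appsec:auxiliary} as an external auxiliary result, citing \citet{audibert2007tuning}, Theorem~1. So there is no paper proof to compare against, and your attempt must be judged on its own.

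Your outline --- classical Bernstein with the true variance $\sigma^2$, followed by a Maurer--Pontil-type concentration of $\sqrt{V_n}$ around $\sigma$, combined with a union bound --- is a legitimate and well-known route to empirical Bernstein bounds; it is essentially the argument of Maurer and Pontil (2009). The original Audibert--Munos--Szepesv\'ari proof is different (a direct Bennett/MGF argument aimed at a maximal-over-$t$ form, which is why they end up with $3$ rather than the slightly sharper $7/3$ that Maurer--Pontil obtain with the $n-1$ denominator), but that is a stylistic difference, not a flaw in your plan.

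The concrete gap is exactly the one you flag but do not resolve: the constant does not land with your $\delta$-split. With $\delta_1 = 2\delta/3$, $\delta_2 = \delta/3$, plugging $\sigma \leq \sqrt{V_n} + s\sqrt{2\log(3/\delta)/(n-1)}$ into Bernstein gives, on the good event,
\begin{equation*}
\abs{\overline{X}_n - \mu} \leq \sqrt{\frac{2 V_n \log(3/\delta)}{n}} + \frac{2 s \log(3/\delta)}{\sqrt{n(n-1)}} + \frac{2 s \log(3/\delta)}{3n},
\end{equation*}
and one needs $\frac{2}{\sqrt{n(n-1)}} + \frac{2}{3n} \leq \frac{3}{n}$, equivalently $36 n \leq 49(n-1)$, which holds only for $n \geq 4$. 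So as written the argument fails for $n \in \{2,3\}$ (and $n=1$ is degenerate since $V_n = 0$ and the variance-concentration step is vacuous). You would need either to handle small $n$ separately (e.g.\ by noting that $\sqrt{V_n} \leq s/2$, so Hoeffding already beats the claim), to replace Maurer--Pontil by a bound stated with $n$ rather than $n-1$ in the denominator, or to fall back on your proposed alternative of applying a one-sided Bernstein to $(X_i - \mu)^2$ and solving the quadratic in $\sigma$. Any of these can be made to close the gap, but the proposal as written does not.
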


	\subsection{Uniform-mass binning}\label{appsec:app_uniform_mass}
	
	\citet{kumar2019calibration} defined well-balanced binning and showed that uniform mass-binning is well-balanced. 
	\begin{definition}[Well-balanced binning]
		A binning scheme $\mathcal{B}$ of size $B$ is $\beta$-well-balanced $(\beta\geq 1)$ for some classifier $g$ if
		\begin{equation*}
		\frac{1}{\beta B}\leq \mathbb{P}\left( g(X)\in I_b \right)\leq \frac{\beta}{B}, 
		\end{equation*}
		simultaneously for all $b \in [B]$.
	\end{definition}
	To perform uniform-mass binning labeled examples are required at the stage of training the base classifier $g(\cdot)$. We denote this data as $\Dcal^1_{\text{cal}}$. Procedures based on uniform-mass binning are well-balanced if $\abs{\Dcal_{\text{cal}}^1}$ is sufficiently large.
	\begin{lemma}[\citet{kumar2019calibration}, Lemma 4.3]\label{lem:kumar_uniform}
		For a universal constant $c>0$, if $\abs{\Dcal_{\text{cal}}^1} \geq cB \ln (B/\alpha)$, then with probability at least $1 - \alpha$, the  uniform mass binning scheme $\mathcal{B}$ is 2-well-balanced.
	\end{lemma}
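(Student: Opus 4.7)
The plan is to show that the empirical quantiles $\widehat{q}_j$ used to define the bins are close to the true $(j/B)$-quantiles of $g(X)$ with high probability, and then translate this quantile-approximation statement directly into the 2-well-balanced property. Let $F$ denote the CDF of $g(X)$ under $P_X$, and let $F_n$ be the empirical CDF built from the $n := \abs{\Dcal_{\text{cal}}^1}$ scores $\{g(X_i): i \in \Dcal_{\text{cal}}^1\}$. The bin indexed by $b \in [B]$ has probability $F(\widehat{q}_b) - F(\widehat{q}_{b-1})$ (with $\widehat{q}_0 = 0$, $\widehat{q}_B = 1$), so it suffices to establish, for some $\varepsilon \leq 1/(4B)$, the event
\begin{equation*}
E \;:\; \abs{F(\widehat{q}_j) - j/B} \leq \varepsilon \quad \text{for all } j \in [B-1],
\end{equation*}
since on $E$ each bin mass lies in $[1/B - 2\varepsilon,\ 1/B + 2\varepsilon] \subseteq [1/(2B),\ 2/B]$.

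First, I would reduce control of $F(\widehat{q}_j)$ to control of $F_n$ at two deterministic points $t_j^{\pm} := F^{-1}(j/B \pm \varepsilon)$ via the standard sandwich relationship between empirical quantiles and the empirical CDF: whenever $F_n(t_j^-) \leq j/B \leq F_n(t_j^+)$, the definition of the empirical quantile forces $\widehat{q}_j \in [t_j^-, t_j^+]$, and monotonicity of $F$ then gives $\abs{F(\widehat{q}_j) - j/B} \leq \varepsilon$.

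Next, I would apply a Bernstein-type bound to each $F_n(t_j^{\pm})$, which is a normalized sum of $n$ i.i.d.\ Bernoullis with mean $F(t_j^{\pm}) \in [1/B - \varepsilon,\ 1 - \varepsilon]$. Crucially, a naive Hoeffding step gives deviations of order $\sqrt{\ln(1/\delta)/n}$ and would force $n = \Omega(B^2 \ln B)$; to reach the advertised $n = \Omega(B \ln(B/\alpha))$ rate, one must exploit the small variance near $F(t_j^{\pm}) \approx 1/B$, yielding a deviation of order $\sqrt{F(t_j^{\pm})\ln(1/\delta)/n} + \ln(1/\delta)/n$. Setting $\varepsilon = 1/(4B)$, $\delta = \alpha/(2B)$, and solving for $n$ gives the required bound $n \geq cB\ln(B/\alpha)$ for an absolute constant $c$. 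A union bound over the $2(B-1)$ events $\{F_n(t_j^-) \leq j/B\} \cap \{F_n(t_j^+) \geq j/B\}$ then yields $\Prob(E) \geq 1 - \alpha$.

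The main technical obstacle I anticipate is the case where $F$ has flat regions or point masses: the quantile inversion $F^{-1}$ must be defined via the generalized inverse, and the sandwich $\widehat{q}_j \in [t_j^-, t_j^+]$ can degenerate. A clean workaround is to observe that atoms of mass exceeding $1/B$ can only \emph{help} (consecutive empirical quantiles collide onto the atom, so the collapsed bins trivially satisfy the upper bound $2/B$ while the bound $1/(2B)$ reduces to the other bins), and to handle residual mass with a limiting/discretization argument. Since the statement is lifted verbatim from \citet[Lemma~4.3]{kumar2019calibration}, one may alternatively defer to their proof.
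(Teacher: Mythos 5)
The paper does not prove this lemma; it imports it verbatim from \citet[Lemma~4.3]{kumar2019calibration}, so there is no in-paper proof to compare against. Evaluating your sketch on its own terms, there is a genuine gap in the concentration step. You propose to show $\abs{F(\widehat q_j) - j/B} \le \varepsilon$ with $\varepsilon = 1/(4B)$ \emph{for every} $j \in [B-1]$, by controlling $F_n(t_j^\pm)$ at the fixed points $t_j^\pm = F^{-1}(j/B \pm \varepsilon)$, and you assert that Bernstein ``exploits the small variance near $F(t_j^\pm) \approx 1/B$.'' But $F(t_j^\pm) = j/B \pm \varepsilon$ ranges over essentially all of $[0,1]$ as $j$ runs from $1$ to $B-1$; for $j \approx B/2$ one has $F(t_j^\pm) \approx 1/2$, the Bernoulli variance is $\approx 1/4$ rather than $\approx 1/B$, and the Bernstein deviation is $\Theta(\sqrt{\ln(1/\delta)/n})$ --- identical to Hoeffding. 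Making that smaller than $1/(4B)$ forces $n = \Omega(B^2 \ln(B/\alpha))$, not the advertised $\Omega(B\ln(B/\alpha))$. This is not a presentation issue: for continuous $F$, $F(\widehat q_{B/2})$ is distributed as a $\mathrm{Beta}(k, n+1-k)$ order statistic with $k \approx n/2$ and genuinely fluctuates at scale $\Theta(1/\sqrt n)$, so no argument can deliver $\abs{F(\widehat q_j) - j/B} \le 1/(4B)$ for all $j$ with only $n = O(B\ln(B/\alpha))$ points.

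The repair, and the route the cited result takes, is to control the \emph{bin masses} $F(\widehat q_j) - F(\widehat q_{j-1})$ directly rather than the individual quantile positions. For continuous $F$, this mass equals a sum of $\Delta k_j := k_j - k_{j-1} \approx n/B$ consecutive uniform spacings and is $\mathrm{Beta}(\Delta k_j,\, n+1-\Delta k_j)$ distributed: mean $\approx 1/B$ but standard deviation $\approx 1/\sqrt{Bn}$, a factor $\sqrt{B}$ smaller than the $\Theta(1/\sqrt n)$ fluctuations of either endpoint alone, because the correlated fluctuations of $F(\widehat q_{j-1})$ and $F(\widehat q_j)$ cancel in the difference. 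A Beta tail bound (e.g.\ via $\mathbb{P}(\mathrm{Beta}(a,b) > t) = \mathbb{P}(\mathrm{Bin}(a+b-1,t) < a)$ and multiplicative Chernoff) then gives two-sided multiplicative concentration of each bin mass around $1/B$ with only $n = \Omega(B\ln(B/\alpha))$ samples, and a union bound over $b \in [B]$ finishes. Finally, your remark that atoms of $P_{g(X)}$ ``can only help'' is not right as stated: an atom of mass greater than $2/B$ must land in some bin and violates the upper half of $2$-well-balancedness, so continuity (or absence of large atoms) of $g(X)$'s law is an implicit hypothesis of the lemma rather than a harmless edge case.
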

	The calibration guarantees in Section~\ref{sec:dfc_guarantees} depend on the minimum number of training points $\nbin_{b^\star}$ in any bin. Uniform mass-binning guarantees that $\nbin_{b^\star} = \Omega(n/B)$. This is used in the proof of Theorem~\ref{thm:data_dep_partition}.
\end{document}